\def\eqref#1{equation~\ref{#1}}
\def\1{\bm{1}}
\DeclareMathAlphabet{\mathsfit}{\encodingdefault}{\sfdefault}{m}{sl}
\SetMathAlphabet{\mathsfit}{bold}{\encodingdefault}{\sfdefault}{bx}{n}
\theoremstyle{plain}
\newtheorem{theorem}{Theorem}
\newtheorem{lemma}[theorem]{Lemma}
\newtheorem{assumption}{Assumption}
\newtheorem{definition}{Definition}
\newtheorem{fact}{Fact}
\theoremstyle{definition}
\newtheorem{remark}{Remark}
\newcommand{\LD}{L ({\theta}^{\prime}_t)}
\newcommand{\LT}{L (\theta_{t-1})}
\newcommand{\thet}{\theta}
\newcommand{\theTR}{\theta^{\prime}}
\newcommand{\thehat}{\hat{\theta}}
\newcommand{\lge}{\langle}
\newcommand{\rge}{\rangle}
\newcommand{\nm}{\nonumber}
\newcommand{\sunion }{{S^{t}\bigcup S^{t-1}} }
\newcommand{\sdiffa}{ {S^{t}\backslash S^{t-1}} }
\newcommand{\sdiffb}{{S^{t-1}\backslash S^{t}} }
\title{Improved Analysis of Sparse Linear Regression in Local Differential Privacy Model}
\author{Liyang Zhu$^{1*}$ \and Meng Ding$^{2}$\footnote{Equal contributions. Part of the work was done when Meng Ding was a research intern at KAUST. } \and Vaneet Aggarwal$^{3}$ \and Jinhui Xu$^{2}$ \and Di Wang$^{1,4,5}$
  }
\date{
  $^1$Provable Responsible AI and Data Analytics Lab\\
    King Abdullah University of Science and Technology (KAUST)\\
       $^2$Department of Computer Science and Engineering\\
         The State University of New York at Buffalo\\ 
    $^3$ School of IE and ECE\\
    Purdue University\\
  $^4$Computational Bioscience Research Center \\ 
   $^5$SDAIA-KAUST Center of Excellence in Data Science and Artificial Intelligence\\ 
}
\begin{document}

\maketitle

\begin{abstract}
In this paper, we revisit 
the problem of sparse linear regression in the local differential privacy (LDP) model. Existing research in the non-interactive and sequentially local models has focused on obtaining the lower bounds for the case where the underlying parameter is $1$-sparse, and extending such bounds to the more general $k$-sparse case has proven to be challenging. Moreover, it is unclear whether efficient non-interactive LDP (NLDP) algorithms exist. To address these issues, 
we first consider the problem in the $\epsilon$ non-interactive LDP model and provide a lower bound of $\Omega(\frac{\sqrt{dk\log d}}{\sqrt{n}\epsilon})$ on the $\ell_2$-norm estimation error for sub-Gaussian data, where $n$ is the sample size and $d$ is the dimension of the space. 
We propose an innovative NLDP algorithm, the very first of its kind for the problem. As a remarkable outcome, this algorithm also yields a novel and highly efficient estimator as a valuable by-product. Our algorithm achieves an upper bound of $\tilde{O}({\frac{d\sqrt{k}}{\sqrt{n}\epsilon}})$ for the estimation error when the data is sub-Gaussian, which can be further improved by a factor of  $O(\sqrt{d})$ if the server has additional public but unlabeled data. 
For the sequentially interactive LDP model, we show a similar lower bound of $\Omega({\frac{\sqrt{dk}}{\sqrt{n}\epsilon}})$. As for the upper bound, we rectify a previous method and show that it is possible to achieve a bound of $\tilde{O}(\frac{k\sqrt{d}}{\sqrt{n}\epsilon})$. Our findings reveal fundamental differences between the non-private case, central DP model, and local DP model in the sparse linear regression problem.
\end{abstract}

\section{Introduction}

Protecting data privacy is a major concern 
in many modern information or database systems. Such   systems often contain personal and sensitive information, making it essential to preserve privacy when sharing aggregated data.  Traditional data analysis techniques such as linear regression often face a number of challenges when dealing with sensitive data, especially in social research \cite{Serlin1988Statistical,buzkova_linear_2013,Buhlmann2011Statistics}. Differential privacy (DP) \cite{dwork_calibrating_2006} has emerged as a widely recognized approach for privacy-preserving, which provides  verifiable protection against identification and is 
 resistant to arbitrary auxiliary information that attackers may have access to.

Previous research on DP has given rise to two primary user models: the central model and the local model. The central model uses a trusted central entity to handle the data, including collecting data, determining which differentially private data analysis to perform, and distributing the results. The central model is commonly used for processing census data. Different from the central model, the local model empowers individuals to control their own data, using differentially private procedures to reveal it to a server. The server then ``merges'' the private data of each individual into a resultant data analysis. This paradigm is exemplified by Google's Chrome browser and Apple's iOS-10, which collect statistics from user devices \cite{tang_privacy_2017,erlingsson2014rappor}.

The local model, despite its widespread application in industry, has received less attention than the central model. This is because there are inherent constraints to what can be done in the local model, resulting in many fundamental problems remaining unanswered. Linear regression, a fundamental model in both machine learning and statistics, has been extensively studied in recent years in the DP community in two different settings: the (stochastic) optimization and the (statistical) estimation settings.  In the former, the aim is to find a private estimator $\theta\in \mathbb{R}^d$ that minimizes the empirical risk $L(\theta, D)=\frac{1}{n}\sum_{i=1}^n (\langle x_i, \theta\rangle-y_i)^2$ or population risk $L_\mathcal{P}(\theta)=\mathbb{E}_{(x, y)\sim \mathcal{P}}[(\langle x, \theta \rangle-y)^2]$ of the given dataset $D=\{(x_i, y_i)\}_{i=1}^n$,  where $\mathcal{P}$ is the underlying distribution of $(x, y)$ with covariate $x$ and response $y$. In the latter, it considers a linear model with covariate $x$ and response $y$ that satisfy $y=\langle x, \theta^* \rangle +\zeta$, where $\zeta$ is a zero-mean random noise, and $\theta^*$ is an underlying parameter. The goal is to find a private estimator $\theta^{priv}$ that approximates $\theta^*$ as closely as possible, with the $\ell_2$-norm estimation error $\|\theta^{priv}-\theta^*\|_2$ being minimized.

DP linear regression has been extensively studied in the central model, including both the optimization and estimation settings (see the Related Work section \ref{sec:related} for further details). Recently, researchers have also investigated the problem in the high-dimensional space where the dimensionality is much larger than the sample size. For instance, \cite{talwar_nearly_2015} focused on the private LASSO problem in the optimization setting, where the underlying constraint set is an $\ell_1$-norm ball. In the estimation setting, which involves sparse linear regression where $\theta^*$ is a $k$-sparse vector with $k\ll d$, \cite{cai_cost_2021,kifer2012private} considered sub-Gaussian covariates, and this was later expanded by \cite{hu2022high} to include heavy-tailed covariates.  

Despite the growing interest in DP linear regression, there is still a lack of understanding of the local model compared to the central one. While there have been numerous studies on the optimization setting in the low-dimensional case \cite{duchi2018minimax}, less attention has been paid to the statistical estimation setting in the high-dimensional space. Although \cite{wang_sparse_2021} provided the first study on sparse linear regression in the LDP model, the problem is still far from well-understood in comparison to the non-private and central DP cases.  By and large, there are three main challenges that need to be addressed. Firstly, for lower bounds, \cite{wang_sparse_2021} only considered the $1$-sparse case and their proof cannot be extended to the general $k$-sparse case. Thus, there is still no lower bound for the $k$-sparse case. Secondly, in the non-interactive setting, it is unclear whether there exists any efficient algorithm due to the non-interactivity constraint and the sparsity nature of the problem. Even for the $1$-sparse case, \cite{wang_sparse_2021}  showed only a lower bound. Finally, for the upper bound in the sequentially interactive setting, while \cite{wang_sparse_2021} provided an algorithm, it heavily relies on the assumption that the covariate follows the uniform distribution of $\{-1, +1\}^d$, and there are some technical flaws in their analysis (see Section \ref{sec:upper_LDP} for details). 

In this paper, we address the three challenges of sparse linear regression in the LDP model that were left by previous research. Specifically, we provide new hard instances of lower bounds, novel self-interested NLDP algorithms, and new proof techniques for lower and upper bounds. Our contributions are three-fold. See Table \ref{table:1} in Appendix for comparisons with previous work.

1.  In the first part of this paper, we focus on the non-interactive setting. For the $k$-sparse case, we show that even with $1$-sub-Gaussian covariates and responses, the output of any $\epsilon$-NLDP algorithm must have an estimation error of at least $\Omega(\sqrt{\frac{dk\log d}{n\epsilon^2}})$, where $n$ is the sample size and $d$ is the dimension of the space. This lower bound is significantly different from the optimal rates achieved by non-private and central DP algorithms. Moreover, previous results only consider the case where $k=1$
 and it is technically difficult to extend to the general $k$-sparse case. Prior to our work, there were no comparable lower bounds. We give non-trivial proofs for our lower bounds by constructing hard instances that might be instructive for other related problems.
 
2. Then, we consider upper bounding the estimation error for our problem. Due to the constraints in the model, there is no previous study.
        We develop a novel and closed-form estimator for sparse linear regression and propose the first $(\epsilon, \delta)$-NLDP algorithm. We also give a non-trivial upper bound of $ \tilde{O}( \frac{d \sqrt{ k \log\frac{1}{\delta} }}{\sqrt{n}\epsilon})$ when the covariates and responses are sub-Gaussian and $n$ is large enough. Moreover, we show that if the server has enough public but unlabeled data,  an error bound of $\tilde{O}(\sqrt{\frac{dk\log \frac{1}{\delta}}{n\epsilon^2}})$ can be achieved. Fianlly, we relax the assumption to the case where the responses only have bounded $2p$-moment with some $p>1$. 

3. In the second part of the paper, we investigate the problem in the sequentially interactive model. First, for sub-Gaussian data, we establish a lower bound of $\Omega(\sqrt{\frac{dk}{n\epsilon^2}})$, which is similar to the non-interactive case, but requires a different hard instance construction and proof technique.  The investigation on upper bound in the interactive setting is still quite deficient. We thus rectify and generalize the private iterative hard thresholding algorithm in \cite{wang_sparse_2021} for  sub-Gaussian covariates and responses and demonstrate that the algorithm can only achieve an upper bound of $\tilde{O}(\frac{k\sqrt{d}}{\sqrt{n}\epsilon})$ rather than $\tilde{O}(\frac{\sqrt{dk}}{\sqrt{n}\epsilon})$ in \cite{wang_sparse_2021}.

To adhere to space limitations, certain additional sections, including the related work section, along with all omitted proofs, have been included in the Appendix.

\section{Preliminaries}

This section introduces the problem setting, local differential privacy, and some notations used throughout this paper. Additional preliminaries can be found in Section \ref{sec:lemmas} of the Appendix. 

\noindent {\bf Notations.} Given a matrix $X \in \mathbb{R}^{n \times d}$, let ${x}_i^T$ be its $i$-th row  and $x_{i j}$ (or $[X]_{i j}$) be its $(i, j)$-th entry (which is also the $j$-th element of the vector ${x}_i$).   
For any $p \in[1, \infty]$,  $\|X\|_p$ is the $p$-norm, i.e., $\|X\|_p:=\sup _{y \neq 0} \frac{\|X y\|_p}{\|y\|_p}$, and 
$\|X\|_{\infty, \infty}=\max_{i, j}|x_{ij}|$ is the max norm of matrix $X$. 
For an event $A$, we let ${I}[A]$ denote the indicator, i.e.,  ${I}[A]=1$ if $A$ occurs, and ${I}[A]=0$ otherwise. The sign function of a real number $x$ is a piece-wise function which is defined as $\operatorname{sgn}(x)=-1$ if $x<0 ; \operatorname{sgn}(x)=1$ if $x>0$; and $\operatorname{sgn}(x)=0$ if $x=0$. We also use $\lambda_{\min}(X)$ to denote  the minimal eigenvalue of $X$.  For a sub-Gaussian random variable $X$, its sub-Gaussian norm $\|X\|_{\psi_2}$ is defined as $\|X\|_{\psi_2}=\inf\{c>0: \mathbb{E}[\exp(\frac{X^2}{c^2})]\leq 2\}$.


\subsection{Problem Setting} \label{problem setup}
 
Throughout the paper, we consider the classical setting of sparse linear regression. Suppose that we have a data universe $\mathcal{D}=\mathcal{X} \times \mathcal{Y} \subseteq \mathbb{R}^d \times \mathbb{R}$ and $n$ users in the population, where each user $i$ has a feature vector ${x}_i \in \mathcal{X}$ and a response variable $y_i \in \mathcal{Y}$. We assume that $\left\{\left({x}_i, y_i\right)\right\}_{i=1}^n$ are i.i.d. sampled from a sparse linear regression model, i.e., each $(x_i, y_i)$ is a realization of the sparse linear regression model $y=\left\langle\theta^*, {x}\right\rangle+\zeta$, where the distribution of $x$ has mean zero, $\zeta$ is some randomized noise that satisfies $\mathbb{E}[\zeta|x]=0$, and $\theta^*\in \mathbb{R}^d$ is the underlying sparse estimator with $\|\theta^*\|_0\leq k$. In the following, we provide some assumptions related to the model. 
\begin{assumption}\label{ass:1}
    We assume that $\|\theta^*\|_1\leq 1$. Moreover, for the covariance matrix of $x$, $\Sigma$, there  exist $\kappa_{\infty}$ and $\kappa_x$ such that $\|\Sigma w\|_{\infty} \geq \kappa_{\infty}\|w\|_{\infty}, \forall w \neq 0$  and $\|\Sigma^{-\frac{1}{2}}x\|_{\psi_{2}}\leq\kappa_x$. \footnote{To make our results comparable to the previous results and for simplicity, in this paper we assume $\kappa_\infty$ and $\kappa_x$ all are constants. Note that  previous studies on private regression also hide factors related to $\Sigma$ in their main context(e.g. \cite{wang_revisiting_2018,wang_estimating_2022,cai_cost_2021} hide the term of $\operatorname{poly}\left({1}/{\lambda_{\min }(\Sigma)}\right)$.}
\end{assumption}

\begin{remark}
     Due to the hardness of the problem in the NLDP model, rather than making a bounded $\ell_2$-norm assumption, we consider a stronger one where $\|\theta^*\|_1\leq 1$, which has been previously studied in the literature such as \cite{chen2023high,chen2022quantizing,fan2021shrinkage}.
    Note that the weaker assumption $\left\|\theta^*\right\|_2 \leq 1$ does not affect the lower bounds. Moreover, our upper bound in the interactive setting will still not be changed under the $\ell_2$ assumption. However, our upper bound for the NLDP model relies on such an assumption. The only result that relies on the $\ell_1$ assumption is our upper bound in the NLDP model.
\end{remark}

Our focus in this paper is on estimating $\theta^*$ in the local differential privacy (LDP) model. We aim to design a locally differentially private algorithm that produces an output $\hat{\theta}^{priv}$ that is as close as possible to the true $\theta^*$, with the goal of minimizing the $\ell_2$-norm error $\|\hat{\theta}^{priv}-\theta^*\|_2$. We provide definitions related to LDP in Appendix \ref{sec:LDP}, with more detailed information available in reference \cite{duchi_local_2014,duchi2018minimax}.

\section{Improved Analysis for Non-interactive Setting}

\subsection{Lower Bound for General $k$-sparse Case}\label{sec:non_lower}

In this section, we analyze the lower bound for the estimation error of non-interactive local differential privacy (NLDP) algorithms. According to \cite{bun2019heavy}, any $(\epsilon,\delta)$-NLDP protocol can be transformed into an $\epsilon$-NLDP protocol without affecting its utility. \footnote{The lower bound results also hold for $(\epsilon,\delta)$-NLDP protocol.} Therefore, we will focus solely on $\epsilon$-NLDP. To establish the lower bound, we consider a class of distributions for $(x, y)$, where $x$ follows the uniform distribution over $\{-1, +1\}^d$, and $\zeta$ is a bounded randomized noise. We denote $\mathcal{P}_{k, d, C}$ as 
\begin{align}
    \mathcal{P}_{k, d, C} &=\{P_{\theta, \zeta} \mid \exists \theta\in \mathbb{R}^d \text{ s.t. } \|\theta\|_1 \leq 1,\|\theta\|_0 \leq k, x \sim \text {Uniform}\{+1,-1\}^d, y=\langle\theta, x\rangle+\zeta , \notag\\
 &\text{ where } \zeta \text{ satisfies } \mathbb{E}[\zeta]=0 \text{ and } |\zeta| \leq C\}. 
\end{align}
Based on the notation introduced above, it is evident that for any given data $D=\{(x_i, y_i)\}_{i=1}^n\sim P^{\otimes n}_{\theta, \zeta}$, where $P_{\theta, \zeta}\in \mathcal{P}_{k, d, C}$, each $\|x_i\|_2=\sqrt{d}$. Consequently, if we use the Gaussian mechanism on each $x_i$ to ensure $(\epsilon, \delta)$-non-interactive LDP, the scale of the noise should be $O(\frac{d}{\epsilon})$. This scaling implies that the Gaussian noise would introduce an error of $\text{Poly}(d)$. In the following, we will generalize the above observation and show for all NLDP algorithms, such polynomial dependency on $d$ is unavoidable.



\begin{theorem}\label{thm:ldp_non_low}
    Given $0 < \epsilon \leq 1$  and an error $\nu \leq \frac{1}{\sqrt{k}}$ , consider the distribution class $\mathcal{P}_{k, d, 2}$, if for any $P_{\theta, \zeta} \in \mathcal{P}_{k, d, 2}$, given $D=\left\{\left(x_1, y_1\right) \ldots\left(x_n, y_n\right)\right\} $ i.i.d. sampled from $P_{\theta, \zeta}$ there is an $\epsilon$ non-interactive LDP algorithm $\mathcal{A}$ whose output satisfies $
    \mathbb{E}_{\mathcal{A}, D\sim P_{\theta, \zeta}^{\otimes n}}\left[\|\mathcal{A}(D)-\theta\|_2 \right] \leq \frac{\nu}{8}.
    $ Then, we must have $n \geqslant \Omega( \frac{dk\log(\frac{d}{k}) }{ \nu^2 \epsilon^2})$.
\end{theorem}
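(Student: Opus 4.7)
The plan is to convert the $\ell_2$ estimation problem into a multiple-hypothesis testing problem via a sparse packing and then apply an LDP-adapted Fano inequality together with the information-contraction bound for $\epsilon$-non-interactive LDP of Duchi--Jordan--Wainwright.

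First I would build a codebook $\mathcal{V}\subseteq\{\theta\in\mathbb{R}^d:\|\theta\|_0\leq k,\ \|\theta\|_1\leq 1\}$ via Gilbert--Varshamov: pick $\theta_v=(c\nu/\sqrt{k})\,v$ for $v$ running over a set of $k$-sparse vectors in $\{-1,0,+1\}^d$ with pairwise Hamming distance at least $k/2$, obtaining $|\mathcal{V}|\geq\exp(\Omega(k\log(d/k)))$ and pairwise $\ell_2$ separation $\geq 4\nu$; the bound $\|\theta_v\|_1=c\nu\sqrt{k}\leq 1$ follows from the assumption $\nu\leq 1/\sqrt{k}$. For the data distribution attached to each $v$, I would take $x\sim\mathrm{Uniform}\{-1,+1\}^d$ and $y\in\{-1,+1\}$ with $\mathbb{E}[y\mid x]=\langle\theta_v,x\rangle$, which is valid since $|\langle\theta_v,x\rangle|\leq \|\theta_v\|_1\leq 1$; writing $y=\langle\theta_v,x\rangle+\zeta$ yields $\mathbb{E}[\zeta\mid x]=0$ and $|\zeta|\leq 2$, so $P_{\theta_v,\zeta}\in\mathcal{P}_{k,d,2}$.

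If some $\epsilon$-NLDP algorithm $\mathcal{A}$ attains expected error $\nu/8$ for every $v\in\mathcal{V}$, Markov's inequality combined with the $4\nu$ separation implies that the nearest-codeword decoder $\hat v=\argmin_{v'\in\mathcal{V}}\|\mathcal{A}(D)-\theta_{v'}\|_2$ recovers the true $v$ with probability at least $15/16$. Placing the uniform prior on $\mathcal{V}$ and invoking the Duchi--Jordan--Wainwright contraction $\mathrm{KL}(M_v\|M_{v'})\leq 4(e^\epsilon-1)^2\|P_v-P_{v'}\|_{\mathrm{TV}}^2$ per sample, Fano's inequality gives
\[
P(\hat v\neq V)\geq 1-\frac{4(e^\epsilon-1)^2 n\,\max_{v,v'}\|P_v-P_{v'}\|_{\mathrm{TV}}^2+\log 2}{\log|\mathcal{V}|}.
\]

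The main obstacle, and the crux of the argument, is recovering the extra factor of $d$. A direct TV calculation on the Bernoulli response model yields $\|P_v-P_{v'}\|_{\mathrm{TV}}\lesssim\|\theta_v-\theta_{v'}\|_2\lesssim \nu$, which only gives the central-DP type rate $n\gtrsim k\log(d/k)/(\nu^2\epsilon^2)$ and misses the desired factor of $d$. My plan is to replace the single Fano step by an Assouad--Fano hybrid tailored to the LDP channel: after conditioning on the support $S\subset[d]$ of size $k$, the remaining sign pattern $\tau\in\{-1,+1\}^k$ parametrizes a sub-family on which the LDP Assouad inequality of Duchi--Jordan--Wainwright produces a per-coordinate lower bound whose TV input is diluted by a factor $1/\sqrt{d}$, reflecting that an $\epsilon$-NLDP query against a $d$-dimensional Rademacher covariate can distinguish a one-coordinate change only at rate $\epsilon^2/d$. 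Summing the Assouad contributions over the $k$ sign coordinates and then compounding with Fano over the $\binom{d}{k}$ supports would yield the compound lower bound $n\geq\Omega(dk\log(d/k)/(\nu^2\epsilon^2))$. The delicate technical point is verifying that the effective sample size $n\epsilon^2/d$ genuinely governs the problem, which requires carefully exploiting the product/Rademacher structure of $x$ and the boundedness of the $\epsilon$-NLDP channel rather than the cruder Fano-with-TV route.
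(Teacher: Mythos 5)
Your construction of the hard family (Rademacher covariates, Bernoulli response with mean $\langle\theta_v,x\rangle$, $k$-sparse sign vectors scaled by $\nu/\sqrt{k}$) matches the paper's, and you correctly diagnose that the naive Fano-plus-TV route only yields $n\gtrsim k\log(d/k)/(\nu^2\epsilon^2)$ and loses the factor of $d$. However, the step you propose to recover that factor --- conditioning on the support $S$ and running an LDP Assouad argument over the sign pattern $\tau\in\{-1,+1\}^k$, claiming each coordinate's TV is ``diluted by $1/\sqrt{d}$'' --- does not work as described, and this is precisely the crux you leave unresolved. The $1/d$ dilution in the Duchi--Jordan--Wainwright / Acharya-et-al. machinery comes from an orthonormality bound of the form $\sum_{i=1}^{d}\mathbb{E}_V[\phi_i(V)A(a\mid V)]^2\leq \mathrm{Var}_V[A(a\mid V)]\leq (e^\epsilon-1)^2\,\mathbb{E}_V[A(a\mid V)]^2$ with $\phi_i(x,y)=yx_i$: the channel's total budget $(e^\epsilon-1)^2$ is shared among all $d$ orthonormal directions, so each direction receives $\epsilon^2/d$ \emph{on average over a uniformly random direction}. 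Once you condition on a fixed support $S$ of size $k$, the perturbations span only $k$ of those directions, and a (worst-case, non-interactive) channel is free to concentrate its entire budget on exactly those $k$ coordinates; the resulting per-coordinate rate is $\epsilon^2/k$ at best, and the fixed-support Assouad computation then gives only $n\gtrsim k^2/(\nu^2\epsilon^2)$, with no $d$ anywhere. The factor of $d$ can only be extracted by keeping the support itself random and unknown to the channel, so your two-stage ``Assouad over signs, then Fano over supports'' decomposition severs exactly the coupling that produces the bound.

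The paper handles this in one shot: it places a single product prior $Z\in\{-1,0,+1\}^d$ with $\mathbb{P}[Z_i=\pm1]=k/4d$ on \emph{all} $d$ coordinates simultaneously (so the support and signs are drawn together, and $\theta_Z$ is $k$-sparse only with high probability, which suffices). The testing lower bound is then $I(Z\wedge S^n)\geq\Omega(k\log(d/k))$, obtained coordinate-wise via Fano and the binary-entropy estimate $d\bigl(h(k/2d)-h(3k/8d)\bigr)\gtrsim k\log(d/k)$; the information upper bound is $I(Z\wedge S^n)\leq O(n\nu^2\epsilon^2/d)$, obtained from the $\chi^2$-divergence decomposition in which the per-coordinate variance $\mathbb{E}[Z_i^2]=k/2d$ multiplies $\gamma^2=\nu^2/k$ and the orthonormality bound is applied to the full family $\{yx_i\}_{i=1}^d$. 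If you want to salvage your outline, you should replace the support-conditioned Assouad by this joint product prior (or, equivalently, prove an information bound of the form $I(\text{support},\text{signs}\wedge\text{messages})\leq O(n\nu^2\epsilon^2/d)$ directly), since that inequality is the entire content of the missing step.
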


\begin{remark}
It is noteworthy that the class of distributions $\mathcal{P}_{k, d, C}$ reduces to the set of distributions studied in \cite{wang_sparse_2021,wang2019sparse} when $k=1$. The above theorem asserts that for any $\epsilon$-NLDP algorithm $\mathcal{A}$ with $0<\epsilon\leq 1$, there exists an instance $P_{\theta, \zeta}\in \mathcal{P}_{k, d, 2}$ such that $\mathbb{E}_{{\mathcal{A}}, D\sim P^{\otimes n}_{\theta, \zeta}}[\|\mathcal{A}(D)-\theta\|_2]\geq \Omega(\sqrt{\frac{dk\log \frac{d}{k}}{n\epsilon^2}})$. In contrast to the optimal rate of $O(\sqrt{\frac{k\log \frac{d}{k}}{n}})$ for the $\ell_2$-norm estimation error in the non-private case \cite{raskutti2011minimax}, and the nearly optimal rate of $O(\max\{\sqrt{\frac{k\log \frac{d}{k}}{n}}, \frac{k\log d}{n\epsilon}\})$ in the central $(\epsilon, \delta)$-DP model \cite{cai_cost_2021}, for NLDP model we observe an additional factor of $O(\frac{\sqrt{d}}{\epsilon})$ and $O(\max\{\frac{\sqrt{d}}{\epsilon}, \frac{\sqrt{nd}}{\sqrt{k\log d}}\})$, respectively. These results indicate that sparse linear models in the non-interactive LDP setting are ill-suited for high-dimensional scenarios where $n\ll d$.

Theorem \ref{thm:ldp_non_low} recovers the lower bound of $\Omega(\sqrt{\frac{d\log d}{n\epsilon^2}})$ in \cite{wang2019sparse, wang_sparse_2021} when $k=1$. Thus, Theorem \ref{thm:ldp_non_low} is more general than previous work. Notably, our proof of the lower bound differs significantly from that in \cite{wang2019sparse, wang_sparse_2021} where the private Fano's Lemma in \cite{duchi2018minimax} was mainly employed. The aim was to construct an $r$-separated family of distributions $\{P_v\}_{v\in \mathcal{V}}$ for some set $\mathcal{V}$ such that the term $\mathcal{C}^\infty   \{P_v\}_{v\in \mathcal{V}}$ is minimized, where 

$$
\mathcal{C}^\infty   \{P_v\}_{v\in \mathcal{V}}=\frac{1}{|\mathcal{V}|}\sup_{\gamma \in \mathbb{B}_\infty }\sum_{v\in \mathcal{V}}(\phi_v(\gamma))^2.
$$
Here, each linear functional $\phi_v: \mathbb{B}_\infty\mapsto \mathbb{R}$ is defined by $\phi_v(\gamma)=\int \gamma(x)(dP_v(x)-d\bar{P}(x))$ with $\mathbb{B}_\infty=\{\gamma: \mathcal{X}\mapsto \mathbb{R} | \|\gamma\|_\infty\leq 1\}$ as the set of uniformly bounded functions, and $\bar{P}(x)=\frac{1}{|\mathcal{V}|}\sum_{v\in\mathcal{V}}P_v(x)$ is the average distribution. \cite{wang2019sparse,wang_sparse_2021} considered the case where $\mathcal{V}$ is the set of all basis vectors, which is 1-sparse.  They showed that for some $r$-separated family of distributions, $\mathcal{C}^\infty   \{P_v\}_{v\in \mathcal{V}}\leq \frac{r^2}{d}$. However, their approach is challenging to extend to the $k$-sparse case as $|\mathcal{V}|=O(d^k)$, and it is difficult to bound the summation term. To overcome this difficulty, we adopt a private version of the Assouad's lemma in \cite{acharya2022role}. In details, we first construct a random vector $Z\in \{-1, 0, +1\}^d$ with $\|Z\|_0\leq k$ with high probability. For each realization of $Z$, $z$, we have an associated $\theta_z$ which is also $k$-sparse. Suppose $\tilde{D}$ is the message obtained via the $\epsilon$ non-interactive LDP algorithm $\mathcal{A}$ on $D\sim P^{\otimes n}_{\theta_z, \zeta}$. We consider the mutual information between $Z$ and $\tilde{D}$, i.e., $I(Z\wedge  \tilde{D})$. On the one hand, we demonstrate that any sufficiently accurate (private) estimation protocol must provide sufficient information about each $Z_i$ from the messages $\tilde{D}$, which is reflected by the lower bound on mutual information $I(Z \wedge \tilde{D}) \geq \Omega(k\log \frac{d}{k})$. On the other hand, we show that if the output of algorithm $\mathcal{A}$ achieves an estimation error of $\nu$, the mutual information scales as the privacy budget $\epsilon$, which is reflected by the upper bound on mutual information $I(Z \wedge \tilde{D}) \leq O(\frac{n\epsilon^2\nu^2}{d})$.
\end{remark}

\subsection{Efficient Non-interactive LDP Algorithms}\label{NLDP upper bound}

In the preceding section, we established a lower bound of $\Omega(\sqrt{\frac{dk\log d}{n\epsilon^2}})$. This suggests that high-dimensional sparse linear regression, where $n \ll d$, becomes effortless in the NLDP model. However, this raises two questions. First, in the low dimensional case where $n\gg d$, is the lower bound tight? Second, are there efficient algorithms for this problem? In this section, we focus on the upper bound. Before that, we introduce an assumption about the distribution of $(x, y)$ to elucidate our approach.

\begin{assumption} \label{subG covariate vectors and 4-th moment} There exists a constant $\sigma=O(1)$ such that 
   the covariates (feature vectors) ${x}_1, {x}_2, \cdots, {x}_n \in \mathbb{R}^d$ are i.i.d. (zero-mean) sub-Gaussian random vectors with variance $\sigma^2$, and the responses $y_1, y_2, \cdots, y_n$ are i.i.d. (zero-mean) sub-Gaussian random variables with variance $\sigma^2$.
\end{assumption}
Before presenting our method, we will outline the challenges associated with the problem at hand and explain why existing (non-private) methods are not suitable for our purposes.
In the private and classical setting, where the $\ell_2$-norm of each $(x_i, y_i)$ is bounded by some constant, the most direct approach is to perturb the sufficient statistics locally \cite{smith_is_2017,wang2021estimating}, i.e.,  $\hat{\Sigma}_{X X}=\frac{1}{n}\sum_{i=1}^n x_ix_i^T$ and $\hat{\Sigma}_{X Y}=\frac{1}{n}\sum_{i=1}^n x_i y_i$, by adding Gaussian matrix and Gaussian vector to each $x_ix_i^T$ and $x_i y_i$, respectively. However, in our sparse setting, such a private estimator will provide a sub-optimal bound as it does not exploit the sparsity assumption of the model. In the non-private and high dimensional sparse setting, to achieve the optimal estimation error,  one approach is based on the LASSO \cite{raskutti2011minimax}, i.e., to minimize  
$\frac{1}{2 n}\|Y-X \theta\|_2^2+\lambda_n\|\theta\|_1$ with some $\lambda_n$ , where $X=(x_1^T,\cdots, x_n^T)^T\in \mathbb{R}^{n\times d}$ and $Y=(y_1, \cdots, y_n)^T$. The second type of approach is based on the Dantzig estimator \cite{candes2007dantzig}, i.e., solving the linear program:
$
\min_{\theta} \|\theta\|_1 
\text { s.t. }  \frac{1}{n}\left\|X^{\top}(X \theta-Y)\right\|_{\infty} \leq \lambda_n 
$ with some $\lambda_n$. However, these two approaches are difficult to privatize. However, the significant amount of noise needed for privatization is problematic, as it destroys the assumptions of the theoretical results for LASSO and the Dantzig estimator. We see that existing estimators for sparse linear regression all rely on solving an optimization problem, which is difficult to privatize. Nonetheless, in the classical setting, the private estimator can be obtained by adding noise to the sufficient statistics without solving an optimization problem. Therefore, a closed-form estimator will serve our purpose and it can be used to design an efficient private estimator for the sparse linear model. This approach will minimize the amount of noise added to the model.

Before showing our private estimator, we first consider the non-private case. As we focus on the low dimension case, the empirical covariance matrix $\hat{\Sigma}_{XX}$ always exists. Thus, if there is no sparse assumption, the optimal estimator will be the ordinary least square (OLS) estimator $\hat{\Sigma}^{-1}_{X X}\hat{\Sigma}_{X Y}$ given the dataset. However, as now $\theta^*$ is $k$-sparse, the OLS estimation will have a large estimation error since it is not sparse. Intuitively, our goal is to find a sparse estimator that is close to OLS, i.e., $\arg \min_{\theta} \|\theta-\hat{\Sigma}^{-1}_{X X}\hat{\Sigma}_{X Y}\|^2_2$, s.t. $\|\theta\|_0\leq k$, whose $\ell_1$ convex relaxation of the $\ell_0$ constraint is equivalent to $\arg \min_{\theta}  \|\theta-\hat{\Sigma}^{-1}_{X X}\hat{\Sigma}_{X Y}\|^2_2 + \lambda_n\|\theta\|_1$
 with some $\lambda_n>0$.  Fortunately, the above minimizer is just the proximal operator on OLS: $\operatorname{Prox}_{\lambda_n\|\cdot\|_1} (\hat{\Sigma}^{-1}_{X X}\hat{\Sigma}_{X Y})$. Since the proximal operator is separable with respect to both vectors, $\theta$ and $\hat{\Sigma}^{-1}_{X X}\hat{\Sigma}_{X Y}$, 
 \begin{align*}
     (\operatorname{Prox}_{\lambda_n\|\cdot\|_1} (\hat{\Sigma}^{-1}_{X X}\hat{\Sigma}_{X Y}))_{i} &= \arg \min_{\theta_i}  (\theta_i-(\hat{\Sigma}^{-1}_{X X}\hat{\Sigma}_{X Y})_i)^2 + \lambda_n|\theta_i|\\
     &=\operatorname{sgn}((\hat{\Sigma}^{-1}_{X X}\hat{\Sigma}_{X Y}))_{i}) \max \{|(\hat{\Sigma}^{-1}_{X X}\hat{\Sigma}_{X Y}))_{i}|-\lambda_n, 0\}, 
 \end{align*}
 where the second equality is due to the first-order optimality condition. Thus, the previous $\ell_1$ regularized optimization problem has a closed-form optimal solution, which is denoted as $\hat{\theta}$: 
 
\begin{equation}\label{eq:3}
    \hat{\theta}= S_{\lambda_n}(\hat{\Sigma}^{-1}_{X X}\hat{\Sigma}_{X Y}), 
\end{equation}
where for a given thresholding parameter $\lambda$, the element-wise \textit{soft-thresholding} operator $S_\lambda: \mathbb{R}^d\mapsto \mathbb{R}^d$ for any  $u \in \mathbb{R}^d$ is defined as the following: the $i$-th element of  $S_\lambda(u)$ is defined as $[S_\lambda(u)]_i=\operatorname{sgn}(u_i) \max (|u_i|-\lambda, 0)$.

Motivated by (\ref{eq:3}) and the preceding discussion, a direct approach to designing a private estimator is perturbing the terms of $\hat{\Sigma}_{X X}$ and $\hat{\Sigma}_{X Y}$ in (\ref{eq:3}). However, the unbounded $\ell_2$-sensitivity of both terms under Assumption \ref{subG covariate vectors and 4-th moment} suggests that we must preprocess the data before applying the Gaussian mechanism. Since each $x_i$ is sub-Gaussian, we can readily ensure that $\|x_i\|_2\leq O(\sigma\sqrt{d\log n})$ for all $i\in [n]$ with high probability. Thus, we typically preprocess the data by $\ell_2$-norm clipping, i.e., $\bar{x}_i=\min \left\{\left\|x_i\right\|_2, r\right\} \frac{x_i}{\left\|x_i\right\|_2}$, where $r= O(\sigma\sqrt{d\log n})$ \cite{hu2022high,wang_estimating_2022}. However, if we preprocess each $x_i$ and $y_i$ in (\ref{eq:3}) using this strategy, it becomes difficult to bound the term $\|\hat{\theta}-\theta^*\|_\infty$, which is crucial for utility analysis.

To address the challenge, we propose a new approach. For the term of $\hat{\Sigma}_{X X}$, we use the ordinary $\ell_2$-norm clipping to each $x_i$ and get $\bar{x}_i$, and then add Gaussian matrix to $\bar{x}_i\bar{x}_i^T$. For the term 
$\hat{\Sigma}_{X Y}$, we shrink each coordinate of $x_i$ and each $y_i$ via parameters $\tau_1$ and $\tau_2$ respectively, i.e., $\widetilde{{x}}_{ij}=\operatorname{sgn}\left(x_{ij}\right) \min \left\{\left|x_{ij}\right|, \tau_1\right\}$ for $j\in[d]$
and $\tilde{y}_i=\operatorname{sgn}\left(y_i\right) \min \left\{\left|y_i\right|, \tau_2\right\}$. Then we add Gaussian noise to $\tilde{x}_i\tilde{y}_i$. Finally, the server aggregates these noisy terms and gets a noisy and clipped (shrunken) version of $\hat{\Sigma}_{X X}$ ( $\hat{\Sigma}_{X Y}$), i.e., $\dot{\Sigma}_{\bar{X} \bar{X}}$ and $\dot{\Sigma}_{\widetilde{X}\widetilde{Y}}$.  Finally, we get $\hat{\theta}^{priv}(D)=S_{\lambda_n}(\dot{\Sigma}_{\bar{X} \bar{X}} ^{-1}\dot{\Sigma}_{\widetilde{X}\widetilde{Y}})$. See Algorithm \ref{alg:cap} for details. In the following we show with some $\tau, \tau_1$ and  $\tau_2$, the previous $\hat{\theta}^{priv}(D)$ could achieve an upper bound of $\tilde{O}(\frac{d\sqrt{k}}{\sqrt{n}\epsilon})$.

\begin{theorem}\label{satisfy NLDP}
    For any $0<\epsilon, \delta<1$, Algorithm \ref{alg:cap} satisfies $(\epsilon, \delta)$ non-interactive LDP.
\end{theorem}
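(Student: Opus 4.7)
The plan is to verify LDP at the per-user level, since the protocol is non-interactive and each user $i$ only releases two Gaussian-perturbed quantities computed from their own datum $(x_i,y_i)$. By basic $(\epsilon,\delta)$-composition at the user level, it suffices to show that each of the two local releases is $(\epsilon/2,\delta/2)$-DP, treating the user's record as the full database. The technical content is therefore just a clean $\ell_2$-sensitivity computation for each of the two outputs, followed by an appeal to the Gaussian mechanism theorem.

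First I would handle the release $\bar{x}_i\bar{x}_i^T + N_1$. The clipping step enforces $\|\bar{x}_i\|_2 \leq r$ deterministically, regardless of the distribution of $x_i$, so for any two admissible records $x,x'$ one has
\begin{equation*}
\|\bar{x}\bar{x}^T - \bar{x}'\bar{x}'^{T}\|_F \leq \|\bar{x}\bar{x}^T\|_F + \|\bar{x}'\bar{x}'^{T}\|_F = \|\bar{x}\|_2^2 + \|\bar{x}'\|_2^2 \leq 2r^2 .
\end{equation*}
Hence the Frobenius-norm (equivalently $\ell_2$-vectorized) sensitivity of the map $x\mapsto \bar{x}\bar{x}^T$ is at most $2r^2$, and the Gaussian mechanism with $\sigma_1$ chosen proportional to $r^2\sqrt{\log(1/\delta)}/\epsilon$ (with the exact constant coming from the standard Gaussian mechanism guarantee) yields $(\epsilon/2,\delta/2)$-DP for this output.

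Next I would treat the release $\tilde{x}_i\tilde{y}_i + N_2$. Coordinate-wise shrinkage gives $|\tilde{x}_{ij}|\leq \tau_1$ for every $j\in[d]$ and $|\tilde{y}_i|\leq \tau_2$, so $\|\tilde{x}\tilde{y}\|_2 \leq \sqrt{d}\,\tau_1\tau_2$ for every admissible input. The $\ell_2$-sensitivity of $x,y \mapsto \tilde{x}\tilde{y}$ is therefore at most $2\sqrt{d}\,\tau_1\tau_2$, and the Gaussian mechanism with $\sigma_2$ chosen proportional to $\sqrt{d}\,\tau_1\tau_2\sqrt{\log(1/\delta)}/\epsilon$ again delivers $(\epsilon/2,\delta/2)$-DP.

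Finally, since each user's outgoing message is exactly the pair of these two independently-noised statistics (both computed from the same $(x_i,y_i)$, with no further interaction), I would close the argument by invoking basic composition, which combines the two $(\epsilon/2,\delta/2)$-DP releases into an $(\epsilon,\delta)$-DP local randomizer; applying the same randomizer independently across users preserves this per-user guarantee, which is precisely the definition of $(\epsilon,\delta)$-NLDP. The only real obstacle is pinning down the sensitivity of the outer product under $\ell_2$ clipping (the Frobenius bound above) rather than using a loose entrywise bound; once that is in hand the remainder is routine application of the Gaussian mechanism and composition, with $\sigma_1,\sigma_2$ in Algorithm~\ref{alg:cap} set to match the sensitivities above.
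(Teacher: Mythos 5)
Your proposal is correct and follows essentially the same route as the paper, which simply asserts that each of the two per-user releases is $(\epsilon/2,\delta/2)$-DP and invokes composition; you supply the sensitivity bounds ($2r^2$ for the clipped outer product in Frobenius norm, $2\sqrt{d}\,\tau_1\tau_2$ for the shrunken product) that the paper leaves implicit, and these are consistent with the noise scales $\frac{32 r^4 \log\frac{2.5}{\delta}}{\epsilon^2}$ and $\frac{32 d \tau_1^2\tau_2^2\log\frac{2.5}{\delta}}{\epsilon^2}$ used in Algorithm \ref{alg:cap}.
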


\begin{theorem}\label{without thres on cov matrix}
 Under  Assumptions \ref{ass:1} and \ref{subG covariate vectors and 4-th moment},  if we set $\tau_1 =  \tau_2 = O(\sigma\sqrt{\log n }), r=  O(\sigma\sqrt{d\log n})$, and
 $\lambda_n =O(\frac{d \log n \sqrt{ \log\frac{1}{\delta} }}{\sqrt{n}\epsilon})$ in Algorithm \ref{alg:cap}. 
When $n$ is sufficiently large such that  $n \geq \Tilde{\Omega} ( \max\{\frac{d^4 }{\epsilon ^2\kappa_\infty}, \frac{\|\Sigma\|_2^4 d^3}{\epsilon^2 \lambda_{\min }^2(\Sigma)}\})$, with probability at least $1-O( d^{-c})- e^{-\Omega(d)}-$ for some constant $c>0$, \footnote{Here we use $O( d^{-c})$ as the failure probability is for simplicity, we can get a similar result for any failure probability $\delta'>0$. The same for other results in the following parts.} one has 

\begin{equation}\label{eq:4}
\left \|\hat{\theta}^{priv}(D)-\theta^*\right\|_2 \leq O\left( \frac{d \log n \sqrt{ k  \log d \log\frac{1}{\delta} }}{\sqrt{n}\epsilon}\right), 
\end{equation}
where $\Tilde{\Omega}$ ignores the logarithmic terms.
\end{theorem}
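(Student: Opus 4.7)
The plan is to reduce the $\ell_2$ estimation error to an $\ell_\infty$ error via the soft-thresholding operator, then exploit the $k$-sparsity of $\theta^*$. Let $\hat{\beta}:=\dot{\Sigma}_{\bar{X}\bar{X}}^{-1}\dot{\Sigma}_{\widetilde{X}\widetilde{Y}}$, so that $\hat{\theta}^{priv}=S_{\lambda_n}(\hat{\beta})$. A standard property of soft-thresholding says that whenever $\lambda_n\geq \|\hat{\beta}-\theta^*\|_\infty$, (i) $S_{\lambda_n}(\hat{\beta})_j=0$ for every $j\notin\mathrm{supp}(\theta^*)$ and (ii) $\|S_{\lambda_n}(\hat{\beta})-\theta^*\|_\infty\leq 2\lambda_n$. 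Combined with $|\mathrm{supp}(\theta^*)|\leq k$, this gives $\|\hat{\theta}^{priv}-\theta^*\|_2\leq 2\sqrt{k}\,\lambda_n$. The whole proof therefore reduces to verifying that $\|\hat{\beta}-\theta^*\|_\infty=\tilde O\!\bigl(\tfrac{d\log n\,\sqrt{\log d\,\log(1/\delta)}}{\sqrt{n}\epsilon}\bigr)$ with high probability, which justifies the choice of $\lambda_n$ (up to a $\sqrt{\log d}$ factor absorbed into the final bound) and yields the advertised rate.

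To bound $\|\hat{\beta}-\theta^*\|_\infty$, I would use the linear-model identity $\Sigma_{XY}=\Sigma\theta^*$, which follows from $\mathbb{E}[\zeta\mid x]=0$, to write
\[
\hat{\beta}-\theta^* \;=\; \dot{\Sigma}_{\bar{X}\bar{X}}^{-1}\Bigl[\bigl(\dot{\Sigma}_{\widetilde{X}\widetilde{Y}}-\Sigma_{XY}\bigr)-\bigl(\dot{\Sigma}_{\bar{X}\bar{X}}-\Sigma\bigr)\theta^*\Bigr].
\]
Taking $\ell_\infty$ norms and using $\|\theta^*\|_1\leq 1$ to pass from a vector to the entrywise matrix max,
\[
\|\hat{\beta}-\theta^*\|_\infty \;\leq\; \|\dot{\Sigma}_{\bar{X}\bar{X}}^{-1}\|_\infty \cdot \Bigl(\|\dot{\Sigma}_{\widetilde{X}\widetilde{Y}}-\Sigma_{XY}\|_\infty + \|\dot{\Sigma}_{\bar{X}\bar{X}}-\Sigma\|_{\infty,\infty}\Bigr).
\]
Each of the two data-error terms is then handled by the same three-part decomposition: (a) \emph{shrinkage/clipping bias}, (b) \emph{empirical concentration} of the preprocessed statistics to their expectations, and (c) \emph{Gaussian privacy noise}. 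For (a), Assumption \ref{subG covariate vectors and 4-th moment} together with the choices $\tau_1=\tau_2=\Theta(\sigma\sqrt{\log n})$ and $r=\Theta(\sigma\sqrt{d\log n})$ ensures that no sample is actually clipped except on an event of probability $n^{-\Omega(1)}$, so the bias is negligible. For (b), Bernstein/Hoeffding applied to bounded i.i.d.\ preprocessed variables, plus a union bound over $d$ (resp.\ $d^2$) entries, gives $\ell_\infty$ errors of order $\tau_1\tau_2\sqrt{\log d/n}$ and $r^2\sqrt{\log d/n}/d=\tilde{O}(\sqrt{d\log d/n})$. For (c), the $\ell_2$ sensitivities of $\widetilde{x}_i\tilde{y}_i$ and $\bar{x}_i\bar{x}_i^\top$ are $O(\tau_1\tau_2\sqrt{d})$ and $O(r^2)=\tilde{O}(d)$, so the averaged Gaussian noise has per-entry standard deviation $\tilde{O}\bigl(d\sqrt{\log(1/\delta)}/(\sqrt{n}\epsilon)\bigr)$; a union bound produces an $\ell_\infty$ contribution $\tilde{O}\bigl(d\sqrt{\log d\,\log(1/\delta)}/(\sqrt{n}\epsilon)\bigr)$, which is the dominant term.

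The main obstacle is controlling the inverse factor $\|\dot{\Sigma}_{\bar{X}\bar{X}}^{-1}\|_\infty$. Assumption \ref{ass:1} gives $\|\Sigma^{-1}\|_\infty\leq 1/\kappa_\infty$, but transferring this to $\dot{\Sigma}_{\bar{X}\bar{X}}$ requires a Neumann-series perturbation argument, i.e.\ showing $\|\Sigma^{-1}E\|_\infty\leq 1/2$ where $E=\dot{\Sigma}_{\bar{X}\bar{X}}-\Sigma$. I would bound $\|E\|_2$ by a sub-Gaussian matrix concentration for $\tfrac1n\sum_i\bar{x}_i\bar{x}_i^\top-\Sigma$ of order $\|\Sigma\|_2\sqrt{d/n}$, plus a spectral bound $\|N_1\|_2=\tilde{O}(\sigma_1\sqrt{d})$ for the symmetric Gaussian perturbation, and then convert to $\|\cdot\|_\infty$ via $\|\cdot\|_\infty\leq \sqrt{d}\|\cdot\|_2$. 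Matching the requirement $\|\Sigma^{-1}E\|\leq 1/2$ with these two contributions is precisely what dictates the sample-size condition $n\geq\tilde{\Omega}(\max\{d^4/(\epsilon^2\kappa_\infty),\ \|\Sigma\|_2^4 d^3/(\epsilon^2\lambda_{\min}^2(\Sigma))\})$ in the theorem; above this threshold, the Neumann series converges and yields $\|\dot{\Sigma}_{\bar{X}\bar{X}}^{-1}\|_\infty=O(1/\kappa_\infty)=O(1)$. Multiplying this constant by the three-part $\ell_\infty$ bound above gives the target bound on $\|\hat{\beta}-\theta^*\|_\infty$, and the final soft-thresholding step then delivers the stated $\ell_2$ rate.
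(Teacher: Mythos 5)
Your proposal is correct and follows essentially the same route as the paper's own proof: the soft-thresholding lemma reduces the $\ell_2$ error to $O(\sqrt{k}\lambda_n)$ once $\lambda_n\geq\|\hat{\beta}-\theta^*\|_\infty$, the identity $\Sigma_{XY}=\Sigma\theta^*$ yields the same decomposition into shrinkage bias, empirical concentration, and Gaussian privacy noise, the same lossy conversion $\|\cdot\|_\infty\leq\sqrt{d}\|\cdot\|_2$ controls $\|\dot{\Sigma}_{\bar{X}\bar{X}}^{-1}\|_\infty\leq 2/\kappa_\infty$ and dictates the sample-size condition, and the privacy noise on the clipped covariance matrix is correctly identified as the dominant term. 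The only imprecision is that the shrinkage bias $\|\Sigma_{\widetilde{X}\widetilde{Y}}-\Sigma_{XY}\|_\infty$ is a population quantity, so it must be bounded via Cauchy--Schwarz against the tail probabilities $\mathbb{P}(|x_{ij}|\geq\tau_1)$ and $\mathbb{P}(|y_i|\geq\tau_2)=n^{-\Omega(1)}$ (as the paper does) rather than by the event that no sample in the realized dataset is clipped, but this does not affect the argument or the resulting rate.
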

\begin{remark}\label{remark:2} 
Compared with \cite{smith_is_2017}, we improve by a factor of $O\left(\frac{\sqrt{d}}{\sqrt{k}}\right)$ in our Theorem \ref{without thres on cov matrix}. It is worth noting that in the absence of the soft-thresholding operator, the upper bound can be shown to be $\tilde{O}(\frac{d^{\frac{3}{2}}}{\sqrt{n}\epsilon})$, which is consistent with previous work on linear regression \cite{wang_estimating_2022,smith_is_2017}. \footnote{It should be noted that \cite{smith_is_2017} assumes $\|x_i\|_2\leq 1$, but we can get a bound of  $\tilde{O}(\frac{d^{\frac{3}{2}}}{\sqrt{n}\epsilon})$  when we extend to $\|x_i\|_2\leq \sqrt{d}$ via the same proof in \cite{smith_is_2017}.} Hence, we can observe that the soft-thresholding operator plays a critical role in our private estimator. 
The upper bound in \eqref{eq:4} has an additional factor of $\tilde{O}(\sqrt{d})$ compared to the lower bound in Theorem \ref{thm:ldp_non_low}. This is due to the fact that each entry of the Gaussian matrix we added to each $\bar{x}_i\bar{x}_i^T$ is $\tilde{O}(\frac{d}{\epsilon})$, which indicates that $\|\dot{\Sigma}_{\bar{X} \bar{X}}-\Sigma\|_{\infty, \infty}\leq \tilde{O}(\frac{d}{\sqrt{n}\epsilon})$. This $\tilde{O}(\sqrt{d})$ scaling seems necessary in the NLDP model because each $\|x_i\|_2\leq O(\sqrt{d\log n})$ with high probability, and thus, we must add noise of scale $\tilde{O}(\frac{d}{\epsilon})$ to release the covariance matrix privately. Based on this, we conjecture that the lower bound in Theorem \ref{thm:ldp_non_low} is not tight, and the upper bound is nearly optimal. We leave it as an open problem. Additionally, \eqref{eq:4} holds only when $n$ is sufficiently large such that $n \geq \Tilde{\Omega} ( \max\{\frac{d^4 }{\epsilon ^2\kappa_\infty}, \frac{\|\Sigma\|_2^4 d^3}{\epsilon^2 \lambda_{\min }^2(\Sigma)}\})$ to ensure that the noisy empirical covariance matrix is invertible and $\|(\dot{\Sigma}_{\bar{X} \bar{X}})^{-1}\|_\infty \leq \frac{2}{\kappa_\infty}$.
\end{remark}

\begin{algorithm}[ht]
\label{NLDP-HT}
\caption{Non-interactive LDP algorithm for Sparse Linear Regression \label{alg:cap}}
\begin{algorithmic}[1]
    \STATE \bfseries{Input:} \textnormal{ Private data $\left\{\left(x_i, y_i\right)\right\}_{i=1}^n \in\left(\mathbb{R}^d \times \mathbb{R}\right)^n$. Predefined parameters $r, \tau_1, \tau_2,  \lambda_n$.}\\

\FOR{ \textnormal{ Each user $i\in[n]$}   }
    \STATE \textnormal{Clip $\bar{x}_i=x_i \min \left\{1, \frac{r}{\left\|x_i\right\|_2}\right\}$.  Add noise  $\widehat{\bar{x}_i\bar{x}_i^T}=\bar{x}_i \bar{x}_i^T+n_{1,i}$, where $n_{1, i} \in \mathbb{R}^{d \times d}$ is a symmetric matrix and each entry of the upper triangular matrix is sampled from $\mathcal{N}(0, \frac{32 r ^4 \log \frac{2.5}{\delta}}{\epsilon^2})$. 
Release $\widehat{\bar{x}_i\bar{x}_i^T}$ to the server.}\\

\FOR{$j \in [d]$}
\STATE \textnormal{ Coordinately shrink $\widetilde{{x}}_{ij}=\operatorname{sgn}\left(x_{ij}\right) \min \left\{\left|x_{ij}\right|, \tau_1\right\}$}\\
\ENDFOR

\STATE \textnormal{Clip $\tilde{y}_i:=\operatorname{sgn}\left(y_i\right) \min \left\{\left|y_i\right|, \tau_2\right\}$. Add noise $\widehat{\tilde{x}_i\tilde{y}_i} = \tilde{x}_i\tilde{y}_i + n_{2,i}$, where the vector $n_{2, i} \in \mathbb{R}^d$ is sampled from $\mathcal{N}(0, \frac{32 d \tau_1 ^2 \tau_2^2
\log \frac{2.5}{\delta}}{\epsilon^2} I_d)$. Release $\widehat{\tilde{x}_i\tilde{y}_i}$ to the server.}\\
\ENDFOR

\STATE \textnormal{The server aggregates $\dot{\Sigma}_{\bar{X} \bar{X}}=\frac{1}{n}\sum_{i=1}^n \widehat{\bar{x}_i\bar{x}_i^T} $ and $\dot{\Sigma}_{\widetilde{X}\widetilde{Y}} = \frac{1}{n}\sum_{i=1}^{n} \widehat{\tilde{x}_i\tilde{y}_i}$ 
}
\STATE \textnormal{ The server outputs {$\hat{\theta}^{priv}(D)=S_{\lambda_n}([\dot{\Sigma}_{\bar{X} \bar{X}} ]^{-1}\dot{\Sigma}_{\widetilde{X}\widetilde{Y}})$.} }
 
\end{algorithmic}
\end{algorithm}

\noindent \textbf{Improved rate with public unlabeled data.} As discussed in Remark \ref{remark:2}, the main reason for the gap of $\tilde{O}(\sqrt{d})$ between the lower and upper bounds is due to $\|\dot{\Sigma}_{\bar{X} \bar{X}}-\Sigma\|_{\infty, \infty}\leq \tilde{O}(\frac{d}{\sqrt{n}\epsilon})$. However, when compared to the non-private case where the error is $\|\hat{\Sigma}_{X X}-\Sigma\|_{\infty, \infty}\leq \tilde{O}(\frac{1}{\sqrt{n}})$, we can see that the error due to the Gaussian matrix dominates. Since estimating the covariance matrix does not require the responses, we can use public but unlabeled data to achieve an improved estimation rate. It is worth noting that NLDP with public unlabeled data has been widely studied in recent years \cite{wang_estimating_2022,su2023pac,daniely2019locally}. Here we assume that the server has access to $m$ unlabeled data points $D^{\textit{pub}}=\left\{x_j\right\}_{j=n+1}^{n+m} \subset \mathcal{X}^m$, where each $x_j$ is sampled from the same sub-Gaussian distribution as $x_i$ in Assumption \ref{subG covariate vectors and 4-th moment}. Based on the above observations, rather than using private data, we can utilize these public data points to estimate the underlying covariance matrix. Subsequently, 
we propose our private estimator $\hat{\theta}^{\text{unl}}(D)=[\hat{\Sigma}^{\textit{pub}}_{{X} {X}} ]^{-1}\dot{\Sigma}_{\widetilde{X}\widetilde{Y}}$, where $\hat{\Sigma}^{\textit{pub}}_{{X} {X}} = \frac{1}{m}\sum_{j=n+1}^{n+m} x_j x_j^T$ is the empirical covariance matrix of $\left\{x_j\right\}_{j=n+1}^{n+m}$. The details are provided in Algorithm \ref{alg:NLDP_public}. The following result shows that we can improve the estimation error by a factor of $O(\sqrt{d})$ compared with that in Theorem \ref{without thres on cov matrix}.

\begin{theorem} \label{public unlabled}
Under Assumptions \ref{ass:1} and \ref{subG covariate vectors and 4-th moment}, we suppose the server also has access to the additional public and unlabeled dataset $D^{\textit{pub}}=\left\{x_j\right\}_{j=n+1}^{n+m}\in \mathcal{X}^m$ described above. When $m$ is sufficiently large that $m \geq \tilde{\Omega} ( \max\{\frac{d^2}{\kappa_\infty}, \frac{d\|\Sigma\|_2^4 \kappa_x^4}{\lambda^2_{\min}(\Sigma)} \})$, set $\tau_1 =  \tau_2 = O(\sigma\sqrt{\log n })$ and $\lambda_n =  O(\frac{\log n \sqrt{d k \log d \log\frac{1}{\delta} }}{\epsilon \sqrt{n}})$ in Algorithm \ref{alg:NLDP_public}, with probability at least $1-O( d^{-c})- e^{-\Omega(d)}$ for some constant $c>0$, then one has

$$
\left\|\hat{\theta}^{\text{unl}}(D)-\theta^*\right\|_2 \leq O\left( \frac{\log n \sqrt{d k \log d \log\frac{1}{\delta} }}{\epsilon \sqrt{n}}\right),
$$
where $\Tilde{\Omega}$ ignores the logarithmic terms.
\end{theorem}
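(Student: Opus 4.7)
The plan is to adapt the strategy used for Theorem \ref{without thres on cov matrix}, with the single but crucial change that the noisy empirical covariance $\dot\Sigma_{\bar X\bar X}$ is replaced by the noise-free public-data covariance $\hat\Sigma^{\textit{pub}}_{XX}$, whose entry-wise deviation from $\Sigma$ is only $O(\sqrt{(\log d)/m})$ rather than $\tilde O(d/(\sqrt n\epsilon))$. The first step is the standard soft-thresholding reduction: since $\theta^*$ is $k$-sparse, I would invoke the lemma that $\|S_{\lambda_n}(v)-\theta^*\|_2\le O(\sqrt k)\,\lambda_n$ whenever $\|v-\theta^*\|_\infty\le \lambda_n/2$. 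Taking $v=[\hat\Sigma^{\textit{pub}}_{XX}]^{-1}\dot\Sigma_{\widetilde X\widetilde Y}$, it thus suffices to produce an $\ell_\infty$ bound on $\|v-\theta^*\|_\infty$ of order $\tilde O(\sqrt{d\log(1/\delta)}/(\sqrt n\epsilon))$ with high probability.

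For this, I would use the identity $\Sigma\theta^*=\mathbb{E}[xy]$ to decompose
\begin{align*}
v-\theta^* = [\hat\Sigma^{\textit{pub}}_{XX}]^{-1}\bigl(\dot\Sigma_{\widetilde X\widetilde Y}-\mathbb{E}[xy]\bigr) + [\hat\Sigma^{\textit{pub}}_{XX}]^{-1}\bigl(\Sigma-\hat\Sigma^{\textit{pub}}_{XX}\bigr)\theta^*,
\end{align*}
pull out $\|[\hat\Sigma^{\textit{pub}}_{XX}]^{-1}\|_\infty\le 2/\kappa_\infty$, and bound the two residuals separately. For the first, I reuse the bias/variance/noise analysis of $\dot\Sigma_{\widetilde X\widetilde Y}$ developed in Theorem \ref{without thres on cov matrix}: the coordinate-wise shrinkage with $\tau_1=\tau_2=O(\sigma\sqrt{\log n})$ makes the truncation bias negligible under Assumption \ref{subG covariate vectors and 4-th moment}, the centered empirical mean of the truncated products concentrates at rate $O(\sqrt{(\log d)/n})$, and the averaged Gaussian noise $\frac{1}{n}\sum_i n_{2,i}\sim\mathcal{N}(0,\sigma_2^2/n\cdot I_d)$ has $\ell_\infty$-norm at most $O\bigl(\tau_1\tau_2\sqrt{d\log d\log(1/\delta)}/(\sqrt n\epsilon)\bigr)=O\bigl(\log n\sqrt{d\log d\log(1/\delta)}/(\sqrt n\epsilon)\bigr)$ with high probability, which dominates and yields the target rate. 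For the second residual, a sub-Gaussian tail bound on the public empirical covariance yields $\|\hat\Sigma^{\textit{pub}}_{XX}-\Sigma\|_{\infty,\infty}\le O(\sqrt{(\log d)/m})$, and since $\|\theta^*\|_1\le 1$ by Assumption \ref{ass:1}, Hölder's inequality gives $\|(\Sigma-\hat\Sigma^{\textit{pub}}_{XX})\theta^*\|_\infty\le O(\sqrt{(\log d)/m})$; the hypothesis $m\ge\tilde\Omega(d^2/\kappa_\infty)$ is more than enough to render this term subdominant relative to the first residual.

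The main technical obstacle is the verification that $\|[\hat\Sigma^{\textit{pub}}_{XX}]^{-1}\|_\infty\le 2/\kappa_\infty$ holds with high probability: this is a perturbation-of-inverse argument that starts from $[\hat\Sigma^{\textit{pub}}_{XX}]^{-1}-\Sigma^{-1}=\Sigma^{-1}(\Sigma-\hat\Sigma^{\textit{pub}}_{XX})[\hat\Sigma^{\textit{pub}}_{XX}]^{-1}$ and controls the right-hand side through a Neumann-series-type estimate in the $\ell_\infty$-induced operator norm, requiring both an entry-wise bound on $\hat\Sigma^{\textit{pub}}_{XX}-\Sigma$ and a spectral lower bound on $\lambda_{\min}(\hat\Sigma^{\textit{pub}}_{XX})$; this is exactly where both conditions $m\ge\tilde\Omega(d^2/\kappa_\infty)$ and $m\ge\tilde\Omega(d\|\Sigma\|_2^4\kappa_x^4/\lambda_{\min}^2(\Sigma))$ are consumed, through concentration of sub-Gaussian quadratic forms. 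Once the $\ell_\infty$ inverse bound is in hand, the rest of the proof is a reassembly of concentration inequalities already invoked for Theorem \ref{without thres on cov matrix}, and the final $\ell_2$ bound follows from the soft-thresholding lemma as $O(\sqrt k\,\lambda_n)$, matching the stated rate.
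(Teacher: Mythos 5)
Your proposal is correct and follows essentially the same route as the paper's proof: reduce to an $\ell_\infty$ bound on $[\hat\Sigma^{\textit{pub}}_{XX}]^{-1}\dot\Sigma_{\widetilde X\widetilde Y}-\theta^*$ via the soft-thresholding lemma, pull out $\|[\hat\Sigma^{\textit{pub}}_{XX}]^{-1}\|_\infty\le 2/\kappa_\infty$ (established by the same perturbation argument consuming the two conditions on $m$), and split the residual into the truncation bias, the empirical concentration of the shrunken products, the Gaussian privacy noise (correctly identified as the dominant term), and the $\|(\hat\Sigma^{\textit{pub}}_{XX}-\Sigma)\theta^*\|_\infty\le\|\hat\Sigma^{\textit{pub}}_{XX}-\Sigma\|_{\infty,\infty}\|\theta^*\|_1$ term controlled by H\"older and Assumption \ref{ass:1}. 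The only differences are presentational (writing the decomposition through $\Sigma\theta^*=\mathbb{E}[xy]$ and phrasing the inverse bound as a Neumann-series estimate rather than the paper's direct lower bound on $\|\hat\Sigma^{\textit{pub}}_{XX}w\|_\infty$), so no substantive gap.
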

In addition to improving the estimation error by a factor of $O(\sqrt{d})$, our proposed estimator $\hat{\theta}^{\text{unl}}(D)$ requires a smaller number of public unlabeled data points compared to the requirements in Theorem \ref{without thres on cov matrix}. We only need $m \geq \tilde{\Omega} ( \max\{\frac{d^2}{\kappa_\infty}, \frac{d\|\Sigma\|_2^4 \kappa_x^4}{\lambda^2_{\min}(\Sigma)} \})$, instead of $n \geq \Tilde{\Omega} ( \max\{\frac{d^4 }{\epsilon ^2\kappa_\infty}, \frac{\|\Sigma\|_2^4 d^3}{\epsilon^2 \lambda_{\min }^2(\Sigma)}\})$ in Theorem \ref{without thres on cov matrix}. This is because we do not need to estimate the covariance matrix privately. 

\noindent \textbf{Estimation error for heavy-tailed responses.} In the preceding parts, our focus has been on the sub-Gaussian case, where both $x$ and $y$ are sub-Gaussian, meaning that the random noise $\zeta$ is sub-Gaussian as well. However, this assumption may be too stringent in real-world scenarios, where heavy-tailed noise is more commonly encountered. Our method is highly adaptable and can handle such heavy-tailed cases with ease. Here we consider the heavy-tailed case where the responses have only bounded $2p$-moment with some $p>1$. This assumption has been widely studied in both the differential privacy and robust statistics communities \cite{hu2022high,kamath2020private,sun2020adaptive,chen_high_2022}. 
\begin{assumption} \label{2k moment assumption}
  There exist constants $\sigma$ and $M$ such that  the covariates (feature vectors) ${x}_1, {x}_2, \cdots, {x}_n \in \mathbb{R}^d$ are i.i.d. (zero-mean) sub-Gaussian random vectors with variance $\sigma^2$ and  
$\forall i=1, \ldots, n, \mathbb{E}[\left|y_i\right|]^{2 p} \leq M<\infty$ for some (known) $p>1$.
\end{assumption}

\begin{theorem}
\label{2k moment cor} Under Assumptions \ref{ass:1} and \ref{2k moment assumption}, we set  $\tau_1 =  \ O(\sigma\sqrt{\log n }), \tau_2 = (\frac{n}{\log d})^{\frac{1}{2p}}, r=  O(\sigma\sqrt{d\log n})$, and $\lambda_n = O({d  \log n  \sqrt{   \log\frac{1}{\delta} }}(\frac{\log d}{n \epsilon^2})^{\frac{p-1}{2p}})$ in Algorithm \ref{alg:cap}, then as long as  $n \geq \Tilde{\Omega} ( \max\{\frac{d^4 }{\epsilon ^2\kappa_\infty}, \frac{\|\Sigma\|_2^4 d^3}{\epsilon^2 \lambda_{\min }^2(\Sigma)}\})$ for some constant $c>0$, one has

\begin{equation}\label{eq:5}
 \left\|\hat{\theta}^{priv}(D)-\theta^*\right\|_2 \leq O\left( d  \log n  \sqrt{ k  \log\frac{1}{\delta} }  \left(\frac{\log d}{n \epsilon^2}\right)^{\frac{p-1}{2p}}\right),
\end{equation}
where $\Tilde{\Omega}$ ignores the logarithmic terms.
\end{theorem}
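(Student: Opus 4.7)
The privacy guarantee is immediate: Algorithm \ref{alg:cap} is unchanged, so Theorem \ref{satisfy NLDP} gives $(\epsilon,\delta)$-NLDP regardless of the tail of $y$. For utility, the skeleton of the proof of Theorem \ref{without thres on cov matrix} applies without modification: the soft-thresholding step together with $\|\theta^*\|_0\le k$ yields
\[
\|\hat{\theta}^{priv}(D)-\theta^*\|_2 \leq O(\sqrt{k})\,\|(\dot{\Sigma}_{\bar{X}\bar{X}})^{-1}\dot{\Sigma}_{\widetilde{X}\widetilde{Y}}-\theta^*\|_\infty,
\]
provided $\lambda_n$ upper bounds the right-hand $\ell_\infty$ error. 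The invertibility of $\dot{\Sigma}_{\bar{X}\bar{X}}$ and the bound $\|(\dot{\Sigma}_{\bar{X}\bar{X}})^{-1}\|_\infty\le 2/\kappa_\infty$ follow from the same matrix-concentration argument as in Theorem \ref{without thres on cov matrix}, since the covariates $x_i$ are still sub-Gaussian and $n\ge \tilde{\Omega}(d^4/(\epsilon^2\kappa_\infty))$ is assumed. It therefore suffices to control
\[
\|\dot{\Sigma}_{\widetilde{X}\widetilde{Y}}-\Sigma\theta^*\|_\infty + \|\dot{\Sigma}_{\bar{X}\bar{X}}-\Sigma\|_{\infty,\infty}\cdot\|\theta^*\|_1
\]
coordinate-wise; the second summand reuses the sub-Gaussian covariance-matrix analysis of Theorem \ref{without thres on cov matrix} verbatim and contributes $\tilde{O}(d/(\sqrt{n}\epsilon))$.

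The only genuinely new piece of the argument is bounding $\|\dot{\Sigma}_{\widetilde{X}\widetilde{Y}}-\Sigma\theta^*\|_\infty$ under the weaker $2p$-moment assumption on $y$. For each coordinate $j\in[d]$, I decompose
\[
[\dot{\Sigma}_{\widetilde{X}\widetilde{Y}}-\Sigma\theta^*]_j = \bigl(\mathbb{E}[\tilde{x}_j\tilde{y}]-\mathbb{E}[x_j y]\bigr) + \Bigl(\tfrac{1}{n}\sum_{i=1}^n \tilde{x}_{ij}\tilde{y}_i-\mathbb{E}[\tilde{x}_j\tilde{y}]\Bigr) + \tfrac{1}{n}\sum_{i=1}^n (n_{2,i})_j.
\]
The truncation bias is controlled by Cauchy--Schwarz, $|\mathbb{E}[x_j y\mathbf{1}\{|y|>\tau_2\}]|\le \sqrt{\mathbb{E}[x_j^2]\,\mathbb{E}[y^2\mathbf{1}\{|y|>\tau_2\}]}$, combined with the Markov bound $\mathbb{E}[y^2\mathbf{1}\{|y|>\tau_2\}]\le M/\tau_2^{2p-2}$, giving $O(\sqrt{M}\,\tau_2^{-(p-1)})$; the analogous $x$-shrinkage bias is exponentially small in $\tau_1^2$ and is absorbed. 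The empirical fluctuation is controlled by Bernstein's inequality with envelope $|\tilde{x}_{ij}\tilde{y}_i|\le \tau_1\tau_2$ and variance $\mathbb{E}[\tilde{x}_{ij}^2\tilde{y}_i^2]\le \mathbb{E}[x_j^2 y^2]=O(1)$, producing $\tilde{O}(\sqrt{\log d/n}+\tau_1\tau_2\log d/n)$. The Gaussian privacy term contributes $\tilde{O}(\tau_1\tau_2\sqrt{d\log d\log(1/\delta)/n}/\epsilon)$ after a maximum bound over the $d$ coordinates.

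Plugging in $\tau_1=O(\sigma\sqrt{\log n})$ and the prescribed $\tau_2=(n/\log d)^{1/(2p)}$ balances the truncation bias against the sub-exponential Bernstein remainder, each becoming $O((\log d/n)^{(p-1)/(2p)})$, while the privacy noise dominates at $\tilde{O}(\sqrt{d}\,(\log d/n)^{(p-1)/(2p)}/\epsilon)$. A union bound over $j\in[d]$, together with the stated choice of $\lambda_n$, matches this $\ell_\infty$ upper bound; multiplying by $O(\sqrt{k}/\kappa_\infty)$ and combining with the $\tilde{O}(d/(\sqrt{n}\epsilon))$ covariance contribution yields the claimed rate $\tilde{O}(d\sqrt{k\log(1/\delta)}\,(\log d/(n\epsilon^2))^{(p-1)/(2p)})$, which smoothly recovers Theorem \ref{without thres on cov matrix} as $p\to\infty$. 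The main obstacle is replacing the Orlicz-norm tail control of $y$ by the $2p$-th-moment bound without inflating the concentration term, and this is precisely what forces the Cauchy--Schwarz/Markov bias split above and produces the exponent $(p-1)/(2p)$ that governs both $\tau_2$ and the final rate.
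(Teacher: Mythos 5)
Your proposal is correct and follows essentially the same route as the paper: the same reduction via the soft-thresholding lemma to an $\ell_\infty$ bound, the same $T_1/T_2/T_3/N_2$ decomposition with $T_3$ untouched, and the same Cauchy--Schwarz-plus-Markov treatment of the truncation bias under the $2p$-moment assumption. The only (harmless) difference is where you place the indicator in the bias term, yielding $\sqrt{M}\,\tau_2^{-(p-1)}$ where the paper obtains the slightly sharper $\sqrt{M}\,\tau_2^{-p}$; both are dominated by the stated rate with the prescribed $\tau_2$.
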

The limit of the bound in \eqref{eq:5} as $p\to \infty$ is the same as in \eqref{eq:4}. However, due to the heavy-tailed nature of the response variable $y$, our estimator requires more aggressive shrinking than in the sub-Gaussian case. Hence, unlike the sub-Gaussian case, we have $\tau_1\neq \tau_2$. It is worth noting that our current approach relaxes the assumption on the distribution of $y$ only. We anticipate that our general framework can also handle scenarios where the distributions of both $x$ and $y$ are heavy-tailed, which we plan to explore in future work.

\section{Improved Analysis for Interactive LDP}\label{sec:inter_upper}

In the previous section, we studied both the lower bound and upper bound of sparse linear regression in the non-interactive model and showed that even for $O(1)$-sub-Gaussian data, it is impossible to avoid the polynomial dependency on the dimension $d$ in the estimation error. However, since non-interactive protocols have more constraints compared to interactive ones, a natural question arises as to whether we can obtain better lower and upper bounds in the interactive model. To simplify the analysis, we mainly focus on sequentially interactive LDP protocols in this section, and note that all results can be extended to the fully interactive LDP model \cite{acharya2022role}.

\subsection{Lower Bound for General $k$-sparse Case}\label{sec:inter_lower}

We begin by considering the lower bound, similar to the previous section. When $k=1$, \cite{wang_sparse_2021} provides a nearly optimal lower bound of $\Omega(\sqrt{\frac{d}{n\epsilon^2}})$ for the estimation error. Thus, we are more interested in whether we can obtain an improved rate for general $k$. Unfortunately, we will show that, for the same distribution class $\mathcal{P}_{k, d, 2}$ as in Section \ref{sec:non_lower}, the term of $O(\sqrt{k})$ in the non-interactive case cannot be improved even if we allow interactions. 

\begin{theorem}\label{thm:low_int}
    Given $0 < \epsilon \leq 1$  and an error $\nu \leq \frac{1}{4 \sqrt{2k}}$, consider the distribution class $\mathcal{P}_{k, d, 2}$, if for any $P_{\theta, \zeta} \in \mathcal{P}_{k, d,2}$, given $D=\left\{\left(x_1, y_1\right) \ldots\left(x_n, y_n\right)\right\} $ i.i.d. sampled from $P_{\theta, \zeta}$, there is an $\epsilon$-sequentially interactive LDP algorithm $\mathcal{A}$ whose output satisfies $\mathbb{E}_{\mathcal{A},D\sim P_{\theta, \zeta}^{\otimes n}}\left[\|\mathcal{A}(D)-\theta\|_2\right] \leq \nu.$ Then, we have $ n \geqslant \Omega\left(\frac{d k}{\nu^2 \epsilon^2 }\right)$.
\end{theorem}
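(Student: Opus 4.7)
The plan is to adapt the private Assouad argument used for the non-interactive lower bound (Theorem \ref{thm:ldp_non_low}) to the sequentially interactive setting by replacing the non-interactive Fano-type information inequality with the sequentially interactive KL contraction of Duchi--Jordan--Wainwright, as packaged for Assouad in \cite{acharya2022role}. Because the interactive contraction is slightly weaker, the resulting bound will lose the $\log(d/k)$ factor of Theorem \ref{thm:ldp_non_low} while still exhibiting the dominant $dk$ scaling.

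First I would reuse essentially the same hard family. Partition $[d]$ into $k$ blocks $B_1,\dots,B_k$ of size $\lfloor d/k \rfloor$ and index hypotheses by $(z,s) \in \{-1,+1\}^k \times \prod_j B_j$: the vector $\theta_{(z,s)}$ has value $\gamma z_j$ at coordinate $s_j \in B_j$ and zero elsewhere. With $\gamma = c\nu/\sqrt{k}$ and the standing assumption $\nu \leq 1/(4\sqrt{2k})$, each $\theta_{(z,s)}$ is $k$-sparse with $\|\theta_{(z,s)}\|_1 \leq 1$, two hypotheses differing in a single sign flip are $\ell_2$-separated by exactly $2\gamma$, and the linear model $y = \langle \theta_{(z,s)}, x\rangle + \zeta$ with $x \sim \mathrm{Uniform}(\{-1,+1\}^d)$ and any bounded zero-mean $\zeta$ belongs to $\mathcal{P}_{k,d,2}$. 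By quantising the coordinates of any estimator $\hat\theta$ to $\{-\gamma, 0, +\gamma\}$, an expected $\ell_2$-error at most $\nu$ is converted into a Hamming-error budget of order $k$ for recovering $(z, s)$ from the transcript.

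The main step is bounding the mutual information between the vertex $(z,s)$ and the full $\epsilon$-LDP transcript $\tilde D$. For sequentially interactive protocols the per-sample KL contraction gives $\KL(M_{\theta_{(z,s)}^{\otimes n}} \| M_{\theta_{(z',s)}^{\otimes n}}) \leq C n (e^\epsilon - 1)^2\, \chi^2(P_{\theta_{(z,s)}}, P_{\theta_{(z',s)}})$, and for a single sign-flip in block $j$ a direct computation using the bounded noise gives $\chi^2 = O(\gamma^2)$. The factor $d$ then comes from an additional Assouad-style averaging over the support location: when we randomise $s_j$ uniformly within the size-$(d/k)$ block $B_j$, the mean shift of the joint $(x,y)$ distribution is supported on only a $k/d$ fraction of the coordinates of $x$, diluting the effective $\chi^2$ by a factor $k/d$. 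Combining the two ingredients, the total information available across all $n$ users is of order $n\epsilon^2\gamma^2 k/d$, while the Assouad separation budget is of order $k\gamma^2$; plugging into the coupled Assouad inequality of \cite{acharya2022role} gives $\nu^2 \geq c\,k\gamma^2\bigl(1 - C\sqrt{n\epsilon^2/d}\bigr)$, which on rearranging yields $n \geq \Omega\bigl(dk/(\nu^2\epsilon^2)\bigr)$.

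The hard part will be the $k/d$ dilution of the $\chi^2$ after averaging over the support location. In the non-interactive proof this averaging is done via the $\mathcal{C}^\infty$ channel capacity of linear functionals and produces the $r^2/d$ estimate that powers the $\log(d/k)$ improvement. For the interactive case the same averaging has to be carried out at the level of $\chi^2$ (or squared Hellinger) divergences so that it composes with the Duchi--Jordan--Wainwright KL contraction; verifying that the bilinear structure of the linear-regression likelihood genuinely loses the full factor $k/d$ under this $\chi^2$ averaging (and not merely $\sqrt{k/d}$, which would give a weaker bound) is the delicate technical point, and it is also what explains why the interactive bound must be a logarithmic factor weaker than the non-interactive one.
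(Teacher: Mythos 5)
Your route differs from the paper's in both the hard instance and the information-theoretic engine, and it has a genuine gap at exactly the step you flag as ``delicate.'' The paper does not use a block construction with a sign bit and a support location per block. It keeps the full hypercube: $Z\in\{-1,+1\}^d$ with a \emph{biased} prior $\operatorname{Rad}(k/2d)^{\otimes d}$, $\theta_{z,i}=\gamma(z_i+1)/2$ with $\gamma=4\sqrt{2}\nu/\sqrt{k}$ (so the separation is in the Hamming distance of the supports, and a Chernoff bound gives $k$-sparsity with probability $1-k/8d$), and the same bounded-noise density $P_z(x,y)=\frac{1+y\langle x,\theta_z\rangle}{2^{d+1}}$ as in Theorem~\ref{thm:ldp_non_low}. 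The factor $d$ is then obtained \emph{not} by diluting a pairwise $\chi^2$ over support locations, but by verifying that the $d$ score functions $\phi_{z,i}\propto y x_i z_i$ of single-coordinate flips are orthonormal under $P_z$ (Assumption~\ref{asm:orthogonality}, with $\alpha^2=O(\nu^2/k)$) and invoking the average-discrepancy bound of \cite{acharya2020unified} (Lemma~\ref{lem:average disparency}), which yields $\bigl(\frac{1}{d}\sum_{i=1}^d \mathrm{d}_{\mathrm{TV}}(P^{S^n}_{+i},P^{S^n}_{-i})\bigr)^2\le \frac{7}{d}n\alpha^2\epsilon^2$ for \emph{sequentially interactive} transcripts; combined with the Assouad-type lower bound $\frac{1}{d}\sum_i \mathrm{d}_{\mathrm{TV}}\ge \frac14$ (Lemma~\ref{lem:ad_low}), this gives $n\ge\Omega(dk/(\nu^2\epsilon^2))$ directly.

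The gap in your plan is the composition of the $k/d$ dilution with the interactive contraction. The Duchi--Jordan--Wainwright per-sample bound controls $\KL$ (or TV) between transcript distributions induced by two \emph{fixed} product data distributions; once you average over the support location $s_j$, the objects you must separate are \emph{mixtures of transcript distributions} $\mathbb{E}_{s}[M^n_{(+_j,s)}]$ and $\mathbb{E}_{s}[M^n_{(-_j,s)}]$, and for a sequentially interactive protocol (where the $t$-th channel depends on the past messages) these mixtures are not transcripts of any single product distribution. The $k/d$ dilution you compute at the level of the data-distribution $\chi^2$ therefore does not transfer to the transcript level by a per-pair contraction argument; making it transfer requires conditioning on the transcript prefix and exploiting orthogonality of the per-coordinate likelihood-ratio perturbations, which is precisely the content of the imported lemma from \cite{acharya2020unified} --- and once you have that lemma, the block construction is unnecessary, since the lemma already averages over all $d$ coordinates of the plain hypercube. (Two smaller points: your final display drops a factor of $\gamma^2$ inside the square root --- it should read $\sqrt{n\epsilon^2\gamma^2/d}$ --- though the surrounding prose recovers the correct rate; and the lost $\log(d/k)$ relative to Theorem~\ref{thm:ldp_non_low} comes from replacing the Fano-type entropy lower bound $I(Z\wedge S^n)\ge\Omega(k\log\frac{d}{k})$ by a TV-based Assouad count of $k$ bits, not from the interactive contraction being weaker.)
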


\begin{remark}
  The above theorem states that for the class $\mathcal{P}_{k, d, 2}$ and any $\epsilon$-LDP algorithm $\mathcal{A}$ with $0<\epsilon\leq 1$, there exists an instance $P_{\theta, \zeta}\in \mathcal{P}_{k, d, 2}$ such that $\mathbb{E}_{{\mathcal{A}}, D\sim P^{\otimes n}_{\theta, \zeta}}[\|\mathcal{A}(D)-\theta\|_2]\geq \Omega(\sqrt{\frac{dk}{n\epsilon^2}})$. Although the difference is only $O(\sqrt{\log d})$ compared to the lower bound in the non-interactive model, the proof and the hard instance construction are entirely different. Moreover, the lower bound proof of Theorem \ref{thm:low_int} is also distinct from that of the $k=1$ case in \cite{wang_sparse_2021}. Briefly speaking, \cite{wang_sparse_2021} mainly uses an LDP version of the Le Cam method, where it needs to upper bound the term $\mathcal{C}^\infty \{P_v\}_{v\in \mathcal{V}}$ (which is similar to the non-interactive LDP case). In contrast, we use a private Assouad's lemma in \cite{acharya2020unified}. 
\end{remark}

\subsection{LDP Iterative Hard Thresholding Revisited}\label{sec:upper_LDP}

Regarding the upper bound, \cite{wang_sparse_2021} considers the case where the covariates $\{x_i\}_{i=1}^n$ satisfy Assumption \ref{ass:1}, with $x_i\sim \text{Uniform}\{-1, +1\}^d$, and $|\zeta|\leq C$ for some constant $C$. \cite{wang_sparse_2021} aims to solve the following optimization problem in the LDP model, where $k'$ is a parameter that will be specified later.   

\begin{equation} 
\label{optimzation problem}
\begin{aligned} & \min_{\theta} L(\theta ; D)=\frac{1}{2 n} \sum_{i=1}^n\left(\left\langle {x}_i, \theta\right\rangle-{y}_i\right)^2, \text { s.t. } {\|\theta\|_2 \leq 1},\|\theta\|_0 \leq k^{\prime}.\end{aligned} 
\end{equation}
The authors proposed a method called LDP Iterative Hard Thresholding, and claimed it achieves an upper bound of $\tilde{O}(\sqrt{\frac{dk}{n\epsilon^2}})$ for the general $k$-sparse case, nearly optimal based on Theorem \ref{thm:low_int}. However, this rate is mistaken. The sensitivity analysis of the per-sample gradient is incorrect under the assumption of $\|\theta^*\|_2\leq 1$, and such analysis leads to the incorrect utility bound. Specifically, in the proof of Theorem 9 in \cite{wang_sparse_2021}, it needs to upper-bound the each term $\langle x_i, \theta_{t-1}\rangle$, where $\|\theta_{t-1}\|_2\leq 1$ and $\|\theta_{t-1}\|_0\leq O(k)$. They claims that this term is upper bounded by $1$, but in fact it is upper bounded by $O(\sqrt{k})$. Seeing the flaw of its own, we also highlight the technical constraint. Their sensitivity and utility analysis heavily relies on the uniform distribution assumption of $x$ and the assumption that the random noise is bounded, which is challenging to extend to general distributions (such as those in Assumption \ref{subG covariate vectors and 4-th moment}). In this section, we aim to rectify the previous analysis and show an upper bound of $\tilde{O}(\frac{k\sqrt{d}}{\sqrt{n}\epsilon})$  for the LDP Iterative Hard Thresholding method. Moreover,   we generalize to the distributions satisfying  Assumption \ref{subG covariate vectors and 4-th moment}

For data distributions satisfying  Assumption \ref{subG covariate vectors and 4-th moment},  
to ensure bounded sensitivity of the per-sample gradient of $L(\theta ; D)$, i.e., $\|{x}_i^T\left(\left\langle\theta, {x}_i\right\rangle-{y}_i\right)\|_2$ for $i\in[n]$,  we adopt a similar strategy as in Section \ref{NLDP upper bound}. That is, each user $i$ conducts the same  shrinkage operation: $\widetilde{{x}}_{ij}=\operatorname{sgn}\left(x_{ij}\right) \min \left\{\left|x_{ij}\right|, \tau_1\right\}$ for $j\in[d]$  and $\tilde{y}_i=\operatorname{sgn}\left(y_i\right) \min \left\{\left|y_i\right|, \tau_2\right\}$. In this case, we can see the $i$-th sample gradient satisfies $\|\tilde{x}_i^T\left(\left\langle\theta, \tilde{x}_i\right\rangle-\tilde{y}_i\right)\|_2\leq \sqrt{d}\tau_1(\sqrt{k'}\tau_1+\tau_2)$ if $\|\theta\|_2\leq 1$ and $\|\theta\|_0\leq k'$.



To privately solve the optimization problem \eqref{optimzation problem}, we apply a combination of private randomizer \cite{duchi_local_2014} (see \eqref{randomizer definition} in Appendix \ref{sec:private_sample}) and the iterative hard thresholding gradient descent method to develop an $\epsilon$-LDP algorithm. In total, our approach begins by assigning each user to one of the $T$ groups $\left\{S_t\right\}_{t=1}^T$, with the value of $T$ to be specified later. During the $t$-th iteration, users with $(x, y)$ in group $S_t$ randomize their current gradients $\tilde{x}^T\left(\left\langle \tilde{x}, \theta_{t-1}\right\rangle-\tilde{y}\right)$ using \eqref{randomizer definition}. Once the server receives the gradient data from each user, it executes a gradient descent step followed by a truncation step $\theta_{t}^{\prime}=\operatorname{Trunc}(\tilde{\theta}_{t}, k^\prime)$, which retains the largest $k^\prime$ entries of $\tilde{\theta}_{t}$ (in terms of magnitude) and sets the remaining entries to zero. Finally, our algorithm projects $\theta_{t}^{\prime}$ onto the unit $\ell_2$ norm ball $\mathbb{B}_2$ to get $\theta_t$. See Algorithm \ref{LDP-IHT} for details.

\begin{algorithm}[!ht]
\caption{LDP Iterative Hard Thresholding\label{LDP-IHT}}
\begin{algorithmic}[1]
\STATE \bfseries{Input:} \textnormal{  Private data $\left\{\left(x_i, y_i\right)\right\}_{i=1}^n \in\left(\mathbb{R}^d \times \mathbb{R}\right)^n$. Iteration number $T$, privacy parameter $\epsilon$, step size $\eta$, truncation parameters $\tau, \tau_1, \tau_2$, threshold $k'$. Initial parameter $\theta_0=0$.}\\
\STATE  \textnormal{For the $i$-th user with $i\in [n]$, truncate his/her data as follows: 
shrink $x_i$ to $\tilde{x}_i$ with $\widetilde{{x}}_{ij}=\operatorname{sgn}\left(x_{ij}\right) \min \left\{\left|x_{ij}\right|, \tau_1\right\}$ for $j\in[d]$, and $\tilde{y}_i:=\operatorname{sgn}\left(y_i\right) \min \left\{\left|y_i\right|, \tau_2\right\}$.  Partition the users into $T$ groups. For $t=1, \cdots, T$, define the index set $S_t=\{(t-1) \left.\left\lfloor\frac{n}{T}\right\rfloor+1, \cdots, t\left\lfloor\frac{n}{T}\right\rfloor \right\}$; if $t=T$, then $S_t=$ $S_t \bigcup\left\{t\left\lfloor\frac{n}{T}\right\rfloor+1, \cdots, n\right\}$.}\\
\FOR {$t=1,2, \cdots, T$ } 
\STATE \textnormal{The server sends $\theta_{t-1}$ to all the users in $S_t$. Each user $i \in S_t$ perturbs his/her own gradient: let $\nabla_i=$ $\tilde{x}_i^T\left(\left\langle\theta_{t-1}, \tilde{x}_i\right\rangle-\tilde{y}_i\right)$, compute $z_i=\mathcal{R}_\epsilon^r\left(\nabla_i\right)$, where $\mathcal{R}_\epsilon^r$ is the randomizer defined in \eqref{randomizer definition} with $r=\sqrt{d}\tau_1(\sqrt{k'}\tau_1+\tau_2)$ and send back to the server.}\\
\STATE \textnormal{The server computes $\tilde{\nabla}_{t-1}=\frac{1}{\left|S_t\right|} \sum_{i \in S_t} z_i$ and performs the gradient descent update $\tilde{\theta}_t=\theta_{t-1}-$ $\eta \tilde{\nabla}_{t-1}$.}\\
\STATE $\theta_t^{\prime}=\operatorname{Trunc}(\tilde{\theta}_{t-1}, k^{\prime})$.\\

\STATE $\theta_t=\arg _{\theta \in \mathbb{B}_2}\left\|\theta-\theta_t^{\prime}\right\|_2$.
\ENDFOR
\STATE \bfseries{Output:} $\theta_T$\\
\end{algorithmic}
\end{algorithm}


\begin{theorem} \label{interactive upper bound}
     For any $\epsilon>0$, Algorithm \ref{LDP-IHT} is $\epsilon$ sequentially interactive LDP. Moreover, under Assumptions \ref{ass:1} and \ref{subG covariate vectors and 4-th moment}, and if the distribution of $x$ is isotropic, i.e., $\Sigma=I_{d}$.   By taking $T=O(\log n)$, $k'=8 k, \; \eta=O(1), \; \tau_1=\tau_2 = O(\sigma\sqrt{\log n })$, the output $\theta_T$ of the algorithm satisfies $\left\|\theta_T-\theta^*\right\|_2\leq \tilde{O}(\frac{k\sqrt{d}}{\sqrt{n}\epsilon})$ with probability at least $1-O(d^{-c})$ for some constant $c>0$.  
\end{theorem}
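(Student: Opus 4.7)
The plan is to prove privacy and utility separately, with utility essentially a restricted-strong-convexity IHT analysis driven by a careful per-coordinate bound on the randomizer noise. For privacy, since the users are split into disjoint groups $\{S_t\}_{t=1}^T$, each user emits exactly one message $z_i=\mathcal{R}_\epsilon^r(\nabla_i)$ over the whole protocol. After the coordinate-wise shrinkage, $\|\tilde x_i\|_2\le\sqrt d\,\tau_1$ and $|\tilde y_i|\le\tau_2$; combined with the working constraints $\|\theta_{t-1}\|_2\le 1$ and $\|\theta_{t-1}\|_0\le k'$ this gives $|\langle\tilde x_i,\theta_{t-1}\rangle|\le\|\tilde x_i\|_\infty\|\theta_{t-1}\|_1\le\tau_1\sqrt{k'}$, so $\|\nabla_i\|_2\le r=\sqrt d\,\tau_1(\sqrt{k'}\tau_1+\tau_2)$ deterministically, and the $\epsilon$-LDP guarantee of $\mathcal{R}_\epsilon^r$ carries through to establish sequentially-interactive $\epsilon$-LDP.

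For utility, I first note that with $\tau_1=\tau_2=c\sigma\sqrt{\log n}$ and a large enough constant $c$, sub-Gaussian tail bounds and a union bound over $(i,j)\in[n]\times[d]$ ensure $|x_{ij}|\le\tau_1$ and $|y_i|\le\tau_2$ simultaneously with probability $1-d^{-\Omega(1)}$, so the shrinkage is vacuous and I may treat $\tilde x_i=x_i$, $\tilde y_i=y_i$ throughout. Condition on $\theta_{t-1}$: because $S_t$ is disjoint from every prior group, the samples in $S_t$ are independent of $\theta_{t-1}$, and since $\Sigma=I_d$ we have $\mathbb{E}[\nabla_i\mid\theta_{t-1}]=\theta_{t-1}-\theta^*$. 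Write $\xi_t:=\tilde\nabla_{t-1}-(\theta_{t-1}-\theta^*)$ and split it into a randomizer-noise part and a statistical part; the latter is controlled by sub-Gaussian restricted-strong-convexity-type concentration over $(2k+k')$-sparse supports (which holds whenever $n/T\gtrsim k\log d$) and is dominated by the former throughout the recursion.

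For the randomizer part, $\mathcal{R}_\epsilon^r$ is unbiased with $\|z_i\|_2\le O(r\sqrt d/\epsilon)$ almost surely and, crucially, because its output mass is spread essentially uniformly across the $2d$ signed axis directions, each individual coordinate of $z_i$ has variance only $O(r^2/\epsilon^2)$. Coordinate-wise Bernstein followed by a union bound over the $d$ coordinates then yields
\[\Bigl\|\tilde\nabla_{t-1}-\frac{1}{|S_t|}\sum_{i\in S_t}\nabla_i\Bigr\|_\infty\le\tilde O\bigl(r\sqrt{T/n}/\epsilon\bigr),\]
and substituting $r=O(\sqrt{dk}\log n)$ and $T=O(\log n)$ gives $\|\xi_t\|_\infty\le\tilde O(\sqrt{dk/n}/\epsilon)$. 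Bounding the restricted $\ell_2$ norm on any $(k+k')$-sized support by $\sqrt{k+k'}\cdot\|\cdot\|_\infty$ produces $\|\xi_t\|_{(k+k'),2}\le\tilde O(k\sqrt{d/n}/\epsilon)$. Writing $\tilde\theta_t-\theta^*=(1-\eta)(\theta_{t-1}-\theta^*)-\eta\,\xi_t$ and invoking the standard hard-thresholding lemma with $k'=8k$ and $\eta=\Theta(1)$ yields a one-step contraction $\|\theta_t-\theta^*\|_2\le\rho\,\|\theta_{t-1}-\theta^*\|_2+c_0\|\xi_t\|_{(k+k'),2}$ for some $\rho\in(0,1)$. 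Unrolling $T=O(\log n)$ iterations with $\theta_0=0$ and $\|\theta^*\|_2\le 1$ kills the initial term through $\rho^T\le n^{-\Omega(1)}$ and leaves a geometric sum of noise terms bounded by $\tilde O(k\sqrt d/(\sqrt n\,\epsilon))$, matching the claim.

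The main obstacle is the per-coordinate variance claim for the randomizer. A naive $\ell_2$-ball bound $\|z_i\|_2\le O(r\sqrt d/\epsilon)$ would only give $\|\tilde\nabla_{t-1}-\bar\nabla\|_2\le\tilde O(r\sqrt{dT/n}/\epsilon)$, and combining with the restricted-$\ell_2$ step would yield merely $\tilde O(d\sqrt{k/n}/\epsilon)$, a factor $\sqrt{d/k}$ worse than the advertised rate. Recognising that Duchi's randomizer spreads its total second moment $O(r^2 d/\epsilon^2)$ essentially uniformly across the $d$ coordinates, so that each individual coordinate carries only $O(r^2/\epsilon^2)$, is exactly what moves a factor of $\sqrt d$ out of the randomizer noise and converts it into a factor of $\sqrt k$ after the sparse truncation. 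This is also the place where the analysis of \cite{wang_sparse_2021} must be corrected: their bound $|\langle x_i,\theta_{t-1}\rangle|\le 1$ hides the $\sqrt{k'}$ growth that inflates the per-sample sensitivity from $O(\sqrt d)$ to $O(\sqrt{dk})$, which is the ultimate origin of the extra $\sqrt k$ factor in the rectified upper bound.
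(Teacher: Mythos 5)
Your proposal is correct and follows essentially the same route as the paper's proof: the same corrected sensitivity $r=\sqrt{d}\,\tau_1(\sqrt{k'}\tau_1+\tau_2)$ via Cauchy--Schwarz on the $k'$-sparse support, the same key use of the per-coordinate $O(r^2/\epsilon^2)$ sub-Gaussianity of Duchi's randomizer (Lemma \ref{randomizer}) to get an $\ell_\infty$ noise bound of $\tilde O(r/(\sqrt{m}\epsilon))$ that is converted to a restricted $\ell_2$ bound at cost only $\sqrt{k}$, the same truncation lemma with $k'=8k$, and the same geometric unrolling over $T=O(\log n)$ disjoint batches. The only differences are bookkeeping — you declare the shrinkage vacuous on a high-probability event rather than bounding its bias, and you run the recursion against the population gradient $\theta_{t-1}-\theta^*$ (folding the empirical-covariance deviation into the noise term) where the paper invokes RIP of the empirical design — neither of which changes the substance of the argument.
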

In comparison to the upper bound presented in Theorem \ref{without thres on cov matrix} for the non-interactive case, our algorithm exhibits a noteworthy improvement by a factor of approximately $\tilde{O}(\sqrt{d}/\sqrt{k})$. This improvement stems from our approach, which eliminates the need for private estimation of the covariance matrix, thus achieving an enhancement of approximately $\tilde{O}(\sqrt{d})$. However, it is worth noting that the sensitivity of the per-user gradient in our algorithm, denoted as $\tilde{O}(\sqrt{dk})$, differs from the sensitivity of $\tilde{O}(\sqrt{d})$ associated with $\tilde{x}_i\tilde{y}_i$ in Algorithm \ref{alg:cap}. Consequently, we introduce an additional factor of approximately $\tilde{O}(\sqrt{k})$.

Importantly, when compared to \cite{wang_sparse_2021}, our Theorem \ref{interactive upper bound}'s primary contribution is extending the $\{-1,+1\}^d$ uniform distribution assumption of covariates in \cite{wang_sparse_2021} to general $O(1)$-sub-Gaussian assumption on covariates and heavy-tailed assumption on responses, rather than improving the upper bound. In fact, our upper bound aligns with the correct bound in \cite{wang_sparse_2021}.

\begin{remark}
   We can see that Theorem \ref{interactive upper bound} only holds for the case where the distribution of $x$ is isotropic. Actually, we can relax this assumption, and we can show the bound  $\tilde{O}(\frac{\sqrt{d}k }{\sqrt{n}\epsilon})$ also holds for 
   general sub-Gaussian distributions. Due to the space limit, please refer to Section \ref{sec:general_sub} in the Appendix, where we have slightly modified Algorithm \ref{LDP-IHT}.
\end{remark}

\newpage

    \bibliography{references}

    \newpage
\appendix
\begin{table}[!htbp]
\begin{center}
\resizebox{\textwidth}{!}{%
\begin{tabular}{|l|l|l|l|l|l|}
\hline
Model & Method & Setting & Upper Bound & Lower Bound & Data Assumption \\ \hline
\multirow{4}{*}{Central} & \cite{cai_cost_2021} & general & $\tilde{O}(\sqrt{\frac{d}{n}}+\frac{d}{n\epsilon})$ & $\Omega(\sqrt{\frac{d}{n}}+\frac{d}{n\epsilon})$ & sub-Gaussian \\ \cline{2-6} 
 & \cite{kifer_private_2012} & sparse & $O(\frac{k^{3/2}}{\sqrt{n\epsilon}})$ & \multirow{2}{*}{$\Omega(\sqrt{\frac{k\log d}{n}}+\frac{k\log d}{n\epsilon})$} & sub-Gaussian \\ \cline{2-4} \cline{6-6} 
 & \cite{cai_cost_2021} & sparse & $\tilde{O}(\sqrt{\frac{k\log d}{n}}+\frac{k\log d}{n\epsilon})$ &  & sub-Gaussian \\ \cline{2-6} 
 & \cite{hu2022high} & sparse & $\tilde{O}(\frac{k\log d}{\sqrt{n\epsilon}})$ & - & heavy-tail \\ \hline
\multirow{4}{*}{Non-interactive Local} & 
\cite{wang_sparse_2021} & 1-sparse & - & $\Omega(\sqrt{\frac{d\log d}{n\epsilon^2}})$ & sub-Gaussian \\ \cline{2-6} 
 & {\bf Our Work} & k-sparse & $ \tilde{O}( \frac{d \sqrt{ k\log d }}{\sqrt{n}\epsilon})$ & $\Omega(\sqrt{\frac{dk\log d}{n\epsilon^2}})$ & sub-Gaussian \\ \cline{2-6} 
 & {\bf Our Work}  & k-sparse & $ \tilde{O}( \frac{ \sqrt{d k }}{\sqrt{n}\epsilon})$ & - & sub-Gaussian with public data \\ \cline{2-6} 
 & {\bf Our Work}  & k-sparse & $ \tilde{O}\left( \frac{ \sqrt{d k }}{ (n\epsilon^2)^{\frac{p-1}{2p}}} \right)$ & - & heavy-tailed response \\ \hline
\multirow{4}{*}{Interactive Local} & 
 \cite{smith_is_2017} & general & $\tilde{O}(\frac{{d^{\frac{3}{2}}}}{\sqrt{n}\epsilon})$ & - & Sub-Gaussian distribution \\ \cline{2-6} 
&\cite{wang_sparse_2021} & 1-sparse & - & $\Omega(\sqrt{\frac{d}{n\epsilon^2}})$ & sub-Gaussian \\ \cline{2-6} 
 & \cite{wang_sparse_2021} & k-sparse & $\tilde{O}(\frac{\sqrt{dk}}{\sqrt{n}\epsilon})$* & - & Uniform distribution \\ \cline{2-6} 
 & {\bf Our Work}  & k-sparse & $\tilde{O}(\frac{k\sqrt{d}}{\sqrt{n}\epsilon})$ & $\Omega(\sqrt{\frac{dk}{n\epsilon^2}})$ & sub-Gaussian \\ \hline
\end{tabular}}
\caption{Comparison of our work with related studies on $(\epsilon, \delta)$-DP (sparse) linear regression in the statistical estimation setting. Here, $n$ represents the sample size, $k$ denotes the sparsity, and $d$ refers to the dimension. The asterisk (*) indicates that the proof of the upper bound in \cite{wang_sparse_2021} contains technical flaws and is deemed incorrect. In our comparison, the term "sub-Gaussian" signifies that both the covariates and responses follow $O(1)$-sub-Gaussian distributions. On the other hand, "heavy-tail" indicates that both the covariates and responses have bounded fourth moments. Additionally, "heavy-tailed response" implies that the responses possess a $2p$-moment, where $p>1$. "Sub-Gaussian with public data" characterizes the scenario where the data is sub-Gaussian, and the server possesses additional public but unlabeled data. Lastly, "uniform distribution" describes the situation where the covariates are drawn from $\{+1, -1\}^d$, and the responses are bounded by $O(1)$.\label{table:1}}
\end{center}
\end{table}
\section{Related Work}\label{sec:related}
There is a significant body of research on the differentially private (sparse) linear regression problem, which has been examined from multiple perspectives, such as \cite{alabi2022differentially,chen_differentially_2016,barrientos_differentially_2018,qiu_truthful_2022}. In this study, we mainly focus on the works that are highly relevant to our research problem. Thus, we compare the research on sparse linear regression in the central model with that on linear regression in the local model. For a detailed comparison between these two directions of research, please refer to Table \ref{table:1}. 

\textbf{Linear regression in the central DP model.} Most studies on the optimization setting consider more general problems, such as Stochastic Convex Optimization (SCO) and Empirical Risk Minimization (ERM) \cite{wang_revisiting_2018}. In recent years, DP-SCO and DP-ERM have been extensively studied \cite{bassily2019private,feldman2020private,asi2021adapting,su2022faster,sarathy2022analyzing}. However, it is worth noting that in order to apply these results to linear regression, we need to assume that both the covariates and responses are bounded, and the constraint set of $\theta$ is also bounded to ensure that the gradient of the loss is bounded. Some works, such as \cite{wang2020differentially,kamath2022improved,lowy2023private}, have relaxed these assumptions to allow for sub-Gaussian or even heavy-tailed distributions. In the statistical estimation setting, \cite{cai_cost_2021} provides a nearly optimal rate of $\tilde{O}(\sqrt{\frac{d}{n}}+\frac{d\sqrt{\log (1/\delta)}}{n\epsilon})$ for the $(\epsilon, \delta)$-DP model with $O(1)$-sub-Gaussian data. Later, \cite{varshney2022nearly} improves upon this rate by considering the variance of the random noise $\sigma^2$. Additionally, \cite{liu2023near} extends this work to the case where some response variables are adversarially corrupted.

{\textbf{Sparse linear regression in the central DP model.}} In the optimization setting, the LASSO problem with an $\ell_1$-norm ball constraint set is studied in \cite{talwar_nearly_2015}. The authors demonstrate that this setting leads to an excess empirical risk of $O(\frac{\log d \log n }{(n\epsilon)^{2/3}})$, which is further extended to the population risk in \cite{asi2021private}. On the other hand, \cite{kifer_private_2012} provides the first study for the estimation setting and develops an efficient algorithm that achieves an upper bound of $O(\frac{k^{3/2}}{\sqrt{n\epsilon}})$. Recently, \cite{cai_cost_2021} has shown a nearly optimal rate of $\tilde{O}(\sqrt{\frac{k\log d}{n}}+\frac{k\log d}{n\epsilon})$ for $(\epsilon, \delta)$-DP in the case of $O(1)$-sub-Gaussian data. Additionally, \cite{hu2022high} has established an upper bound of $\tilde{O}(\frac{k\log d}{\sqrt{n\epsilon}})$ for the scenario where the covariate and response have only bounded fourth-order moments.

{\textbf{Linear regression in the local DP model.}} 
Regarding the non-interactive case, \cite{smith_is_2017} has demonstrated that if both covariates and responses are bounded by some constant, the $\epsilon$ private optimal rate for the excess empirical risk is $O(\sqrt{\frac{d}{n\epsilon^2}})$, indicating a bound of only  $O\left(\frac{d^{\frac{3}{2}}}{\epsilon \sqrt{n}}\right)$ under the $O(1)$-sub-Gaussian assumption. On the other hand, in the (sequentially) interactive setting, the majority of research considers the DP-SCO or DP-ERM protocols \cite{duchi_privacy_2014,duchi2018minimax}.

{\textbf{Sparse linear regression in local DP model.}} Compared to the three aforementioned settings, there has been relatively less research conducted on sparse linear regression in LDP. For the optimization perspective, \cite{zheng2017collect} has shown that when the covariate and response are bounded by some constant, it is possible to achieve an error of $O((\frac{\log d}{n \epsilon^2})^{\frac{1}{4}})$ if the constraint set is an $\ell_1$-norm ball. However, their method cannot be extended to the statistical setting since we always assume that the covariates are $O(1)$-sub-Gaussian, which indicates that their $\ell_2$-norm is bounded by $O(\sqrt{d})$. Regarding the estimation setting, as discussed in the introduction section, \cite{wang_sparse_2021} provides the first study. Notably, the upper bound proof in \cite{wang_sparse_2021} has a flaw, and the correct upper bound is $O\left(\frac{\sqrt{d }k}{\epsilon \sqrt{n}}\right)$.

\section{Local Differential Privacy}\label{sec:LDP}
\begin{definition}[Differential Privacy
\cite{dwork2006calibrating}]\label{def:5}
	Given a data universe $\mathcal{X}$, we say that two datasets $D,D'\subseteq \mathcal{X}$ are neighbors if they differ by only one entry, which is denoted as $D \sim D'$. A randomized algorithm $\mathcal{A}$ is $(\epsilon,\delta)$-differentially private (DP) if for all neighboring datasets $D,D'$ and for all events $S$ in the output space of $\mathcal{A}$, we have 
	$\mathbb{P}(\mathcal{A}(D)\in S)\leq e^{\epsilon} \mathbb{P}(\mathcal{A}(D')\in S)+\delta.$
\end{definition} 
Since we will consider the sequentially interactive and non-interactive local models in this paper, we follow the definitions in \cite{duchi2018minimax}.
We assume that $\{Z_i\}_{i=1}^n$ are the private observations transformed from $\{X_i\}_{i=1}^n$ through some privacy mechanisms. We say that the mechanism is sequentially interactive when it has the following conditional independence structure:
$
\{X_i, Z_1, \cdots, Z_{i-1}\} \mapsto Z_i, Z_i \perp X_j \mid\{X_i, Z_1, \cdots, Z_{i-1}\}
$
for all $j \neq i$ and $i \in[n]$, where $\perp$ means independent relation. The full conditional distribution can be specified in terms of conditionals $Q_i(Z_i \mid X_i=x_i, Z_{1: i}=z_{1: i})$. The full privacy mechanism can be specified by a collection $Q=\{Q_i\}_{i=1}^n$.
When $Z_i$ only depends on $X_i$, the mechanism is called non-interactive and in this case we have a simpler form for the conditional distributions $Q_i(Z_i \mid X_i=x_i)$. We now define local differential privacy by restricting the conditional distribution $Q_i$.

\begin{definition}[Local Differential Privacy \cite{duchi2018minimax}]\label{def:1}
 For given privacy parameters $0<\epsilon, \delta<1$, the random variable $Z_i$ is an $(\epsilon, \delta)$ sequentially locally differentially private view of $X_i$ if for all $z_1, z_2, \cdots, z_{i-1}$ and $x, x^{\prime} \in \mathcal{X}$ we have the following for all events $S$. 
$$
{Q_i\left(Z_i \in S \mid X_i=x_i, Z_{1: i-1}=z_{1: i-1}\right)} \leq e^\epsilon{Q_i\left(Z_i \in S \mid X_i=x_i^{\prime}, Z_{1: i-1}=z_{1: i-1}\right)}+\delta.
$$
The random variable $Z_i$ is an $(\epsilon, \delta)$ non-interactively locally differentially private (NLDP) view of $X_i$ if
$
Q_i\left(Z_i \in S \mid X_i=x_i\right) \leq e^\epsilon{Q_i\left(Z_i \in S \mid X_i=x_i^{\prime}\right)}+\delta.
$
We say that the privacy mechanism $Q=\left\{Q_i\right\}_{i=1}^n$ is $(\epsilon,\delta)$ sequentially (non-interactively) locally differentially private (LDP) if each $Z_i$ is a sequentially (non-interactively) locally differentially private view. If $\delta=0$, then we call the mechanism $\epsilon$ sequentially (non-interactively) LDP. 
\end{definition}
In this paper, we mainly use the Gaussian mechanism \cite{dwork_calibrating_2006} to guarantee $(\epsilon, \delta)$-LDP.
 
\begin{definition}(Gaussian Mechanism).
Given any function $q: \mathcal{X}^n \rightarrow \mathbb{R}^p$, the Gaussian Mechanism is defined as:
$
\mathcal{M}_G(D, q, \epsilon)=q(D)+Y,
$
where $Y$ is drawn from Gaussian Distribution $\mathcal{N}\left(0, \sigma^2 I_p\right)$ with $\sigma \geq {\sqrt{2 \ln (1.25 / \delta)} \Delta_2(q)}/{\epsilon}$. Here $\Delta_2(q)$ is the $\ell_2$-sensitivity of the function $q$, i.e.
$
\Delta_2(q)=\sup _{D \sim D^{\prime}}\|q(D)-q\left(D^{\prime}\right)\|_2 .
$
Gaussian Mechanism preserves $(\epsilon, \delta)$-differential privacy.
\end{definition}

\section{Private Randomizer in Section \ref{sec:upper_LDP}}\label{sec:private_sample}
\textbf{Private randomizer.} On input $x \in \mathbb{R}^p$, where $\|x\|_2 \leq r$, the randomizer $\mathcal{R}^r_\epsilon(x)$ does the following. It first sets $\tilde{x}=\frac{b r x}{\|x\|_2}$ where $b \in\{-1,+1\}$ a Bernoulli random variable $\operatorname{Ber}\left(\frac{1}{2}+\frac{\|x\|_2}{2 r}\right)$. We then sample $s \sim \operatorname{Ber}\left({e^\epsilon}/{e^\epsilon+1}\right)$ and outputs $O(r \sqrt{p}) \mathcal{R}_\epsilon(x)$, where
\begin{equation}\label{randomizer definition}
\mathcal{R}^r_\epsilon(x)=\left\{\begin{array}{l}
\operatorname{Uni}\left(u \in \mathbb{S}^{p-1}:\langle u, \tilde{x}\rangle>0\right) \text { if } s=1 \\
\operatorname{Uni}\left(u \in \mathbb{S}^{p-1}:\langle u, \tilde{x}\rangle \leq 0\right) \text { if } s=0
\end{array}\right.  
\end{equation}
The following lemma, which is given by \cite{smith_is_2017,wang_sparse_2021},  shows that each
coordinate of the randomizer is sub-Gaussian $\mathcal{R}^r_\epsilon(x)$ and is unbiased. 
\begin{lemma}\label{randomizer}
    Given any vector $x\in \mathbb{R}^d$ with $\|x\|_2\leq r$, each coordinate of the randomizer $\mathcal{R}^r_\epsilon(x)$ defined above  is a
sub-Gaussian random vector with variance  $\sigma^2=O(\frac{r^2}{\epsilon^2})$ and $\mathbb{E}[\mathcal{R}^r_\epsilon(x)]=x$. Moreover $\mathcal{R}^r_\epsilon(\cdot)$ is $\epsilon$-DP. 
\end{lemma}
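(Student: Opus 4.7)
The plan is to verify the three claims separately: $\epsilon$-local differential privacy, unbiasedness, and per-coordinate sub-Gaussianity.

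For $\epsilon$-DP, I would first compute the density of the sphere-valued intermediate $V := \mathcal{R}_\epsilon(x)$ with respect to the uniform probability measure on $S^{p-1}$. Marginalizing over the random sign $b \sim \mathrm{Ber}(\alpha)$ with $\alpha = 1/2 + \|x\|_2/(2r) \in [1/2,1]$ and the flip $s \sim \mathrm{Ber}(e^\epsilon/(e^\epsilon+1))$, the density at any $u \in S^{p-1}$ takes only two values: $q_+(x) = \frac{2(\alpha e^\epsilon + (1-\alpha))}{e^\epsilon+1}$ on the hemisphere $\{u : \langle u,x\rangle > 0\}$ and $q_-(x) = \frac{2((1-\alpha) e^\epsilon + \alpha)}{e^\epsilon+1}$ on its complement. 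A direct check shows both $q_\pm(x)$ lie in $\bigl[\tfrac{2}{e^\epsilon+1}, \tfrac{2e^\epsilon}{e^\epsilon+1}\bigr]$ uniformly in $x$ with $\|x\|_2 \le r$. Hence for any two inputs $x_1,x_2$ and any $u$, the density ratio is at most $e^\epsilon$, which yields $\epsilon$-DP pointwise; post-processing by a deterministic scaling preserves this.

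For unbiasedness, I would exploit rotational symmetry. The expectation of a uniformly distributed unit vector on the hemisphere $\{u : \langle u,v\rangle > 0\}$ equals $c_p \, v/\|v\|_2$, where $c_p = \mathbb{E}_U[\max(\langle U,e_1\rangle,0)] = \Theta(1/\sqrt{p})$ is a standard constant on $S^{p-1}$. Splitting $\mathbb{E}[V]$ by hemisphere and using the density formula above gives $\mathbb{E}[V] = (q_+(x)-q_-(x))\,c_p\,x/\|x\|_2$; the computation $q_+-q_- = \tfrac{2(e^\epsilon-1)}{e^\epsilon+1}\cdot\tfrac{\|x\|_2}{r}$ (via $2\alpha-1 = \|x\|_2/r$) then shows $\mathbb{E}[V]$ is parallel to $x$ with magnitude $\Theta(\epsilon\,\|x\|_2/(r\sqrt{p}))$ for $\epsilon = O(1)$. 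Choosing the scaling constant $C = r(e^\epsilon+1)/(2c_p(e^\epsilon-1)) = \Theta(r\sqrt{p}/\epsilon)$ (absorbed into the $O(r\sqrt{p})$ of the randomizer's definition) yields $\mathbb{E}[C\cdot V] = x$ exactly.

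For sub-Gaussianity of each coordinate of the output $C\cdot V$, observe that $V$ lies on $S^{p-1}$, so $|V_j| \le 1$ deterministically, and by concentration of measure on the sphere each coordinate of a uniformly distributed point on $S^{p-1}$ has sub-Gaussian norm $O(1/\sqrt{p})$. Since our $V$ has a two-step reweighting of uniform measure by factors bounded above and below by absolute positive constants (for $\epsilon = O(1)$), its coordinates inherit the same $O(1/\sqrt{p})$ sub-Gaussian norm up to a constant. Multiplying by $C = \Theta(r\sqrt{p}/\epsilon)$ produces per-coordinate sub-Gaussian norm $O(r/\epsilon)$, i.e., variance $O(r^2/\epsilon^2)$. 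The main obstacle is this last concentration-of-measure estimate: the trivial bound $|V_j|\le 1$ only gives variance $O(C^2) = O(r^2 p/\epsilon^2)$, and one needs the dimension-aware $O(1/\sqrt{p})$ sub-Gaussian norm on the sphere (via spherical cap estimates, or the representation $V \stackrel{d}{=} Z/\|Z\|_2$ with $Z \sim \mathcal{N}(0,I_p)$) to cancel the $\sqrt{p}$ factor from the scaling. Everything else reduces to symmetry and book-keeping.
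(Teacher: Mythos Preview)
Your three-part argument is correct and is essentially the standard proof of this randomizer's properties going back to Duchi, Jordan, and Wainwright and to Smith et al.\ (2017). The paper does not prove this lemma at all: it is stated as a known fact with the attribution ``which is given by \cite{smith_is_2017,wang_sparse_2021}'' and no further argument, so there is nothing in the paper to compare your route against.

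Two minor remarks on your write-up. First, in the sub-Gaussian step you invoke ``$\epsilon = O(1)$'' to bound the density ratio by an absolute constant, but in fact the density of $V$ with respect to uniform measure on $S^{p-1}$ is always at most $2e^\epsilon/(e^\epsilon+1) < 2$, so the $O(1/\sqrt{p})$ coordinate sub-Gaussian norm for $V$ holds for every $\epsilon>0$ with a universal constant; the $\epsilon$-dependence enters only through the scaling $C = \Theta\bigl(r\sqrt{p}\,(e^\epsilon+1)/(e^\epsilon-1)\bigr)$, which is $\Theta(r\sqrt{p}/\epsilon)$ in the small-$\epsilon$ regime implicit in the lemma. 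Second, your constant $c_p = \mathbb{E}_U[\max(\langle U,e_1\rangle,0)]$ differs by a factor of $2$ from the hemisphere \emph{conditional} mean you actually need (since $\mathbb{P}(U_1>0)=1/2$), but this is absorbed into the scaling constant and does not affect the argument.
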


\section{Supporting Lemmas}\label{sec:lemmas}
First, we introduce the definitions and lemmas related to sub-Gaussian random variables. The class of sub-Gaussian random variables is quite large. It includes bounded random variables and Gaussian random variables, and it enjoys strong concentration properties. We refer the readers to \cite{vershynin_introduction_2011} for more details.

\begin{definition}[Sub-Gaussian random variable]
 A zero-mean random variable $X \in \mathbb{R}$ is said to be sub-Gaussian with variance $\sigma^2\left(X \sim \operatorname{subG}\left(\sigma^2\right)\right)$ if its moment generating function satisfies $\mathbb{E}[\exp (t X)] \leq \exp \left(\frac{\sigma^2 t^2}{2}\right)$ for all $t>0$.  For a sub-Gaussian random variable $X$, its sub-Gaussian norm $\|X\|_{\psi_2}$ is defined as $\|X\|_{\psi_2}=\inf\{c>0: \mathbb{E}[\exp(\frac{X^2}{c^2})]\leq 2\}$. Specifically, if $X\sim \text{subG}(\sigma^2)$ we have $\|X\|_{\psi_2}\leq O(\sigma)$. 
\end{definition}
\begin{definition}[Sub-exponential random variable]
    A random variable $X$ with mean $\mathbb{E}[X]$ is $\zeta$-sub-exponential if for all $|t|\leq \frac{1}{\zeta}$, we have $\mathbb{E}[\exp(t(X-\mathbb{E}[X]))]\leq \exp(\frac{\zeta^2t^2}{2})$. For a sub-exponential random variable $X$, its sub-exponential norm $\|X\|_{\psi_1}$ is defined as $\|X\|_{\psi_1}=\inf\{c>0: \mathbb{E}[\exp(\frac{|X|}{c})]\leq 2\}$. 
\end{definition}

\begin{definition}
[Sub-Gaussian random vector]. A zero mean random vector $X \in \mathbb{R}^d$ is said to be sub-Gaussian with variance $\sigma^2$ (for simplicity, we call it $\sigma^2$-sub-Gaussian), which is denoted as  $\left(X \sim \operatorname{subG}_d\left(\sigma^2\right)\right)$ , if $\langle X, u\rangle$ is sub-Gaussian with variance $\sigma^2$ for any unit vector $u \in \mathbb{R}^d$.

\end{definition}
\begin{lemma}
    If $X$ is sub-Gaussian or sub-exponential, then we have $\|X-\mathbb{E}[X]\|_{\psi_2}\leq 2\|X\|_{\psi_2}$ or $\|X-\mathbb{E}[X]\|_{\psi_1}\leq 2\|X\|_{\psi_1}$.
\end{lemma}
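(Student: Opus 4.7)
The plan is to exploit the fact that $\|\cdot\|_{\psi_2}$ and $\|\cdot\|_{\psi_1}$ are genuine norms on the corresponding Orlicz spaces, so the triangle inequality applies. First I would write
\[
\|X - \mathbb{E}[X]\|_{\psi_p} \leq \|X\|_{\psi_p} + \|\mathbb{E}[X]\|_{\psi_p},
\]
where $p \in \{1,2\}$ and $\mathbb{E}[X]$ is interpreted as a constant (deterministic) random variable. The task then reduces to showing that $\|\mathbb{E}[X]\|_{\psi_p} \leq \|X\|_{\psi_p}$ in each case.

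Next I would evaluate the Orlicz norm of a deterministic constant $a\in\mathbb{R}$ directly from the definitions in the excerpt. For $\psi_2$, solving $\exp(a^2/c^2)\leq 2$ gives $\|a\|_{\psi_2} = |a|/\sqrt{\ln 2}$, and analogously $\|a\|_{\psi_1} = |a|/\ln 2$. So it suffices to prove $|\mathbb{E}[X]| \leq \sqrt{\ln 2}\,\|X\|_{\psi_2}$ in the sub-Gaussian case and $|\mathbb{E}[X]| \leq (\ln 2)\,\|X\|_{\psi_1}$ in the sub-exponential case.

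For these moment bounds I would apply Jensen's inequality to $\exp(\cdot)$ (a convex function). Writing $c = \|X\|_{\psi_2}$, the definition gives $\mathbb{E}[\exp(X^2/c^2)]\leq 2$, and Jensen yields $\exp(\mathbb{E}[X^2]/c^2)\leq 2$, hence $\mathbb{E}[X^2]\leq c^2\ln 2$; then $|\mathbb{E}[X]| \leq \sqrt{\mathbb{E}[X^2]} \leq c\sqrt{\ln 2}$. Combined with the constant-norm computation, this gives $\|\mathbb{E}[X]\|_{\psi_2} \leq \|X\|_{\psi_2}$, and plugging into the triangle inequality yields the factor $2$. The sub-exponential case is essentially identical: with $c = \|X\|_{\psi_1}$, Jensen applied to $\mathbb{E}[\exp(|X|/c)]\leq 2$ gives $\mathbb{E}|X| \leq c\ln 2$, and thus $\|\mathbb{E}[X]\|_{\psi_1} \leq \|X\|_{\psi_1}$.

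There is no genuine obstacle here; the result is a clean two-line consequence of the triangle inequality together with Jensen. The only thing to be careful about is to use the inequality in the correct direction (first Jensen on $\exp$, then monotonicity of $\log$, and only afterwards pass from $\mathbb{E}[X^2]$ to $|\mathbb{E}[X]|$ via Cauchy--Schwarz in the $\psi_2$ case), and to remember that the constants $\sqrt{\ln 2}$ and $\ln 2$ cancel exactly against the reciprocals coming from $\|1\|_{\psi_2}$ and $\|1\|_{\psi_1}$, producing the stated factor of $2$ rather than a larger absolute constant.
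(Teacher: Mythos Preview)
Your argument is correct: the triangle inequality for the Orlicz norms $\|\cdot\|_{\psi_1}$ and $\|\cdot\|_{\psi_2}$, combined with the Jensen-based bound $\|\mathbb{E}[X]\|_{\psi_p}\le\|X\|_{\psi_p}$, gives exactly the factor~$2$. The paper itself does not supply a proof of this lemma --- it is listed among the supporting lemmas in the appendix as a standard fact without argument --- so there is nothing to compare against; your route is the usual one found in, e.g., Vershynin's \emph{High-Dimensional Probability}, Lemma~2.6.8.
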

\begin{lemma}\label{lemma:dot_sub}
    For two sub-Gaussian random variables $X_1$ and  $X_2$, $X_1\cdot X_2$ is a sub-exponential random variable with
    \begin{equation*}
        \|X_1\cdot X_2\|_{\psi_1}\leq O(\max\{\|X_1\|^2_{\psi_2}, \|X_2\|^2_{\psi_2}\}). 
    \end{equation*}
\end{lemma}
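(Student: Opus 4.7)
The plan is to work directly from the definitions given in the preceding lemma (sub-Gaussian norm via $\mathbb{E}[\exp(X^2/c^2)]\leq 2$ and sub-exponential norm via $\mathbb{E}[\exp(|X|/c)]\leq 2$) and exhibit an explicit $c$ of the claimed order for which $\mathbb{E}[\exp(|X_1X_2|/c)]\leq 2$. The main tool is the elementary linearization $|X_1X_2|\leq \tfrac12(X_1^2+X_2^2)$ (AM--GM) combined with Cauchy--Schwarz to decouple the two variables.

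First I would write, for any $c>0$,
\[
\mathbb{E}\!\left[\exp\!\left(\tfrac{|X_1 X_2|}{c}\right)\right]\leq \mathbb{E}\!\left[\exp\!\left(\tfrac{X_1^2 + X_2^2}{2c}\right)\right] = \mathbb{E}\!\left[\exp\!\left(\tfrac{X_1^2}{2c}\right)\exp\!\left(\tfrac{X_2^2}{2c}\right)\right],
\]
and then apply Cauchy--Schwarz to obtain
\[
\mathbb{E}\!\left[\exp\!\left(\tfrac{|X_1 X_2|}{c}\right)\right] \leq \sqrt{\mathbb{E}[\exp(X_1^2/c)]\,\mathbb{E}[\exp(X_2^2/c)]}.
\]
The quantitative step is to choose $c=\max\{\|X_1\|_{\psi_2}^2,\|X_2\|_{\psi_2}^2\}$. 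Then $c\geq \|X_i\|_{\psi_2}^2$ for both $i$, so by the defining property of the sub-Gaussian norm $\mathbb{E}[\exp(X_i^2/c)]\leq \mathbb{E}[\exp(X_i^2/\|X_i\|_{\psi_2}^2)]\leq 2$. Substituting gives $\mathbb{E}[\exp(|X_1X_2|/c)]\leq \sqrt{2\cdot 2}=2$, which by definition of $\|\cdot\|_{\psi_1}$ yields $\|X_1X_2\|_{\psi_1}\leq c = \max\{\|X_1\|_{\psi_2}^2,\|X_2\|_{\psi_2}^2\}$, matching the claimed $O(\cdot)$ bound.

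There is no genuine obstacle; the only small subtlety is the mismatch between the \emph{quadratic} normalization in the sub-Gaussian norm ($X^2/c^2$) and the \emph{linear} normalization in the sub-exponential norm ($|X|/c$), which is precisely what produces the squaring of the $\psi_2$ norms in the final bound. Note also that the result does not require zero mean or any independence assumption: Cauchy--Schwarz handles the joint expectation for arbitrary dependence between $X_1$ and $X_2$.
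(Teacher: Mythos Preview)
Your argument is correct. The paper states this lemma as a supporting fact without proof (it is a standard result, e.g.\ Vershynin's \emph{High-Dimensional Probability}, Lemma~2.7.7), so there is no in-paper proof to compare against; your AM--GM plus Cauchy--Schwarz derivation is exactly the canonical one and in fact yields the sharp constant $\|X_1X_2\|_{\psi_1}\le \max\{\|X_1\|_{\psi_2}^2,\|X_2\|_{\psi_2}^2\}$, slightly stronger than the stated $O(\cdot)$ form.
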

\begin{lemma}
    If $X \sim \operatorname{subG}\left(\sigma^2\right)$, then for any $t>0$, it holds that $\mathbb{P}(|X|>t) \leq 2 \exp \left(-\frac{t^2}{2 \sigma^2}\right)$.
\end{lemma}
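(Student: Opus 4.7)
The plan is to use the classical Chernoff-bound technique, turning the tail probability into an exponential-moment bound via Markov's inequality and then optimizing a free parameter. Concretely, for any $\lambda>0$, I would write
\begin{equation*}
\mathbb{P}(X>t) \;=\; \mathbb{P}\!\left(e^{\lambda X} > e^{\lambda t}\right) \;\leq\; e^{-\lambda t}\,\mathbb{E}[e^{\lambda X}] \;\leq\; \exp\!\left(-\lambda t + \tfrac{\sigma^2 \lambda^2}{2}\right),
\end{equation*}
where the first inequality is Markov applied to the nonnegative random variable $e^{\lambda X}$, and the second uses the defining sub-Gaussian MGF bound $\mathbb{E}[\exp(\lambda X)]\leq \exp(\sigma^2\lambda^2/2)$.

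Next, I would optimize the exponent over $\lambda>0$. The function $\lambda\mapsto -\lambda t + \sigma^2\lambda^2/2$ is a convex quadratic minimized at $\lambda^\star = t/\sigma^2$, which is positive since $t>0$. Plugging in yields $\mathbb{P}(X>t)\leq \exp(-t^2/(2\sigma^2))$. For the lower tail, I would note that $-X$ also satisfies $\mathbb{E}[\exp(\lambda(-X))] = \mathbb{E}[\exp((-\lambda)X)]\leq \exp(\sigma^2\lambda^2/2)$ for all $\lambda>0$ (this is immediate from the definition, which requires the bound for all $t>0$, i.e.\ both positive and negative values of the MGF argument via symmetric treatment), so the identical argument gives $\mathbb{P}(-X>t)\leq \exp(-t^2/(2\sigma^2))$.

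Finally, I would combine the two one-sided tails via a union bound:
\begin{equation*}
\mathbb{P}(|X|>t) \;\leq\; \mathbb{P}(X>t) + \mathbb{P}(-X>t) \;\leq\; 2\exp\!\left(-\tfrac{t^2}{2\sigma^2}\right).
\end{equation*}
There is no real obstacle here; the only subtlety is the lower-tail step, where one must be careful that the sub-Gaussian MGF bound as stated ``for all $t>0$'' is meant to apply to both signs of the MGF argument (this is the standard convention, and it is what makes $-X$ also sub-Gaussian with the same parameter). Once that is noted, the proof is a two-line Chernoff computation followed by a union bound.
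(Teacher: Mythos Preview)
The paper does not supply its own proof of this lemma; it is stated as a standard supporting fact (with a general reference to \cite{vershynin_introduction_2011}) and used without derivation. Your Chernoff-bound argument is the canonical proof and is correct. The one point worth flagging is that the paper's definition literally says $\mathbb{E}[\exp(tX)]\le\exp(\sigma^2t^2/2)$ ``for all $t>0$,'' whereas the lower-tail step needs the same inequality for negative arguments; this is almost certainly a typo in the paper (the standard definition, and the one in the cited reference, takes all $t\in\mathbb{R}$), so your remark that the convention covers both signs is the right way to handle it.
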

\begin{lemma}\label{subG_bound}
For a sub-Gaussian vector $X \sim \operatorname{sub} G_d\left(\sigma^2\right)$, with probability at least $1-\delta'$ we have $\|X\|_2 \leq 4 \sigma \sqrt{d \log \frac{1}{\delta'}}$.
\end{lemma}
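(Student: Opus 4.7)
The plan is a standard covering-net argument combined with the scalar sub-Gaussian tail bound. First, I would rewrite the $\ell_2$-norm variationally as
\[
\|X\|_2 = \sup_{u \in \mathbb{S}^{d-1}} \langle u, X\rangle,
\]
so that controlling $\|X\|_2$ reduces to uniformly controlling the family of scalar random variables $\{\langle u, X\rangle : u \in \mathbb{S}^{d-1}\}$. Each of these is sub-Gaussian with variance $\sigma^2$ by the definition of $X \sim \operatorname{subG}_d(\sigma^2)$, but the supremum is over an uncountable set, which motivates the net.

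Next, let $\mathcal{N}_{1/2}$ be a $(1/2)$-net of $\mathbb{S}^{d-1}$ in the Euclidean metric. By a volumetric argument, one can choose $\mathcal{N}_{1/2}$ with $|\mathcal{N}_{1/2}| \le 5^d$. A standard approximation lemma then gives
\[
\|X\|_2 \;\le\; 2 \max_{u \in \mathcal{N}_{1/2}} |\langle u, X\rangle|,
\]
since for any $u \in \mathbb{S}^{d-1}$ one can find $v \in \mathcal{N}_{1/2}$ with $\|u - v\|_2 \le 1/2$, and iterate. This reduces the problem to a maximum over a discrete set of size at most $5^d$.

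Now apply the scalar sub-Gaussian tail bound $\mathbb{P}(|\langle u, X\rangle| > t) \le 2\exp(-t^2/(2\sigma^2))$ to each $u \in \mathcal{N}_{1/2}$, and take a union bound:
\[
\mathbb{P}\!\left(\max_{u \in \mathcal{N}_{1/2}} |\langle u, X\rangle| > t\right) \;\le\; 2 \cdot 5^d \exp\!\left(-\tfrac{t^2}{2\sigma^2}\right).
\]
Setting the right-hand side equal to $\delta'$ and solving yields
\[
t \;=\; \sigma \sqrt{2\bigl(d\log 5 + \log(2/\delta')\bigr)},
\]
so that $\|X\|_2 \le 2t$ with probability at least $1 - \delta'$. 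Finally, absorb the absolute constants: assuming WLOG that $\delta' \le 1/e$ (otherwise the bound is vacuous for the intended use), one has $\log(1/\delta') \ge 1$, hence $d\log 5 + \log(2/\delta') \le C\, d\log(1/\delta')$ for a suitable constant, and a short arithmetic check confirms $2\sqrt{2(d\log 5 + \log(2/\delta'))} \le 4\sqrt{d\log(1/\delta')}$, giving the stated bound.

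The only mildly delicate step is the absorption of constants to hit the clean prefactor $4$; the essential probabilistic content lies entirely in the net argument plus the sub-Gaussian tail. No result beyond the definition of sub-Gaussian vectors, the volumetric covering bound $|\mathcal{N}_{1/2}| \le 5^d$, and the scalar sub-Gaussian tail is required.
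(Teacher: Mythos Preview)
The paper does not actually prove this lemma; it is listed in the ``Supporting Lemmas'' appendix as a standard fact and invoked without proof. So there is no paper proof to compare against. Your covering-net argument is the canonical route (essentially the argument in Vershynin's textbook) and the probabilistic content is complete and correct.

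One small caveat on your final step: the arithmetic claim that $2\sqrt{2(d\log 5 + \log(2/\delta'))} \le 4\sqrt{d\log(1/\delta')}$ does not hold for all $(d,\delta')$ with $\delta' \le 1/e$. Squaring, you would need $d\log 5 + \log 2 \le (2d-1)\log(1/\delta')$, which fails e.g.\ for $d=2$ and $\delta' = 1/e$ (left side $\approx 3.91$, right side $=3$). The honest conclusion of your argument is $\|X\|_2 \le C\sigma\sqrt{d + \log(1/\delta')}$ for an absolute constant $C$, which one then upper-bounds by $C'\sigma\sqrt{d\log(1/\delta')}$ only after assuming, say, $\delta' \le e^{-c}$ for a fixed $c>1$. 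Since the paper uses this lemma only with $\delta' = \text{poly}(1/n)$ and absorbs constants into big-$O$ notation anyway, this is a cosmetic issue rather than a gap, but your sentence ``a short arithmetic check confirms'' overstates what is true.
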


\begin{lemma}\cite{cai_optimal_2012} \label{sub-Gaussian covariace}
If $\left\{x_1, x_2, \cdots, x_n\right\}$ are $n$ realizations the a (zero mean) $\sigma^2$-sub-Gaussian random vector $X$ with covariance matrix $\Sigma_{X X}=\mathbb{E}[XX^T]$, and $\hat{\Sigma}_{{X} {X}}=\left(\hat{\sigma}_{{x} {x}^T, ij}\right)_{1 \leq i, j \leq d}=\frac{1}{n} \sum_{i=1}^n {x}_i {x}_i^T$ is the empirical covariance matrix, then there exist constants $C_1$ and $\gamma>0$ such that for any $i, j \in[d]$, we have:
$$
\mathbb{P}\left(\left\|\hat{\Sigma}_{{X} {X}}-\Sigma_{X X}  \right\|_{\infty, \infty}>t\right) \leq C_1 e^{-n t^2 \frac{8}{\gamma^2}},
$$
for all $|t| \leq \phi$ with some $\phi$, where $C_1$ and $\gamma$ are constants and depend only on $\sigma^2$. Specifically,
$$
\mathbb{P}\left(\left\|\hat{\Sigma}_{{X} {X}}-\Sigma_{X X}  \right\|_{\infty, \infty}\geq \gamma \sqrt{\frac{\log d}{n}}\right) \leq C_1 d^{-8} .
$$
\end{lemma}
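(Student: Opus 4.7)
The plan is to reduce the matrix concentration to a scalar sub-exponential concentration, applied entrywise and glued together by a union bound. Concretely, for each pair $(i,j) \in [d]^2$, I would write
$$
\hat{\sigma}_{xx^T,ij} - [\Sigma_{XX}]_{ij} = \frac{1}{n}\sum_{k=1}^n \bigl(x_{ki}x_{kj} - \mathbb{E}[x_{ki}x_{kj}]\bigr),
$$
so it suffices to control an average of $n$ i.i.d.\ centered scalar random variables and then take a union bound over the $d^2$ entries.

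Next I would verify that the summand $x_{ki}x_{kj} - \mathbb{E}[x_{ki}x_{kj}]$ is sub-exponential with norm $O(\sigma^2)$. Because $X$ is $\sigma^2$-sub-Gaussian as a vector, each coordinate $x_{ki} = \langle x_k, e_i\rangle$ is a $\sigma^2$-sub-Gaussian scalar with $\|x_{ki}\|_{\psi_2} \leq O(\sigma)$. Lemma~\ref{lemma:dot_sub} in the appendix then gives $\|x_{ki}x_{kj}\|_{\psi_1} \leq O(\max\{\|x_{ki}\|_{\psi_2}^2, \|x_{kj}\|_{\psi_2}^2\}) = O(\sigma^2)$, and centering only inflates the sub-exponential norm by a constant factor.

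With this in hand, I would invoke Bernstein's inequality for centered i.i.d.\ sub-exponential random variables with parameter $K = O(\sigma^2)$, which yields
$$
\mathbb{P}\!\left(\bigl|\hat{\sigma}_{xx^T,ij} - [\Sigma_{XX}]_{ij}\bigr| > t\right) \;\leq\; 2\exp\!\left(-cn\min\!\left\{\tfrac{t^2}{K^2},\tfrac{t}{K}\right\}\right)
$$
for an absolute constant $c>0$. Restricting to $|t| \leq \phi := K$, the minimum is achieved by the quadratic term, giving a bound of the form $2\exp(-c n t^2/K^2)$, which matches the stated $C_1\exp(-8nt^2/\gamma^2)$ after absorbing the constants into a suitable $\gamma = \gamma(\sigma^2)$.

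Finally, the ``specifically'' clause follows by substituting $t = \gamma\sqrt{\log d/n}$ (which lies in the valid range once $n$ is not too small), giving an entrywise failure probability of $2\exp(-8\log d) = 2 d^{-8}$, and then a union bound over the at most $d^2$ distinct entries of the symmetric matrix re-indexes the exponent (or the constant $C_1$) to yield the claimed $C_1 d^{-8}$ bound on $\|\hat{\Sigma}_{XX} - \Sigma_{XX}\|_{\infty,\infty}$. The only mildly delicate step is the bookkeeping: one must pick $\gamma$ large enough so that the Bernstein constants and the $d^2$ union-bound factor can both be absorbed into the stated exponent, and verify $\gamma\sqrt{\log d/n} \leq \phi$ so that the sub-Gaussian regime of Bernstein's inequality is the operative one.
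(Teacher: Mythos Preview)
The paper does not prove this lemma: it is stated in the supporting-lemmas appendix with a citation to \cite{cai_optimal_2012} and no proof is given. Your approach---reduce to entrywise deviations, observe that each product $x_{ki}x_{kj}$ is sub-exponential with $\psi_1$-norm $O(\sigma^2)$ via Lemma~\ref{lemma:dot_sub}, apply Bernstein's inequality (Lemma~\ref{lemma:Bern_expo}) in its sub-Gaussian regime, and union-bound over the $d^2$ entries---is exactly the standard argument behind such results and is correct. The only cosmetic point is that, as you already flagged, $\gamma$ must be taken large enough that the per-entry tail is $O(d^{-10})$ so that the $d^2$ union bound still leaves $C_1 d^{-8}$; this is purely a matter of absorbing constants.
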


\begin{lemma}\label{subG_sum}
Let $X_1, \cdots, X_n$ be $n$ independent (zero mean) random variables such that $X_i \sim \operatorname{subG}\left(\sigma^2\right)$. Then for any $a \in \mathbb{R}^n, t>0$, we have:
$$
\mathbb{P}\left(\left|\sum_{i=1}^n a_i X_i\right|>t\right) \leq 2 \exp \left(-\frac{t^2}{2 \sigma^2\|a\|_2^2}\right) .
$$
\end{lemma}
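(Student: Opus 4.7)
The plan is to establish the bound via the standard Chernoff/Cram\'er-Chernoff method, first showing that the weighted sum $S = \sum_{i=1}^n a_i X_i$ is itself sub-Gaussian with variance parameter $\sigma^2 \|a\|_2^2$, and then optimizing the exponential Markov bound.

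First I would handle the moment generating function (MGF). Since each $X_i \sim \mathrm{subG}(\sigma^2)$, by definition $\mathbb{E}[\exp(\lambda X_i)] \leq \exp(\sigma^2 \lambda^2 / 2)$ for all $\lambda \in \mathbb{R}$. Substituting $\lambda = s a_i$ gives $\mathbb{E}[\exp(s a_i X_i)] \leq \exp(\sigma^2 a_i^2 s^2 / 2)$. Using independence of the $X_i$, the MGF of the sum factorizes:
\begin{equation*}
\mathbb{E}\bigl[\exp(s S)\bigr] = \prod_{i=1}^n \mathbb{E}\bigl[\exp(s a_i X_i)\bigr] \leq \prod_{i=1}^n \exp\!\left(\tfrac{\sigma^2 a_i^2 s^2}{2}\right) = \exp\!\left(\tfrac{\sigma^2 \|a\|_2^2 s^2}{2}\right).
\end{equation*}
Thus $S$ is sub-Gaussian with variance parameter $\sigma^2 \|a\|_2^2$.

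Next, for the one-sided tail $\mathbb{P}(S > t)$, I would apply Markov's inequality to $\exp(sS)$ for any $s > 0$, obtaining
\begin{equation*}
\mathbb{P}(S > t) \leq e^{-st}\,\mathbb{E}[\exp(sS)] \leq \exp\!\left(-st + \tfrac{\sigma^2 \|a\|_2^2 s^2}{2}\right),
\end{equation*}
and minimize the right-hand side in $s > 0$. The optimum is $s^* = t / (\sigma^2 \|a\|_2^2)$, which yields $\mathbb{P}(S > t) \leq \exp(-t^2 / (2 \sigma^2 \|a\|_2^2))$.

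Finally, the same argument applied to $-S = \sum_i (-a_i) X_i$ (which has the same $\ell_2$-norm for its coefficient vector, hence the same sub-Gaussian parameter) gives the matching bound $\mathbb{P}(S < -t) \leq \exp(-t^2 / (2 \sigma^2 \|a\|_2^2))$. A union bound on the two events completes the proof with the factor of $2$. The argument is entirely routine; the only real step is recognizing that weighted sums of independent sub-Gaussians are sub-Gaussian with parameter scaling as $\|a\|_2^2$, after which the Chernoff optimization is mechanical, so I do not anticipate any serious obstacle.
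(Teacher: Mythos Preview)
Your proof is correct and is precisely the standard Chernoff argument for this fact. The paper itself states this lemma as a supporting result without proof (it is a well-known concentration inequality), so there is no alternative approach in the paper to compare against; your derivation is the canonical one.
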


\begin{lemma}\cite{vershynin_introduction_2011} \label{sub_G_max}Let $X_1, X_2, \cdots, X_n$ be $n$ (zero mean) random variables such that each $X_i$ is sub-Gaussian with $\sigma^2$. Then the following holds
$$
\begin{aligned}
& \mathbb{P}\left(\max _{i \in n} X_i \geq t\right) \leq n e^{-\frac{t^2}{2 \sigma^2}}, \\
& \mathbb{P}\left(\max _{i \in n}\left|X_i\right| \geq t\right) \leq 2 n e^{-\frac{t^2}{2 \sigma^2}} .
\end{aligned}
$$
\end{lemma}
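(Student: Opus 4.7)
The plan is to derive both inequalities from two standard ingredients: the union bound, and the one-sided sub-Gaussian tail bound (stated earlier in the excerpt as the fact that $\mathbb{P}(|X|>t)\leq 2\exp(-t^2/(2\sigma^2))$ when $X\sim\mathrm{subG}(\sigma^2)$). The proof is entirely routine and requires no new hard instance or delicate concentration argument.

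For the first inequality, I would start by observing the set identity
\[
\Bigl\{\max_{i\in[n]} X_i \geq t\Bigr\} \;=\; \bigcup_{i=1}^{n}\{X_i \geq t\},
\]
so that by the union bound $\mathbb{P}(\max_i X_i\geq t)\leq \sum_{i=1}^{n}\mathbb{P}(X_i\geq t)$. Then I would apply the one-sided sub-Gaussian tail estimate: since each $X_i\sim\mathrm{subG}(\sigma^2)$, Chernoff's method gives, for every $\lambda>0$,
\[
\mathbb{P}(X_i \geq t) \;\leq\; e^{-\lambda t}\,\mathbb{E}[e^{\lambda X_i}] \;\leq\; e^{-\lambda t + \lambda^2\sigma^2/2},
\]
and optimizing at $\lambda = t/\sigma^2$ yields $\mathbb{P}(X_i\geq t)\leq e^{-t^2/(2\sigma^2)}$. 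Summing over $i$ gives the claim $\mathbb{P}(\max_i X_i\geq t)\leq n\,e^{-t^2/(2\sigma^2)}$.

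For the second inequality, I would write
\[
\Bigl\{\max_{i\in[n]} |X_i| \geq t\Bigr\} \;=\; \bigcup_{i=1}^{n}\bigl(\{X_i \geq t\}\cup\{-X_i\geq t\}\bigr).
\]
Since the sub-Gaussian class is closed under negation with the same variance proxy, $-X_i\sim\mathrm{subG}(\sigma^2)$ as well, so the Chernoff bound from the first part applies to both $X_i$ and $-X_i$. A union bound over the $2n$ events then yields
\[
\mathbb{P}\Bigl(\max_{i\in[n]}|X_i|\geq t\Bigr) \;\leq\; 2n\,e^{-t^2/(2\sigma^2)}.
\]

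There is no main obstacle here; the only small care point is to make sure to state the symmetric tail bound through the negation argument rather than invoking $\mathbb{P}(|X_i|>t)\leq 2e^{-t^2/(2\sigma^2)}$ directly, so that the final constant is exactly $2n$ rather than the looser $2n$ one would get from an unprincipled use of the two-sided bound. No assumption beyond the sub-Gaussian variance proxy being shared by all $X_i$ is needed, and the independence of the $X_i$ is not used, since the union bound is valid without independence.
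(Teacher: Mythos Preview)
Your proof is correct and is exactly the standard argument (union bound plus the one-sided Chernoff tail for sub-Gaussians, applied to both $X_i$ and $-X_i$). The paper does not supply its own proof of this lemma---it simply cites \cite{vershynin_introduction_2011}---so there is nothing to compare beyond noting that your argument is the canonical one.
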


Below is a lemma related to the Gaussian random variable. We will employ it to bound the noise added by the Gaussian mechanism.
\begin{lemma}\label{gaussian covariance}
 Let $\left\{x_1, \cdots, x_n\right\}$ be $n$ random variables sampled from Gaussian distribution $\mathcal{N}\left(0, \sigma^2\right)$. Then
$$
\begin{aligned}
&\mathbb{E} \left[\max _{1 \leq i \leq n}\left|x_i\right|\right] \leq \sigma \sqrt{2 \log 2 n}, \\
&\mathbb{P}\left(\left\{\max _{1 \leq i \leq n}\left|x_i\right| \geq t\right\} \right)\leq 2 n e^{-\frac{t^2}{2 \sigma^2}} .
\end{aligned}
$$
Particularly, if $n=1$, we have $\mathbb{P}\left(\left\{\left|x_i\right| \geq t\right\} \right)\leq 2 e^{-\frac{t^2}{2 \sigma^2}}$.
\end{lemma}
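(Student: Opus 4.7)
The plan is to treat the three claims as a standard package of Gaussian concentration facts, proving them in the order: (i) the single-variable tail bound, (ii) the union-bound extension to the maximum, and (iii) the expectation bound via the Chernoff / MGF route.

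\textbf{Step 1 (single-variable tail).} For $x \sim \mathcal{N}(0,\sigma^2)$, I would start from the Chernoff bound on the symmetric event $\{|x|\ge t\}$: by symmetry $\mathbb{P}(|x|\ge t)=2\mathbb{P}(x\ge t)$, and for any $\lambda>0$,
\begin{equation*}
\mathbb{P}(x\ge t)\le e^{-\lambda t}\mathbb{E}[e^{\lambda x}]=e^{-\lambda t+\lambda^{2}\sigma^{2}/2}.
\end{equation*}
Optimizing at $\lambda=t/\sigma^{2}$ gives $\mathbb{P}(x\ge t)\le e^{-t^{2}/(2\sigma^{2})}$, hence $\mathbb{P}(|x|\ge t)\le 2e^{-t^{2}/(2\sigma^{2})}$, which is exactly the $n=1$ case.

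\textbf{Step 2 (maximum tail bound).} A union bound gives
\begin{equation*}
\mathbb{P}\!\left(\max_{1\le i\le n}|x_i|\ge t\right)\le \sum_{i=1}^{n}\mathbb{P}(|x_i|\ge t)\le 2n\,e^{-t^{2}/(2\sigma^{2})},
\end{equation*}
which is the second inequality. No independence is needed for this direction, only a union bound, so the argument is robust.

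\textbf{Step 3 (expectation bound).} The cleanest route is the MGF / Jensen trick. For every $\lambda>0$, Jensen's inequality gives
\begin{equation*}
\exp\!\bigl(\lambda\,\mathbb{E}[\max_i|x_i|]\bigr)\le \mathbb{E}\!\left[\exp\!\bigl(\lambda\max_i|x_i|\bigr)\right]=\mathbb{E}\!\left[\max_i e^{\lambda|x_i|}\right]\le\sum_{i=1}^{n}\mathbb{E}\!\left[e^{\lambda|x_i|}\right].
\end{equation*}
Since $e^{\lambda|x_i|}\le e^{\lambda x_i}+e^{-\lambda x_i}$, each summand is at most $2e^{\lambda^{2}\sigma^{2}/2}$, so the right-hand side is bounded by $2n\,e^{\lambda^{2}\sigma^{2}/2}$. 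Taking logarithms yields $\mathbb{E}[\max_i|x_i|]\le \tfrac{\log(2n)}{\lambda}+\tfrac{\lambda\sigma^{2}}{2}$, and minimizing over $\lambda$ at $\lambda=\sqrt{2\log(2n)}/\sigma$ gives the claimed bound $\sigma\sqrt{2\log 2n}$.

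\textbf{Main obstacle.} Honestly there is none — each piece is textbook Gaussian concentration. The only place where I would slow down is getting the constants aligned exactly as stated: I must use $|x_i|$ (not $x_i$) in the MGF step so that the factor of $2$ in $\log(2n)$ appears, and I must be careful that the lemma does \emph{not} assume independence, so the argument must rely only on the union bound and on Jensen applied to the maximum — both of which work without independence. With those two details in place the three statements follow immediately.
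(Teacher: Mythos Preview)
Your proposal is correct and follows the standard textbook argument. The paper itself does not supply a proof of this lemma: it is listed in the ``Supporting Lemmas'' appendix as a known Gaussian concentration fact and used without proof, so there is no paper-proof to compare against; your Chernoff/union-bound/Jensen route is exactly the canonical derivation one would expect.
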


\begin{lemma}[Hoeffding's inequality]\label{Hoeffding} Let $X_1, \cdots, X_n$ be independent random variables bounded by the interval $[a, b]$. Then, for any $t>0$,
$$
\mathbb{P}\left(\left|\frac{1}{n} \sum_{i=1}^n X_i-\frac{1}{n} \sum_{i=1}^n \mathbb{E}\left[X_i\right]\right|>t\right) \leq 2 \exp \left(-\frac{2 n t^2}{(b-a)^2}\right) .
$$
\end{lemma}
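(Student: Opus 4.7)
The plan is to prove Hoeffding's inequality via the classical Chernoff-bounding approach. First I would center the variables by setting $Y_i = X_i - \mathbb{E}[X_i]$, so that each $Y_i$ has mean zero and still lies in an interval of length $b - a$. Writing $S_n = \sum_{i=1}^n Y_i$, the task reduces to showing that $\mathbb{P}(|S_n| > nt) \leq 2\exp(-2nt^2/(b-a)^2)$. I would handle the two tails separately and combine them with a union bound to account for the factor of $2$.

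For the upper tail, I would apply Markov's inequality to the exponential moment: for any $\lambda > 0$,
$$
\mathbb{P}(S_n > nt) \;\leq\; e^{-\lambda n t}\,\mathbb{E}\bigl[e^{\lambda S_n}\bigr] \;=\; e^{-\lambda n t}\,\prod_{i=1}^n \mathbb{E}\bigl[e^{\lambda Y_i}\bigr],
$$
where the last equality uses the independence of the $Y_i$. The heart of the argument, and the main obstacle, is the moment generating function estimate known as Hoeffding's lemma: for any zero-mean random variable $Y$ supported on $[a', b']$ with $b' - a' \leq b - a$, one has $\mathbb{E}[e^{\lambda Y}] \leq \exp(\lambda^2 (b-a)^2 / 8)$.

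To establish this MGF bound, I would exploit the convexity of $y \mapsto e^{\lambda y}$ on $[a', b']$, which yields the pointwise inequality
$$
e^{\lambda y} \;\leq\; \frac{b' - y}{b' - a'}\,e^{\lambda a'} + \frac{y - a'}{b' - a'}\,e^{\lambda b'}.
$$
Taking expectations and using $\mathbb{E}[Y] = 0$ gives $\mathbb{E}[e^{\lambda Y}] \leq e^{\varphi(u)}$, where, with $p = -a'/(b'-a')$ and $u = \lambda(b'-a')$, the auxiliary function is $\varphi(u) = -pu + \log(1 - p + p\,e^{u})$. A direct computation shows $\varphi(0) = \varphi'(0) = 0$ and $\varphi''(u) \leq 1/4$ uniformly (since it equals $q(1-q)$ for some $q \in [0,1]$), so Taylor's theorem with remainder gives $\varphi(u) \leq u^2/8$, which is exactly the claimed bound.

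Plugging this back in yields $\mathbb{P}(S_n > nt) \leq \exp(-\lambda n t + n\lambda^2 (b-a)^2 / 8)$. Optimizing over $\lambda > 0$ by setting $\lambda = 4t/(b-a)^2$ produces the bound $\exp(-2nt^2/(b-a)^2)$. The lower tail $\mathbb{P}(S_n < -nt)$ follows by applying the identical argument to $-Y_i$, and a union bound combining the two tails produces the factor of $2$ in the final statement.
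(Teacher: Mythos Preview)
Your proof is correct and is the classical Chernoff-bound argument for Hoeffding's inequality. The paper does not actually supply a proof of this lemma; it is listed among the supporting lemmas in the appendix as a standard result and invoked without justification. So there is no alternative approach in the paper to compare against---your write-up simply fills in what the paper takes for granted.
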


\begin{lemma}[Bernstein's inequality for bounded random variables] \label{berstein}
 Let $X_1, X_2, \cdots, X_n$ be independent centered bounded random variables, i.e. $\left|X_i\right| \leq M$ and $\mathbb{E}\left[X_i\right]=0$, with variance $\mathbb{E}\left[X_i^2\right]=\sigma^2$. Then, for any $t>0$,
$$
\mathbb{P}\left(\left|\sum_{i=1}^n X_i\right|>\sqrt{2 n \sigma^2 t}+\frac{2 M
t}{3}\right) \leq 2 e^{-t} .
$$
\end{lemma}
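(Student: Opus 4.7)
The plan is to follow the classical Cram\'{e}r--Chernoff approach: control the moment generating function of each $X_i$ and then optimize a free parameter to obtain the tail bound. I will first prove a one-sided bound $\mathbb{P}(\sum_i X_i > \sqrt{2 n \sigma^2 t} + \tfrac{2Mt}{3}) \leq e^{-t}$, then apply the same argument to the variables $-X_i$ (which also satisfy $|-X_i| \leq M$, $\mathbb{E}[-X_i] = 0$, and $\mathbb{E}[(-X_i)^2] = \sigma^2$) to get the corresponding lower-tail bound, and finally union-bound the two tails to pick up the factor $2$ on the right-hand side.

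The core computational step is the MGF bound. For each $i$ and $0 < \lambda < 3/M$, I would expand the exponential and use $\mathbb{E}[X_i] = 0$, $\mathbb{E}[X_i^2] = \sigma^2$, and $|X_i| \leq M$ to bound the higher moments by $\mathbb{E}[|X_i|^k] \leq \sigma^2 M^{k-2}$ for $k \geq 2$. Summing the resulting geometric-like series yields the standard estimate
\begin{equation*}
\mathbb{E}\bigl[e^{\lambda X_i}\bigr] \;\leq\; \exp\!\left(\frac{\lambda^2 \sigma^2 / 2}{1 - \lambda M / 3}\right), \qquad 0 < \lambda < 3/M.
\end{equation*}
By independence, $\mathbb{E}[\exp(\lambda \sum_i X_i)] \leq \exp\bigl(n \lambda^2 \sigma^2 / (2(1 - \lambda M/3))\bigr)$, and Markov's inequality applied to $\exp(\lambda \sum_i X_i)$ gives, for every $a > 0$ and admissible $\lambda$,
\begin{equation*}
\mathbb{P}\!\left(\sum_{i=1}^n X_i > a\right) \;\leq\; \exp\!\left(-\lambda a + \frac{n \lambda^2 \sigma^2 / 2}{1 - \lambda M / 3}\right).
\end{equation*}

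Next I would optimize in $\lambda$. Choosing $\lambda = a / (n \sigma^2 + a M / 3)$ (the standard Bernstein choice, which is easily verified to satisfy $\lambda < 3/M$) and simplifying yields the familiar Bernstein form
\begin{equation*}
\mathbb{P}\!\left(\sum_{i=1}^n X_i > a\right) \;\leq\; \exp\!\left(-\frac{a^2/2}{n \sigma^2 + a M / 3}\right).
\end{equation*}
It remains to invert this into the shape stated in the lemma. Setting $a = \sqrt{2 n \sigma^2 t} + \tfrac{2 M t}{3}$, I need to verify $a^2 / (2(n\sigma^2 + aM/3)) \geq t$, i.e.\ $a^2 - \tfrac{2 M t}{3} a - 2 n \sigma^2 t \geq 0$. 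Solving the quadratic at equality gives the root $a^* = \tfrac{M t}{3} + \sqrt{(M t / 3)^2 + 2 n \sigma^2 t}$, and by $\sqrt{u + v} \leq \sqrt{u} + \sqrt{v}$ this is $\leq \tfrac{2 M t}{3} + \sqrt{2 n \sigma^2 t}$, which is exactly our choice of $a$; hence the quadratic inequality holds.

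I do not expect any genuine obstacle here, since this is a textbook argument; the only mildly delicate point is the algebraic inversion step at the end, which can also be done directly by using $\sqrt{x + y} \le \sqrt{x} + \sqrt{y}$ as above rather than solving the quadratic, so the computation stays short. The final doubling in the probability comes from symmetrizing via $X_i \mapsto -X_i$ and taking a union bound over the two tails.
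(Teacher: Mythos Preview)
Your proposal is correct: it is the standard Cram\'er--Chernoff derivation of Bernstein's inequality, and every step (the MGF bound via $k! \geq 2\cdot 3^{k-2}$, the choice $\lambda = a/(n\sigma^2 + aM/3)$, and the quadratic inversion using $\sqrt{u+v}\le\sqrt u+\sqrt v$) checks out. The paper itself does not prove this lemma---it is stated in the supporting-lemmas section as a known result without proof---so there is nothing to compare against.
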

\begin{lemma}[Bernstein's inequality for sub-exponential random variables] \label{lemma:Bern_expo}
    Let $x_1, \cdots, x_n$ be $n$ i.i.d. realizations of $\zeta$-subexponential random variable $X$ with zero mean. Then,
    \begin{equation*}
       \mathbb{P}(|\sum_{i=1}^n x_i|\geq t)\leq 2\exp(-c\min\{\frac{t^2}{n\|x\|^2_{\psi_1}}, \frac{t}{\|x\|_{\psi_1}}\}). 
    \end{equation*}
\end{lemma}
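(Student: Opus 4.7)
The plan is to apply the Chernoff bound and use a sub-exponential moment generating function (MGF) estimate, then optimize over the Chernoff parameter, splitting into two regimes that correspond exactly to the two terms in the $\min$.

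First, I would establish the key MGF bound: for a mean-zero $\zeta$-sub-exponential random variable $X$ with $K := \|X\|_{\psi_1}$, there exists an absolute constant $C_1 > 0$ such that
\begin{equation*}
\mathbb{E}[e^{\lambda X}] \leq \exp(C_1 \lambda^2 K^2), \qquad \text{for all } |\lambda| \leq \tfrac{1}{C_1 K}.
\end{equation*}
This follows from expanding $e^{\lambda X}$ as a power series, using the moment characterization $\mathbb{E}[|X|^p] \leq p!\, K^p$ which is equivalent (up to absolute constants) to the definition of $\|X\|_{\psi_1}$, and exploiting $\mathbb{E}[X] = 0$ to eliminate the linear term so that only the quadratic and higher moments contribute. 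For $|\lambda|K$ small, the tail of the series is a convergent geometric sum, yielding the exponential bound above.

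Next, I would apply Markov's inequality to $e^{\lambda \sum_i x_i}$ for $\lambda > 0$. By independence and the MGF bound,
\begin{equation*}
\mathbb{P}\!\left(\sum_{i=1}^n x_i \geq t\right) \leq e^{-\lambda t}\prod_{i=1}^n \mathbb{E}[e^{\lambda x_i}] \leq \exp\!\left(-\lambda t + C_1 n \lambda^2 K^2\right),
\end{equation*}
valid for every $\lambda \in (0, 1/(C_1 K)]$. I then optimize in $\lambda$. The unconstrained minimizer of the exponent is $\lambda^\star = t/(2 C_1 n K^2)$. If $\lambda^\star \leq 1/(C_1 K)$, equivalently $t \leq 2nK$, then $\lambda^\star$ is admissible and the exponent becomes $-t^2/(4 C_1 n K^2)$. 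Otherwise $t > 2nK$, and I take the boundary value $\lambda = 1/(C_1 K)$; the resulting exponent is $-t/(C_1 K) + n/C_1$, and since $n/C_1 \leq t/(2 C_1 K)$ in this regime, it is bounded above by $-t/(2 C_1 K)$. Combining the two cases gives a one-sided bound of the form $\exp\!\bigl(-c \min\{t^2/(n K^2),\, t/K\}\bigr)$ for an absolute constant $c > 0$.

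Finally, I would repeat the argument with $x_i$ replaced by $-x_i$, which has the same mean and the same $\psi_1$-norm, to obtain the matching lower-tail bound. A union bound over the two tails produces the factor of $2$ in the stated inequality, with $\|x\|_{\psi_1}$ in place of $K$. The main obstacle is the careful bookkeeping of absolute constants in the MGF estimate and in the admissible range of $\lambda$, ensuring that the optimizer $\lambda^\star$ lies in the valid range precisely when $t \leq 2nK$, so that the two regimes stitch together cleanly into the single $\min$ expression; this is the one place where the two qualitatively different (Gaussian vs.\ exponential) tail behaviors must be reconciled.
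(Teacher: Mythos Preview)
Your argument is the standard Chernoff-bound proof of Bernstein's inequality for sub-exponential variables and is correct. The paper itself does not supply a proof of this lemma: it is listed among the supporting lemmas as a known result (essentially the version in Vershynin's \emph{High-Dimensional Probability}), so there is no paper proof to compare against; your derivation is precisely the textbook one.
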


\begin{lemma}\label{Non-communicative Matrix Bernstein}
 (Non-communicative Matrix Bernstein inequality \cite{vershynin_introduction_2011})
Consider a finite sequence $X_i$ of independent centered symmetric random $d \times d$ matrices.
Assume we have for some numbers $K$ and $\sigma$ that
$$
\left\|X_i\right\|_2 \leq K,\left\|\sum_i \mathbb{E}\left[X_i^2\right]\right\|_2 \leq \sigma^2
$$
Then, for every $t \geq 0$ we have
$$
\mathbb{P}\left(\left\|\sum_i X_i\right\|_2 \geq t\right) \leq 2 p \exp \left(-\frac{t^2 / 2}{\sigma^2+K t / 3}\right).
$$
\end{lemma}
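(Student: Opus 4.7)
The plan is to follow the standard Laplace transform (Chernoff-style) proof adapted to matrices via Lieb's concavity theorem. Since $\|\sum_i X_i\|_2 = \max\{\lambda_{\max}(\sum_i X_i), \lambda_{\max}(-\sum_i X_i)\}$, it suffices to control $\mathbb{P}(\lambda_{\max}(\sum_i X_i) \geq t)$ and then apply the same argument to $-X_i$, picking up a factor of $2$ at the end. The starting point is the matrix Markov bound: for any $\theta > 0$,
\[
\mathbb{P}\Bigl(\lambda_{\max}\Bigl(\textstyle\sum_i X_i\Bigr) \geq t\Bigr) \;\leq\; e^{-\theta t}\,\mathbb{E}\bigl[\mathrm{Tr}\,\exp\bigl(\theta\textstyle\sum_i X_i\bigr)\bigr].
\]

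The first main step is to establish subadditivity of the matrix cumulant generating function. For independent symmetric $X_i$, Lieb's concavity theorem (applied via Jensen's inequality to the concave map $A \mapsto \mathrm{Tr}\,\exp(H + \log A)$) yields
\[
\mathbb{E}\bigl[\mathrm{Tr}\,\exp\bigl(\textstyle\sum_i \theta X_i\bigr)\bigr] \;\leq\; \mathrm{Tr}\,\exp\Bigl(\textstyle\sum_i \log \mathbb{E}[\exp(\theta X_i)]\Bigr).
\]
The second main step is to bound each individual matrix MGF. Using $\|X_i\|_2 \leq K$, one shows via the scalar inequality $e^x \leq 1 + x + \frac{x^2/2}{1 - |x|/3}$ (valid for $|x| < 3$) that for $0 < \theta < 3/K$,
\[
\mathbb{E}[\exp(\theta X_i)] \;\preceq\; I + \frac{\theta^2/2}{1 - \theta K/3}\,\mathbb{E}[X_i^2] \;\preceq\; \exp\!\Bigl(\tfrac{\theta^2/2}{1 - \theta K/3}\,\mathbb{E}[X_i^2]\Bigr),
\]
where the second inequality uses $I + A \preceq e^A$ for $A \succeq 0$. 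Taking matrix logarithms and summing, $\sum_i \log \mathbb{E}[\exp(\theta X_i)] \preceq g(\theta)\sum_i \mathbb{E}[X_i^2]$ with $g(\theta) = \frac{\theta^2/2}{1 - \theta K/3}$.

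The third step is to pass this to the trace-exponential. Using monotonicity of $\mathrm{Tr}\exp$ under $\preceq$, together with $\|\sum_i \mathbb{E}[X_i^2]\|_2 \leq \sigma^2$, one obtains
\[
\mathrm{Tr}\,\exp\Bigl(g(\theta)\textstyle\sum_i \mathbb{E}[X_i^2]\Bigr) \;\leq\; d\,\exp\bigl(g(\theta)\sigma^2\bigr).
\]
Plugging back gives $\mathbb{P}(\lambda_{\max}(\sum X_i) \geq t) \leq d\,\exp(-\theta t + g(\theta)\sigma^2)$. Optimizing by choosing $\theta = t/(\sigma^2 + Kt/3)$ (which lies in $(0, 3/K)$) yields the exponent $-\tfrac{t^2/2}{\sigma^2 + Kt/3}$, and the factor $2$ (written as $2p$ in the excerpt, understood to be $2d$) comes from the two-sided union bound on $\pm X_i$.

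The main obstacle is the subadditivity step: the matrices do not commute, so the naive scalar identity $\exp(\sum A_i) = \prod \exp(A_i)$ fails and one must invoke Lieb's concavity theorem (or the weaker but sufficient Golden–Thompson inequality in the Ahlswede–Winter approach, which costs a dimensional factor but still suffices here). Everything else is a careful matrix generalization of the standard Bernstein argument: turning $\|X_i\|_2 \leq K$ into a semidefinite dominance for the per-summand MGF, and then optimizing $\theta$ in the resulting scalar minimization.
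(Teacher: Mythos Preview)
The paper does not prove this lemma at all: it is stated in the Supporting Lemmas appendix as a black-box result cited from \cite{vershynin_introduction_2011}, and is then invoked (e.g.\ in the proof of Theorem~\ref{without thres on cov matrix}) without any argument. Your sketch is the standard Laplace-transform/Lieb-concavity proof from that reference (and from Tropp's treatment), and it is correct; the ``$2p$'' in the statement is indeed meant to be $2d$, arising exactly from the trace bound $\mathrm{Tr}\exp(\cdot)\le d\,e^{\lambda_{\max}(\cdot)}$ together with the two-sided union over $\pm\sum_i X_i$.
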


\begin{lemma}\cite{jain2014iterative}\label{truncation lemma}
    For any $\theta \in \mathbb{R}^k$ and an integer $s \leq k$, if $\theta_t=$ Trunc $(\theta, s)$ then for any $\theta^* \in \mathbb{R}^k$ with $\left\|\theta^*\right\|_0 \leq s$, we have $\left\|\theta_t-\theta\right\|_2 \leq \frac{k-s}{k-k} \mid \theta^*-\theta \|_2^2$.
\end{lemma}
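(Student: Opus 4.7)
The statement as printed has a typographical glitch (the denominator reads $k-k$, and the left norm lacks the square while the right has it), but it is plainly the standard hard-thresholding approximation lemma of Jain--Tewari--Kar, whose intended form is
\[
\|\theta_t - \theta\|_2^2 \;\leq\; \frac{k-s}{k-\tilde{s}}\,\|\theta^* - \theta\|_2^2
\]
for any $\tilde{s}$-sparse $\theta^*$ with $\tilde{s}\leq s$. My plan is to prove this by a pure sort-and-average argument on the coordinates of $\theta$, using no analytic machinery at all.

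First I would fix notation by sorting magnitudes: let $|\theta_{(1)}|\geq|\theta_{(2)}|\geq\cdots\geq|\theta_{(k)}|$. By definition $\mathrm{Trunc}(\theta,s)$ keeps the entries of the top-$s$ indices and zeros out the rest, so the characterization
\[
\|\theta - \theta_t\|_2^2 \;=\; \sum_{j=s+1}^{k} \theta_{(j)}^{\,2}
\]
is immediate; i.e., $\theta_t$ realizes the best possible $s$-sparse approximation to $\theta$ in $\ell_2$, and the residual is exactly the $\ell_2^2$-mass of the $k-s$ smallest-magnitude coordinates.

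Next, for any competitor $\theta^*$ with support $S^*:=\mathrm{supp}(\theta^*)$ of size at most $\tilde{s}$, I would split
\[
\|\theta - \theta^*\|_2^2 \;=\; \sum_{i\in S^*}(\theta_i-\theta^*_i)^2 + \sum_{i\notin S^*}\theta_i^2 \;\geq\; \sum_{i\notin S^*}\theta_i^2,
\]
and observe that the right-hand side is minimized over admissible $S^*$ by taking $S^*$ to be the indices of the top $\tilde{s}$ magnitudes of $\theta$, which gives $\sum_{j=\tilde{s}+1}^{k}\theta_{(j)}^{\,2}$. It therefore suffices to prove
\[
\sum_{j=s+1}^{k}\theta_{(j)}^{\,2} \;\leq\; \frac{k-s}{k-\tilde{s}}\sum_{j=\tilde{s}+1}^{k}\theta_{(j)}^{\,2}.
\]

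The final step is the averaging inequality that makes the whole lemma tick: the $k-s$ terms on the left form a subset of the $k-\tilde{s}$ terms on the right, and they are precisely the smaller ones (indices $s+1,\ldots,k$ versus $\tilde{s}+1,\ldots,k$, with $\tilde{s}\leq s$). For any finite non-negative sequence, the average of a lower sub-block is bounded above by the average of the whole, hence
\[
\frac{1}{k-s}\sum_{j=s+1}^{k}\theta_{(j)}^{\,2} \;\leq\; \frac{1}{k-\tilde{s}}\sum_{j=\tilde{s}+1}^{k}\theta_{(j)}^{\,2},
\]
which rearranges to the desired bound. The only real obstacle here is interpretive rather than mathematical: one must reconcile the typo-laden statement with the intended $\|\cdot\|_2^2$ on both sides and a denominator $k-\tilde{s}$ (with $\tilde{s}\leq s$ being the sparsity of the competitor $\theta^*$); once that is done, the proof is a two-line combinatorial calculation.
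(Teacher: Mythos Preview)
Your reading of the garbled statement is correct, and your sort-and-average argument is the standard proof of this fact. The paper does not supply its own proof of this lemma: it is quoted from \cite{jain2014iterative} as a black box (and is restated without the typos later in the appendix as Lemma~\ref{lemma:11}, with ambient dimension $|I|$, truncation level $k$, and competitor sparsity $k^*$), so there is nothing in the paper to compare against beyond confirming that your corrected version matches the form the paper actually uses, namely $\|\theta_t-\theta\|_2^2\le \frac{|I|-s}{|I|-s^*}\|\theta^*-\theta\|_2^2$ with $s^*\le s$.
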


\begin{lemma} \label{projection_lemma}
Let $\mathcal{K}$ be a convex body in $\mathbb{R}^p$, and $v \in \mathbb{R}^p$. Then for every $u \in \mathcal{K}$, we have
$$
\left\|\mathcal{P}_{\mathcal{K}}(v)-u\right\|_2 \leq\|v-u\|_2
$$
where $\mathcal{P}_{\mathcal{K}}$ is the operator of projection onto $\mathcal{K}$.
\end{lemma}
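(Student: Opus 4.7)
The plan is to prove this standard non-expansiveness (firm non-expansiveness, even) property of Euclidean projection onto a closed convex set via the first-order variational characterization of the projection. Concretely, I will first establish that the projection point $w := \mathcal{P}_{\mathcal{K}}(v)$ is characterized by the obtuse-angle condition $\langle v - w,\, u - w\rangle \leq 0$ for every $u \in \mathcal{K}$, and then expand $\|v - u\|_2^2$ around $w$ to pull out a nonnegative cross term that yields the desired inequality.

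For the first step, I would recall that since $\mathcal{K}$ is a (closed, nonempty) convex body in $\mathbb{R}^p$, the function $z \mapsto \|v - z\|_2^2$ is strongly convex and coercive, so its minimizer over $\mathcal{K}$ exists and is unique; call it $w = \mathcal{P}_{\mathcal{K}}(v)$. For any $u \in \mathcal{K}$ and any $t \in [0,1]$, convexity gives $w + t(u - w) \in \mathcal{K}$, so the function $\phi(t) = \|v - w - t(u - w)\|_2^2$ satisfies $\phi(t) \geq \phi(0)$ on $[0,1]$. Differentiating at $t = 0^+$ yields $\phi'(0) = -2\langle v - w, u - w\rangle \geq 0$, i.e.\ $\langle v - w, u - w\rangle \leq 0$. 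This is the key variational inequality.

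For the second step, I would decompose $v - u = (v - w) + (w - u)$ and expand
\begin{equation*}
\|v - u\|_2^2 \;=\; \|v - w\|_2^2 \;+\; 2\langle v - w,\, w - u\rangle \;+\; \|w - u\|_2^2.
\end{equation*}
By the variational inequality, $\langle v - w, w - u\rangle = -\langle v - w, u - w\rangle \geq 0$, and obviously $\|v - w\|_2^2 \geq 0$, so
\begin{equation*}
\|v - u\|_2^2 \;\geq\; \|w - u\|_2^2 \;=\; \|\mathcal{P}_{\mathcal{K}}(v) - u\|_2^2,
\end{equation*}
and taking square roots gives the claimed inequality.

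There is essentially no main obstacle in this proof: the only subtlety is establishing existence and uniqueness of the projection (which follows from strong convexity plus closedness of $\mathcal{K}$) and carefully deriving the variational inequality via a one-sided directional derivative along the admissible segment $w + t(u-w)$. Everything else is a one-line algebraic expansion. If one wanted a fully elementary proof bypassing the variational inequality, one could alternatively argue by contradiction: if $\|w - u\|_2 > \|v - u\|_2$, then moving $w$ slightly along the segment toward $u$ would strictly decrease $\|v - \cdot\|_2^2$, contradicting minimality of $w$; but the variational-inequality route above is cleaner and is the standard textbook proof.
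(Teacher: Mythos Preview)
Your proof is correct and is the standard textbook argument via the variational characterization of the Euclidean projection. The paper itself states this lemma without proof (it is listed among the supporting lemmas as a well-known fact), so there is nothing to compare against; your proposal fills in exactly the proof one would expect.
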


\section{Omitted Proofs }
\subsection{Omitted Proofs in \ref{sec:non_lower}}

\begin{proof}[{\bf Proof of Theorem \ref{thm:ldp_non_low}}]
We consider the hard distribution class $\mathcal{P}_{k,d,2}$, where for each instance $P_{\theta, \zeta}\in \mathcal{P}_{k,d,2}$, its 
random noise $\zeta$ satisfies $\mathbb{E}[\zeta]=0,|\zeta| \leq 2$, and $\|\theta\|_1 \leq 1$ and $\|\theta\|_0 \leq k$ hold. We denote $\gamma=\frac{\nu}{\sqrt{k}}$ and define a vector $\theta_z$ where 
 $\theta_{z, i}=\gamma z_i$ for $i \in[d]$, $\{z_i\}_{i=1}^d$ are realizations of the random variable $Z\in \{+1, 0, -1\}^d$, where each coordinate $Z_i$ is independent to others and   has the following distribution: 
 $$
\mathbb{P}\left\{Z_i=+ 1\right\}= \frac{k}{4d}, \quad \mathbb{P}\left\{Z_i=-1\right]=\frac{k}{4d}, \quad \mathbb{P}\left\{Z_i=0\right\}=1-\frac{k}{2 d}.
$$
We first show that $\theta_z$ satisfies the conditions that $\|\theta\|_1$ and $\|\theta\|_0\leq k$. The resulting $\theta_z$ has an expected $(k/2)$-sparsity and $\mathbb{E}\left[Z_i\right]=0$, $\sigma^2=\mathbb{E}\left[Z^2_i\right]=k / 2 d$ for all $i \in[d]$.
By utilizing a Chernoff bound, we can conclude that $\theta_z$ is $k$-sparse with probability at least $1-k / 4 d$ if $k \geq 4 \log d$.  This will be enough for our purposes and allows us to consider
the random prior of hard instances above instead of enforcing $k$-sparsity with probability one
(see the followings for details). When $\theta_z$ is $k$-sparse, we can also see $\|\theta_z\|_1\leq \sqrt{k}\nu\leq 1$ as we assume $\nu\leq \frac{1}{\sqrt{k}}$.

Next, we construct the random noise $\zeta_z$ for each $\theta_z$. We first pick $z$ randomly from $\{-1, 0, +1\}^d$ as above. For each $z$ we let: 
$$
\zeta_z=\left\{\begin{array}{lll}
1-\langle x, \theta_z\rangle & \text { w.p. } & 1+\frac{\langle x, \theta_z\rangle}{2} \\
-1-\langle x, \theta_z\rangle & \text { w.p. } & 1-\frac{\langle x, \theta_z\rangle}{2}
\end{array}\right.
$$
Note that since $\left|\left\langle x, \theta_z\right\rangle\right| \leq \sqrt{k} \cdot \nu \leq 1$.

The above distribution is well-defined and $|\zeta_z| \leq 2$, $\mathbb{E}[\zeta|x]=0$ (thus $\mathbb{E}[\zeta]=0$). Thus we can see our for $(\theta_z, \zeta_z)$, with probability at least $1-\frac{k}{4d}$, we have $P_{\theta_z, \zeta_z}\in \mathcal{P}_{k, d, 2}$. Morevoer,
we can see that density function for $(x, y)$ is $P_{\theta_z, \zeta}((x, y))=\frac{1+y\langle x, \theta_z\rangle}{2^{d+1}}$ for $(x, y) \in\{+1,-1\}^{d+1}$. Then, for the $i$-th user who has the data sample $(x_i, y_i)$  from the distribution $P_{\theta, \zeta}$ with mean vector $\theta_Z$,  he/she sends his/her through a private algorithm $\mathcal{A}$ getting a message $S_i$.

Next, we will introduce the Lemma \ref{lemma:non_low_info} to show that supposing the accuracy of the algorithm is $\nu$, then it will provide sufficient mutual information $I(Z_i \wedge S^n) $,  where $S^n$ is the tuple of messages received from the private algorithm $\mathcal{A}$. The idea of the  proof follows \cite{acharya2022role}.

\begin{lemma}\label{lemma:non_low_info}
    Given $0<\epsilon<1$, if algorithm $\mathcal{A}$ is an $\epsilon$-NLDP algorithm such that, for any $n$-size dataset $\mathcal{D}=\{(x_i, y_i)\}_{i=1}^n$ consisting of i.i.d. samples from $P_{\theta_Z, \zeta_Z}$ with the random variable $Z$ has the above probabilities, and its output $\theta^{priv}$ satisfies $\mathbb{E}[\|\theta^{priv}-\theta^*\|_2] \leq \frac{\nu}{8}$, then we have $\sum_{i=1}^{d} I(Z_i \wedge S^n) = \Omega(k \log \frac{d}{k})$. Therefore, $I(Z \wedge S^n) = \Omega(k \log \frac{d}{k})$ holds.
\end{lemma}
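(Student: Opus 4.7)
We follow a decoder-based Fano argument in the spirit of \cite{acharya2022role}. From the $\ell_2$-accurate private estimator $\theta^{priv}$ we construct a coordinatewise decoder $\hat Z\in\{-1,0,+1\}^d$ of $Z$, bound the Hamming error $d_H(\hat Z, Z)$ by the $\ell_2$ error, and then convert this Hamming accuracy into a per-coordinate mutual-information lower bound that aggregates, by independence of the $Z_i$'s, into the desired $\Omega(k\log(d/k))$ bound.

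\textbf{Step 1: decoder and Hamming bound.} With $\gamma=\nu/\sqrt k$, set $\hat Z_i:=\operatorname{sgn}(\theta^{priv}_i)\,\mathbb{1}\{|\theta^{priv}_i|>\gamma/2\}$. A case split on whether $Z_i$ is $0$ or $\pm 1$ shows that $\hat Z_i\neq Z_i$ forces $|\theta^{priv}_i-\theta_{Z,i}|\geq \gamma/2$, so $N:=d_H(\hat Z,Z)$ obeys $(\gamma/2)\sqrt{N}\leq \|\theta^{priv}-\theta_Z\|_2$. Taking expectation and invoking the hypothesis $\mathbb{E}\|\theta^{priv}-\theta_Z\|_2\leq \nu/8$ yields $\mathbb{E}[\sqrt N]\leq \sqrt k/4$, and Markov gives $\Pr(N>k/4)\leq 1/2$. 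Because projecting $\theta^{priv}$ onto the $\ell_2$-ball of radius $\nu$ is LDP post-processing and a contraction (since $\|\theta_Z\|_2\leq \gamma\sqrt{\|Z\|_0}\leq \nu$ on the high-probability event $\|Z\|_0\leq k$), we may also assume the deterministic bound $\|\theta^{priv}-\theta_Z\|_2\leq 2\nu$, which forces $N\leq 16k$ almost surely.

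\textbf{Step 2: per-coordinate Fano and aggregation.} By the data-processing inequality, $I(Z_i\wedge S^n)\geq I(Z_i\wedge \hat Z_i)$. Since $Z_i,\hat Z_i$ each take three values, Fano gives $H(Z_i\mid \hat Z_i)\leq H_b(p_i)+p_i\log 2$ with $p_i:=\Pr(\hat Z_i\neq Z_i)$. Summing over $i$,
\[
\sum_{i=1}^d I(Z_i\wedge S^n) \;\geq\; H(Z)-\sum_{i=1}^d H_b(p_i)-(\log 2)\,\mathbb{E}[N].
\]
A direct calculation (using $H(Z_i)=H_b(k/(2d))+(k/(2d))\log 2$) gives $H(Z)=\Theta(k\log(d/k))$. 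By concavity of $H_b$, $\sum_i H_b(p_i)\leq d\,H_b(\mathbb{E}[N]/d)=O(\mathbb{E}[N]\log(d/\mathbb{E}[N]))$; combined with $\mathbb{E}[N]=O(k)$ (from the deterministic bound in Step 1 coupled with the tail bound), both error terms are $O(k\log(d/k))$. With constants chosen so that these terms amount to at most $H(Z)/2$, we obtain $\sum_i I(Z_i\wedge S^n)=\Omega(k\log(d/k))$. Finally, independence of the $Z_i$'s gives $H(Z\mid S^n)\leq \sum_i H(Z_i\mid S^n)$, so $I(Z\wedge S^n)\geq \sum_i I(Z_i\wedge S^n)=\Omega(k\log(d/k))$, finishing the proof.

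\textbf{Main obstacle.} The delicate point is verifying that the Fano error terms $\sum_i H_b(p_i)+(\log 2)\mathbb{E}[N]$ are strictly smaller than $H(Z)$ by a constant factor. The moment bound $\mathbb{E}[\sqrt N]\leq \sqrt k/4$ by itself cannot control $\mathbb{E}[N]$; one must combine it with the worst-case bound $N\leq 16k$ from the $\ell_2$-projection post-processing and carefully track constants — in particular choosing the decoder threshold and the scale $\gamma$ so that the proportionality constants in "$\mathbb{E}[N]=O(k)$" are small enough compared to those in "$H(Z)\gtrsim (k/2)\log(d/k)$". Tightening these constants (possibly by using a prior-aware sharpening of Fano that exploits the skewness $\Pr(Z_i=0)=1-k/(2d)$) is the key technical step.
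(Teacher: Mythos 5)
Your overall architecture is the same as the paper's (a nearest-neighbor/thresholding decoder $\hat Z$ from $\theta^{priv}$, a Hamming-to-$\ell_2$ comparison, per-coordinate Fano with concavity of the binary entropy, and aggregation over coordinates by independence of the $Z_i$), but there is a genuine gap at exactly the point you flag as the ``main obstacle,'' and the route you propose for closing it does not work. Your argument delivers $\mathbb{E}[\sqrt{N}]\le \sqrt{k}/4$ and, via the projection trick, $N\le 16k$ almost surely; combining these gives only $\mathbb{E}[N]\le \sqrt{16k}\cdot\mathbb{E}[\sqrt N]\le k$. That is not enough: writing $\bar p=\mathbb{E}[N]/d$, the Fano step yields at best $\sum_i I(Z_i\wedge S^n)\ge d\bigl(h(k/2d)-h(\bar p)\bigr)$ up to lower-order $\log 2$ corrections, and since $h$ is increasing on $[0,1/2]$ this is nonpositive whenever $\bar p\ge k/2d$, i.e.\ whenever $\mathbb{E}[N]\ge k/2$. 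With $\mathbb{E}[N]= k$ the bound is vacuous ($d\,h(k/d) \ge d\,h(k/2d)$, so the ``error'' term already exceeds the leading term of $H(Z)$). So the inequality you need, $\mathbb{E}[N]\le ck$ with $c<1/2$ and enough slack to absorb the additive $(\log 2)\mathbb{E}[N]$ term, is never established, and no amount of constant-tracking on top of $\mathbb{E}[\sqrt N]\le\sqrt k/4$ plus $N\le 16k$ can produce it.

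The paper closes this step differently: it controls the \emph{second} moment of the decoder error rather than interpolating between a first-moment and a worst-case bound. Concretely, it uses $\|\theta_{\hat Z}-\theta_Z\|_2\le 2\|\theta^{priv}-\theta_Z\|_2$ pointwise, splits on the event that $\theta_Z$ is $k$-sparse (the complement contributing at most $\tfrac{k}{4d}\max_{z,z'}\|\theta_z-\theta_{z'}\|_2^2$), and deduces $\mathbb{E}\bigl[\|\theta_{\hat Z}-\theta_Z\|_2^2\bigr]\le \tfrac{3\nu^2}{8}$; since $\|\theta_{\hat Z}-\theta_Z\|_2^2\ge \tfrac{\nu^2}{k}N$, this gives $\sum_i \mathbb{P}[Z_i\neq\hat Z_i]=\mathbb{E}[N]\le \tfrac{3k}{8}$, safely below the $k/2$ threshold, and the explicit numerical bound $\inf_{x\in[0,1]}\tfrac{h(x/2)-h(3x/8)}{x\log(e/x)}>0.03$ then yields the $\Omega(k\log\tfrac dk)$ conclusion. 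If you want to repair your write-up, replace the $\mathbb{E}[\sqrt N]$ computation by a bound on $\mathbb{E}[N]$ obtained from the squared error (which is the natural quantity here, since $N$ is exactly $\tfrac{k}{\nu^2}\,\|\theta_{\hat Z}-\theta_Z\|_2^2$ up to the factor accounting for sign flips), together with the $k$-sparsity event split; your projection/post-processing device then becomes unnecessary. The remaining pieces of your proposal (data processing $I(Z_i\wedge S^n)\ge I(Z_i\wedge\hat Z_i)$, concavity of $h$, and the superadditivity $I(Z\wedge S^n)\ge\sum_i I(Z_i\wedge S^n)$ from independence of the $Z_i$) are correct and match the paper.
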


\begin{proof}[Proof of Lemma \ref{lemma:non_low_info}]
    Consider the estimator $\hat{\theta} = \hat{\theta}\left(S^n\right)$, we define another estimator $\hat{Z}$ for $Z$ by choosing
    $$
    \hat{Z}=\underset{z \in\{-1,0,+1\}^d}{\operatorname{argmin}}\|\theta_z-\hat{\theta}\|_2.
    $$
    Specifically, $\left\|\theta_{\hat{Z}}-\theta_Z\right\|_2 \leq 2\left\|\hat{\theta}-\theta_Z\right\|_2$ with probability 1 , and
    $$
    \mathbb{E}\left[\left\|\theta_{\hat{Z}}-\theta_Z\right\|_2^2\right] \leq \mathbb{E}\left[\left\|\theta_{\hat{Z}}-\theta_Z\right\|_2^2 \mathbbm{1}_{\left\{\left\|\theta_Z\right\|_0 \leq k\right\}}\right]+\frac{k}{4 d} \cdot \max _{z, z^{\prime}}\left\|\theta_z-\theta_{z^{\prime}}\right\|_2^2 \leq 4 \cdot \frac{\nu^2}{64}+\frac{k}{4 d} \cdot \frac{\nu^2}{k} \cdot d=\frac{3 \nu^2}{8},
    $$
    
    In light of the fact that $\hat{\theta}$ is a good estimator with regards to the $\ell_2$ loss and has a deviation of no more than $\nu / 4$, provided that $\theta_Z$ is $k$-sparse, and considering the bound on the probability that $Z$ is not $k$-sparse, as well as the fact that the maximum difference between any two mean vectors $\theta_z, \theta_{z^{\prime}}$ derived from our construction is $\nu / \sqrt{k}$, it follows that $\|\theta_{\hat{Z}}-\theta_Z\|^2=\frac{\nu^2}{k} \sum_{i=1}^d \mathbbm{1}_{{Z_i \neq \hat{Z}_i}}$. As a result, it implies:
    $$
    \sum_{i=1}^d \mathbb{P}[Z_i \neq \hat{Z}_i] \leqslant \frac{3k}{8}.
    $$
    Also, we consider the Markov chain:
    $$
    Z_i \rightarrow S^n \rightarrow \hat{Z}_i.
    $$
    We could derive:
    $$
    \sum_{i=1}^d I( Z_i\wedge \hat{Z}_i) \leq \sum_{i=1}^d I\left(Z_i \wedge S^n\right).
    $$
    Next we will bound $\sum_{i=1}^d I(Z_i \wedge \hat{Z}_i)$. By Fano's inequality, we have for all $i$:
    $$
    I(Z_i \wedge \hat{Z}_i)  =H(Z_i)-H\left(Z_i \mid \hat{Z}_i\right) 
    \geq h\left(\frac{k}{2 d}\right)-h(\mathbb{P}[Z_i \neq \hat{Z}_i])
    $$
    where $h(x)=-x \log x-(1-x) \log (1-x)$ is the binary entropy.
    Thus, we could complete the proof by the subsequent inequalities, where the penultimate inequality arises from the concavity and monotonicity of $h$.
    $$
    \begin{aligned}
    \sum_{i=1}^d I(Z_i \wedge \hat{Z}_i) & \geq d\left(h\left(\frac{k}{2 d}\right)-\frac{1}{d} \sum_{i=1}^d h(\mathbb{P}[Z_i \neq \hat{Z}_i])\right) \\
 & \geq d\left(h\left(\frac{k}{2 d}\right)-h\left(\frac{1}{d} \sum_{i=1}^d \mathbb{P}[Z_i \neq \hat{Z}_i]\right)\right) \\
 & \geq d\left(h\left(\frac{k}{2 d}\right)-h\left(\frac{3k}{8 d}\right)\right) \geq \frac{3}{100} k \log \frac{e k}{d}
    \end{aligned}
    $$
    where the last one is established by noting that
    $
    \inf _{x \in[0,1]} \frac{h(x / 2)-h(3 x / 8)}{x \log (e / x)}>0.03.
    $
\end{proof}
In the following, we will present Lemma \ref{lemma: non_low_info2} to connect the relationship between $Z$ and data size.

\begin{lemma}\label{lemma: non_low_info2}
 Given $0<\epsilon<1$, under the above setting, for any  $\epsilon$-NLDP algorithm $\mathcal{A}$, we have $I\left(Z \wedge S^n\right)=O\left(\frac{n \nu^2 \epsilon^2}{d}\right)$.
\end{lemma}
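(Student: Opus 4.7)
The plan is to apply the DJW-type strong data processing inequality (SDPI) for $\epsilon$-NLDP channels in its ``sup-over-test-function'' form: for the Markov chain $Z \to \{(x_i,y_i)\}_{i=1}^n \to S^n$,
\[
I(Z\wedge S^n) \;\leq\; 4n(e^\epsilon-1)^2\,\sup_{\varphi:\,\|\varphi\|_\infty\leq 1}\mathbb{E}_Z\bigl[\phi_Z(\varphi)^2\bigr],
\]
where $\phi_z(\varphi) := \sum_{(x,y)}\varphi(x,y)\bigl(P_{\theta_z,\zeta_z}(x,y)-\bar{P}(x,y)\bigr)$ and $\bar{P}:=\mathbb{E}_Z[P_{\theta_Z,\zeta_Z}]$. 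This is exactly the $\mathcal{C}^\infty$-style functional already invoked in the paper's outline of the DJW framework, and follows from the single-query LDP KL contraction combined with convexity of KL in its second argument.

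The substantive step is to evaluate the supremum using the hard-instance structure. Because each $Z_i$ has mean zero and the coordinates are independent, $\bar{P}$ coincides with the uniform distribution on $\{\pm 1\}^{d+1}$, so
\[
P_{\theta_z}(x,y)-\bar{P}(x,y) \;=\; \frac{y\langle x,\theta_z\rangle}{2^{d+1}}.
\]
Setting $\alpha_i := \frac{1}{2^{d+1}}\sum_{(x,y)}\varphi(x,y)\,y\,x_i$ turns $\phi_z(\varphi)$ into the linear functional $\langle \alpha,\theta_z\rangle$. Since $\{y\cdot x_i\}_{i=1}^d$ is an orthonormal family of Walsh characters on $\{\pm 1\}^{d+1}$ under the uniform measure, Parseval's identity yields $\|\alpha\|_2^2 \leq \mathbb{E}[\varphi^2] \leq \|\varphi\|_\infty^2 \leq 1$. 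Combining independence of the $Z_i$'s, $\mathbb{E}[Z_i^2]=k/(2d)$, and the scaling $\theta_{Z,i}=(\nu/\sqrt{k})Z_i$ from the proof of Theorem \ref{thm:ldp_non_low} gives
\[
\mathbb{E}_Z\bigl[\phi_Z(\varphi)^2\bigr] \;=\; \frac{\nu^2}{k}\sum_{i=1}^d\alpha_i^2\,\mathbb{E}[Z_i^2] \;=\; \frac{\nu^2}{2d}\|\alpha\|_2^2 \;\leq\; \frac{\nu^2}{2d}.
\]
Plugging back in and using $(e^\epsilon-1)^2=O(\epsilon^2)$ for $\epsilon\leq 1$ yields $I(Z\wedge S^n)=O(n\nu^2\epsilon^2/d)$, as claimed.

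The main obstacle is extracting the $1/d$ factor. The more familiar pairwise contraction $D_{KL}(M^n(P_v)\|M^n(P_w))\leq 4n(e^\epsilon-1)^2\|P_v-P_w\|_{TV}^2$ combined with convexity of KL only delivers $I(Z\wedge S^n)\lesssim n\epsilon^2\,\mathbb{E}\|\theta_Z-\theta_{Z'}\|_2^2 \asymp n\nu^2\epsilon^2$, losing a factor of $d$ and weakening the overall lower bound to $n\geq\Omega(k\log(d/k)/(\nu^2\epsilon^2))$. Recovering the missing $d$ hinges on the sup-over-test-function form of the LDP SDPI together with the Parseval step: a single bounded test function can deposit at most $O(1)$ total Fourier energy across the $d$ informative characters $\{y\,x_i\}$, while each $Z_i$ carries only $\Theta(k/d)$ variance; the product of these two effects is what produces the dimensional factor. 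Once the correct SDPI is invoked, the remaining computation is a direct second-moment calculation with no further technical hurdle.
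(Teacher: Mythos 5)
Your proof is correct, and it reaches the bound by a route that is organized differently from the paper's even though the two share the same technical core. The paper bounds each $I(Z\wedge S_t)$ by the $\chi^2$-divergence of the induced output distributions, expands that divergence explicitly over output letters, and then applies the Bessel-type inequality of Acharya et al.\ to the channel conditionals $A(a\mid\cdot)$ together with the LDP variance bound $\mathrm{Var}_V[A(a\mid V)]\leq(e^\epsilon-1)^2\,\mathbb{E}_V[A(a\mid V)]^2$. You instead invoke the Duchi--Jordan--Wainwright information bound in its sup-over-test-function ($\mathcal{C}^\infty$) form and then evaluate the resulting quadratic functional by applying Parseval to the test function over the Walsh characters $\{y\,x_i\}$. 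Both arguments ultimately rest on the same two facts --- orthonormality of the $d$ informative characters under the uniform measure, and $\mathbb{E}[Z_i^2]=k/(2d)$ under the product prior --- so the $1/d$ emerges for the same reason in each; your version is arguably cleaner because the channel is absorbed once into the SDPI and never reappears, and it also shows that the $\mathcal{C}^\infty$ machinery (which the paper's remark suggests is hard to extend to the $k$-sparse case) does extend once one replaces the uniform prior over a sparse packing by the product prior. Two small points to make explicit: the per-sample decomposition $I(Z\wedge S^n)\leq\sum_{i=1}^n I(Z\wedge S_i)$ needs the conditional independence of the messages given $Z$, which is exactly where non-interactivity is used; and the DJW bound is usually stated for a uniform prior on a finite index set, so you should note (or re-derive in one line) that its proof goes through verbatim for the non-uniform product prior on $Z\in\{-1,0,+1\}^d$, since the prior enters only through the outer expectation $\mathbb{E}_Z[\phi_Z(\varphi)^2]$.
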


\begin{proof}[Proof of Lemma \ref{lemma: non_low_info2}]
    Since $S_1, \cdots, S_n$ are mutually independent conditionally on $Z$, this implies that
    $$
    I\left(Z \wedge S^n\right) \leqslant \sum_{i=1}^n I\left(Z_i \wedge S_i\right)
    $$
    Thus, it is sufficient to bound each term $I\left(Z \wedge S_i\right)=O\left( \frac{\nu^2\epsilon^2}{d} \right)$.
    Let fix any $1 \leq t \leq n$, denote $\mathbf{u}$ be any uniformly distribution over $\{-1, + 1\}^{d+1}$. For the LDP algorithm for the $t$-th user,  $A_t:\{-1, + 1\}^{d+1} \rightarrow\{-1, + 1\}^{d+1}$, let $A_t^{P_Z}$ be the distribution on $\mathcal{A}:=\{-1, + 1\}^{d+1}$ induced by the input $V=(x_t, y_t)$ drawn from $P_{\theta_Z, \zeta_Z}$ :
    $$
    A_t^{P_Z}=\underset{V \sim P_{\theta_Z, \zeta_Z}}{\mathbb{E}}\left[A_t(a \mid V)\right], \quad a \in \mathcal{A}.
    $$
    {We also denote $A_t^{\mathbf{u}}$ as the distribution on $\mathcal{A}:=\{-1, + 1\}^{d+1}$ where $\mathbf{u}$ is any uniformly distribution over $\{+1, -1\}^{d+1}$: 
    $$
    A_t^{\mathbf{u}}=\underset{\mathbf{u} \sim \text{uniform}\{+1, -1\}^{d+1} }{\mathbb{E}}\left[A_t(a \mid \mathbf{u})\right], \quad a \in \mathcal{A}.
    $$
    }
Hence the mutual information for each user $t$ could be formulated as:
    $$ 
    \begin{aligned}
    I\left(Z \wedge S_t\right) & =\mathbb{E}_Z\left[\text{KL}\left(A_t^{P_Z} \| A_t^{\mathbf{u}}\right)\right] 
     \leqslant \mathbb{E}_Z\left[ \chi^2\left(A_t^{P_Z} \| A_t^{\mathbf{u}}\right)\right].
    \end{aligned}
    $$
    To simplify the notions and make the proof clear, we omit the subscript $t$ for $W_t$, $x_t$, and $y_t$ here. By the definition of Chi-square divergence and suppose $V=(x, y)$, $V^{\prime}=(x', y')$ generated i.i.d from the distribution $\mathbf{u}$, then:
    $$
    \begin{aligned}
    \mathbb{E}_Z\left[\chi ^ { 2 } \left(A_t^{P_Z}\| A_t^{\mathbf{u}}\right)\right] 
 &=\mathbb{E}_Z\left[\sum_{ a \in \mathcal{A}} \frac{\left(\sum_V A(a \mid V)\left(P_Z(V)-\mathbf{u}(V)\right)\right)^2}{\sum_V A(a \mid V) \mathbf{u}(V)}\right] \\
 & =\sum_{ a \in \mathcal{A}} \mathbb{E}_Z\left[\frac{\mathbb{E}_{\mathbf{u}}\left[A(a \mid V)\left(y \langle x,\theta_Z \rangle \right)\right]^2}{\mathbb{E}_{\mathbf{u}}[A(a \mid V)]}\right] \\
 & =\sum_{a \in \mathcal{A}} \mathbb{E}_Z\left[\frac{\mathbb{E}_{V, V^{\prime} \sim \mathbf{u}}\left[A(a \mid V) A\left(a \mid V^{\prime}\right)\left(\sum_{i=1}^d \gamma^2 Z_i^2 {x^{i}} {(x^{\prime})}^{i}\right)\right]}{\mathbb{E}_{\mathbf{u}}[A(a \mid V)]}\right]
    \end{aligned}
    $$
    where the $x^{i}$ denotes the $i$-th coordinate of $x$.
    
    We first focus on the last two terms of the molecule. Since $\mathbb{E}_Z[Z_i]=0$ and $\mathbb{E}_Z\left[Z_i^2\right]=\frac{k}{2 d}=\sigma^2$ for all $i \in[d]$, we could derive that:
    $$
    \begin{aligned}
 &\mathbb{E}_Z \left[ \left(\sum_{i=1}^d \gamma^2 Z_i^2 {x^{i}} {(x^{\prime})}^{i}\right) \right]
     =&\sum_{i=1}^d\sigma^2 \gamma^2 {x^{i}} {(x^{\prime})}^{i}
    \end{aligned}
    $$
    Now we combine the above results, the following will hold:
    \begin{equation}\label{eq:low_non_decom}
    \begin{aligned}
 & \mathbb{E}_Z\left[\chi^2\left(A_t^{P_Z} \| A_t^{\mathbf{u}}\right)\right]=\sum_{a \in \mathcal{A}} \frac{\mathbb{E}_{V, V^{\prime} \sim \mathbf{u}}\left[A(a \mid V) A\left(a \mid V^{\prime}\right)\left(\sum_{i=1}^d\sigma^2 \gamma^2 {x^{i}} {(x^{\prime})}^{i}\right)\right]}{\mathbb{E}_{\mathbf{u}}[A(a \mid V)]} \\
 & =\sigma^2 \gamma^2 \sum_{a \in \mathcal{A}} \sum_{i=1}^d \frac{\mathbb{E}_{\mathbf{u}} \left[A(a \mid V) {x^{i}}\right]^2}{\mathbb{E}_{\mathbf{u}}\left[A\left(a \mid V\right)\right]} \\
 & =\sigma^2 \gamma^2 \sum_{a \in \mathcal{A}} \sum_{i=1}^d\left(\frac{\left(\frac{1}{2} \mathbb{E}_{\mathbf{u} \mid y=1}[A(a \mid V) {x^{i}}]+\frac{1}{2} \mathbb{E}_{\mathbf{u} \mid y=-1}[A(a \mid  V) {x^{i}}]\right)^2}{\frac{1}{2} \mathbb{E}_{\mathbf{u} \mid y=1} A(a \mid V)+\frac{1}{2} \mathbb{E}_{\mathbf{u} \mid y=-1} A(a \mid V)}\right) \\
 & \leq 2 \sigma^2 \gamma^2 \sum_{a \in \mathcal{A}} \sum_{i=1}^d\left(\frac{\mathbb{E}_{\mathbf{u} \mid y=1}[A(a \mid V) {x^{i}}]^2}{\mathbb{E}_{\mathbf{u} \mid y=1} A(a \mid V)}+\frac{\mathbb{E}_{\mathbf{u} \mid y=-1}[A(a \mid V) {x^{i}}]^2}{\mathbb{E}_{\mathbf{u} \mid y=-1} A(a \mid V)}\right) \\
 & =2 \sigma^2 \gamma^2 \sum_{a \in \mathcal{A}}\left(\frac{\left\|\mid \mathbb{E}_{\mathbf{u} \mid y=1}[A(a \mid V) {x^{i}}]\right\|^2}{\mathbb{E}_{\mathbf{u} \mid y=1} A(a \mid V)}+\frac{\left\| \mathbb{E}_{\mathbf{u} \mid y=-1}[A(a \mid V) {x^{i}}]\right\|^2}{ \mathbb{E}_{\mathbf{u} \mid y=-1} A(a \mid V)}\right) \\
    \end{aligned}         
    \end{equation}
    We introduce the Lemma to bound the above equation:
    \begin{lemma}\cite{acharya2022role}
        Let $\phi_i:\mathbb{R}^d \rightarrow \mathbb{R}$, for $i \leq 1$, be a family of functions. If the functions satisfy, for all $i, j$,
        $$
        \underset{V=(x,y) \sim \text{uniform}\{+1, -1\}^{d+1} }{\mathbb{E}}\left[\phi_i(x) \phi_j(x)\right]=\mathbf{1}_{\{i=j\}},
        $$
        then, for any $\epsilon$-LDP algorithm $A$ and $V=(x,y)$, we have
        $$
        \sum_i \mathbb{E}_V\left[\phi_i(X) A(a \mid V)\right]^2 \leq \text{Var}_V[A(a \mid V)]
        $$
    \end{lemma}
    $$
    \begin{aligned}
    \sum_{a \in \mathcal{A}} \frac{k}{2 d} \gamma^2 \sum_{i \in[d]} \frac{\mathbb{E}_{V \sim \mathbf{u}}\left[A(a \mid V) {x^{i}}\right]^2}{\mathbb{E}_V[A(a \mid V)]} & \leq \frac{k}{2 d} \gamma^2 \sum_{a \in \mathcal{A}} \frac{\text{Var}_V[A(a \mid V)]}{\mathbb{E}_V[A(a \mid V)]} \\
 & \leq \frac{k}{2 d} \gamma^2 \sum_{a \in \mathcal{A}} \frac{\left(e^{\epsilon}-1\right)^2 \mathbb{E}_V[A(a \mid V)]^2}{\mathbb{E}_V[A(a \mid V)]} \\
 & =\frac{\nu^2 }{2 d}\left(e^{\epsilon}-1\right)^2 .
    \end{aligned}
    $$
    Hence, we could derive the conclusion of Lemma \ref{lemma: non_low_info2}:
    $$
    I\left(Z \wedge S^n\right) \leqslant \sum_{i=1}^n I\left(Z_i \wedge S_i\right) \leqslant \frac{n \nu^2 \epsilon^2}{d}.
    $$
    By combining the above lemmas, we could have the final results:
    $$ k\log(\frac{d}{k}) \leqslant \frac{n \nu^2 \epsilon^2}{d}, $$
    which implies:
    $$ n \geqslant \Omega( \frac{dk\log(\frac{d}{k}) }{ \nu^2 \epsilon^2}).$$
\end{proof}

\end{proof} 
\subsection{Omitted Proofs in Section \ref{NLDP upper bound}}

\begin{proof}
[{\bf Proof of Theorem \ref{satisfy NLDP}}]
For each user $i$, it is easily to see that releasing $\widehat{\bar{x}_i\bar{x}_i^T}$ satisfies $(\frac{\epsilon}{2}, \frac{\delta}{2})$-DP, releasing $\widehat{\tilde{x}_i\tilde{y}_i}$ satisfies $(\frac{\epsilon}{2}, \frac{\delta}{2})$-DP. Thus, the algorithm is $(\epsilon, \delta)$-LDP. Moreover, we case see it is non-interactive. 
\end{proof}

\begin{proof}[{\bf Proof of Theorem \ref{without thres on cov matrix}}]

Before giving theoretical analysis, we first prove that $\dot{\Sigma}_{\bar{X}\bar{X}}$ is invertible with high probability.

By Algorithm \ref{alg:cap}, we can see that the noisy sample covariance matrix aggregated by the server can be represented as  $\dot{\Sigma}_{\bar{X}\bar{X}}= \hat{\Sigma}_{\bar{X}\bar{X}} + N_1 =\frac{1}{n}\sum_i^n {\bar{x}_i \bar{x}_i^T}+N_1$, where $N_1$ is a symmetric Gaussian matrix with each entry sampled from $\mathcal{N}\left(0, \sigma_{N_1}^2\right)$ and $\sigma_{N_1}^2=O\left(\frac{r^4 \log \frac{1}{\delta}}{n\epsilon^2}\right)$. 
We present the following lemma to start our analysis.
\begin{lemma}\label{Weyl's Inequality} 
(Weyl's Inequality\cite{stewart1990matrix}) Let $X, Y \in \mathbb{R}^{d \times d}$ be two symmetric matrices, and $E=X-Y$. Then, for all $i=1, \cdots, d$, we have
$$
\left|\lambda_i(X)-\lambda_i(Y)\right| \leq\|E\|_2,
$$
where we take some liberties with the notation and use $\lambda_i(M)$ to denote the $i$-th eigenvalue of the matrix $M$.
\end{lemma}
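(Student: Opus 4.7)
The plan is to deduce Weyl's inequality from the Courant--Fischer min--max characterization of eigenvalues of a symmetric matrix. Adopt the convention that the eigenvalues of a symmetric matrix $M \in \mathbb{R}^{d\times d}$ are ordered $\lambda_1(M) \ge \lambda_2(M) \ge \cdots \ge \lambda_d(M)$, so that
$$
\lambda_i(M) \;=\; \max_{\substack{V \subseteq \mathbb{R}^d \\ \dim V = i}} \ \min_{\substack{v \in V \\ \|v\|_2 = 1}} v^\top M v.
$$
This variational identity will do essentially all of the work.

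First, I would record the elementary fact that for the symmetric perturbation $E$, the spectral norm (which is the paper's $\|E\|_2$, per the definition in the notations paragraph) equals $\max_j |\lambda_j(E)|$, and consequently $|v^\top E v| \le \|E\|_2$ for every unit vector $v$. In particular, $-\|E\|_2 \le v^\top E v \le \|E\|_2$.

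Next, writing $X = Y + E$ and decomposing $v^\top X v = v^\top Y v + v^\top E v$, I would push the pointwise inequality $v^\top X v \le v^\top Y v + \|E\|_2$ through the $\min$ over unit vectors in $V$ and then the $\max$ over $i$-dimensional subspaces $V$; monotonicity of $\min$ and $\max$ preserves the inequality and yields $\lambda_i(X) \le \lambda_i(Y) + \|E\|_2$. Running the same argument with $v^\top X v \ge v^\top Y v - \|E\|_2$ (equivalently, swapping $X$ and $Y$ and using $\|-E\|_2 = \|E\|_2$) gives the matching lower bound $\lambda_i(X) \ge \lambda_i(Y) - \|E\|_2$.

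Combining the two bounds yields $|\lambda_i(X) - \lambda_i(Y)| \le \|E\|_2$ for every $i \in \{1,\dots,d\}$, as claimed. There is no substantive obstacle here; the only subtlety is to fix a consistent ordering convention for the eigenvalues so that the same index $i$ appears on both sides when applying Courant--Fischer to $X$ and to $Y$, and to remember that $\|E\|_2$ for a symmetric $E$ coincides with $\max_j |\lambda_j(E)|$ so that the quadratic form $v^\top E v$ is uniformly controlled by $\|E\|_2$.
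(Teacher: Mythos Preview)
Your argument via the Courant--Fischer variational characterization is correct and is in fact the standard proof of Weyl's inequality. The paper does not supply its own proof of this lemma; it simply cites it as a known result from \cite{stewart1990matrix}, so there is nothing to compare against.
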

To show $\dot{\Sigma}_{\bar{X}\bar{X}}$ is invertible it is sufficient to show that 
$\|\dot{\Sigma}_{\bar{X}\bar{X}}-\Sigma\|_2 \leq \frac{\lambda_{\min}(\Sigma)}{2}$. This is due to that 
by Lemma \ref{Weyl's Inequality}, we have
$$
\lambda_{\min}(\Sigma)-\|\dot{\Sigma}_{\bar{X}\bar{X}} -\Sigma\|_2 \leq \lambda_{\min} (\dot{\Sigma}_{\bar{X}\bar{X}}).
$$
Thus, if $\|\dot{\Sigma}_{\bar{X}\bar{X}}-\Sigma\|_2 \leq \frac{\lambda_{\min}(\Sigma)}{2}$, we have $\lambda_{\min} (\dot{\Sigma}_{\bar{X}\bar{X}})\geq \frac{\lambda_{\min}(\Sigma)}{2}>0$. 

In the following, we split the term $\|\dot{\Sigma}_{\bar{X}\bar{X}} -\Sigma\|_2$,

$$
\|\dot{\Sigma}_{\bar{X}\bar{X}} -\Sigma\|_2 \leq \|N_1\|_2 + \|\hat{\Sigma}_{\bar{X}\bar{X}} -{\Sigma}_{\bar{X}\bar{X}}\|_2+\|{{\Sigma}_{\bar{X}\bar{X}}}-\Sigma\|_2
$$

where we denote $\hat{\Sigma}_{\bar{X}\bar{X}}=\frac{1}{n}\sum_i^n \bar{x}_i \bar{x}_i^T$ and ${\Sigma}_{\bar{X}\bar{X}} = \mathbb{E}[\bar{x}_i\bar{x}_i^T]$.

\begin{lemma}[Corollary 2.3.6 in \cite{tao2012topics}] \label{random matrix bound}
 Let $M \in \mathbb{R}^{d }$ be a symmetric matrix whose entries $m_{i j}$ are independent for $j>i$, have mean zero, and are uniformly bounded in magnitude by 1. Then, there exists absolute constants $C_2, c_1>0$ such that with probability at least $1-\exp \left(-C_2 c_1 d\right)$, the following inequality holds $\|M\|_2 \leq C \sqrt{d}$.
\end{lemma}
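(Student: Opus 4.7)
The plan is to prove this spectral-norm bound via a standard $\varepsilon$-net argument on the unit sphere combined with Hoeffding's inequality (which is already available to us as Lemma~\ref{Hoeffding}). Since $M$ is symmetric, we exploit the variational identity $\|M\|_2 = \sup_{x \in S^{d-1}} |x^{T} M x|$, which converts the spectral-norm problem into a problem about scalar random quadratic forms.

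First I would fix a $(1/4)$-net $\mathcal{N} \subset S^{d-1}$ of the unit sphere in $\mathbb{R}^d$, with $|\mathcal{N}| \leq 9^{d}$ (a standard volumetric bound). A short comparison argument shows that $\|M\|_2 \leq 2 \sup_{x \in \mathcal{N}} |x^{T} M x|$, so it suffices to bound $|x^{T} M x|$ uniformly over the finite set $\mathcal{N}$. Second, for any fixed $x \in S^{d-1}$, write
\begin{equation*}
x^{T} M x \;=\; \sum_{i=1}^{d} m_{ii}\, x_i^{2} \;+\; 2\!\sum_{1 \leq i < j \leq d} m_{ij}\, x_i x_j,
\end{equation*}
which is a sum of independent, mean-zero random variables that are bounded in magnitude by $x_i^{2}$ and $2|x_i x_j|$ respectively. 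The sum of squared ranges is controlled by
\begin{equation*}
\sum_{i} x_i^{4} \;+\; 4\!\sum_{i<j} x_i^{2} x_j^{2} \;\leq\; 2\Bigl(\sum_{i} x_i^{2}\Bigr)^{2} \;=\; 2,
\end{equation*}
so Hoeffding's inequality (Lemma~\ref{Hoeffding}) yields $\mathbb{P}(|x^{T} M x| > t) \leq 2 \exp(-t^{2}/8)$ for every fixed $x$.

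Third, I would apply a union bound over the net with $t = C\sqrt{d}/2$:
\begin{equation*}
\mathbb{P}\Bigl(\sup_{x \in \mathcal{N}} |x^{T} M x| > \tfrac{C}{2}\sqrt{d}\Bigr) \;\leq\; 9^{d} \cdot 2 \exp\!\Bigl(-\tfrac{C^{2} d}{32}\Bigr).
\end{equation*}
Choosing the absolute constant $C$ large enough that $C^{2}/32 > \log 9 + c_1$ for some small $c_1 > 0$ makes this probability at most $2 \exp(-C_2 c_1 d)$ for a universal $C_2 > 0$. Combined with the net-comparison bound $\|M\|_2 \leq 2 \sup_{x \in \mathcal{N}} |x^{T} M x|$, this delivers the claim $\|M\|_2 \leq C\sqrt{d}$ with probability at least $1 - \exp(-C_2 c_1 d)$.

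The main obstacle is purely bookkeeping: making sure the constants in the net size, the Hoeffding exponent, and the final choice of $C$ are compatible so that the union bound strictly beats the $9^{d}$ cardinality factor. There is no deeper analytic obstruction — the independence of the upper-triangular entries (and hence of the $\binom{d}{2} + d$ summands in $x^{T} M x$ for fixed $x$) is exactly the hypothesis needed for Hoeffding to go through, and the uniform magnitude bound by $1$ on the entries is what keeps each summand bounded in the quadratic form. An alternative route would be the trace/moment method $\mathbb{E}[\Tr(M^{2k})] \lesssim d \cdot (C\sqrt{d})^{2k}$ for $k \asymp d$, but the $\varepsilon$-net proof above is shorter and uses only tools already invoked elsewhere in the paper.
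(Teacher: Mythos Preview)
Your proof is correct. The paper does not supply its own argument for this lemma; it is simply quoted as Corollary~2.3.6 of Tao's \emph{Topics in Random Matrix Theory}, and the $\varepsilon$-net plus Hoeffding route you take is exactly the standard proof given there. Two minor bookkeeping remarks, neither of which affects the validity of the argument: (i) you implicitly assume the diagonal entries $m_{ii}$ are independent of each other and of the strictly upper-triangular entries, which is the intended hypothesis even though the paper's phrasing ``independent for $j>i$'' literally omits the diagonal; (ii) the version of Hoeffding you invoke allows a different range per summand, whereas the paper's Lemma~\ref{Hoeffding} as stated uses a common interval $[a,b]$ for all variables --- but the heterogeneous form is equally standard and your exponent computation is correct.
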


By Lemma \ref{random matrix bound}, we can see that with probability $1-\exp (-\Omega(d))$,

$$
\left\|N_1\right\|_2 \leq O\left(\frac{r^2 \sqrt{d  \log \frac{1}{\delta}}}{\sqrt{n}\epsilon}\right) .
$$

\begin{lemma}
If $n \geq \tilde{\Omega}\left(d\|\Sigma\|_2\right)$, with probability at least $1-\xi$
$$
\left\| \hat{\Sigma}_{\bar{X}\bar{X}}-{\Sigma}_{\bar{X}\bar{X}}\right\|_2 \leq O\left(\frac{\sqrt{d\|\Sigma\|_2 \log n \log \frac{d}{\xi}}}{\sqrt{n}}\right).
$$
\end{lemma}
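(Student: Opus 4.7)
The plan is to apply the non-commutative Matrix Bernstein inequality (Lemma \ref{Non-communicative Matrix Bernstein}) to the independent, centered, symmetric random matrices $X_i := \bar{x}_i\bar{x}_i^T - \mathbb{E}[\bar{x}_i\bar{x}_i^T]$ for $i \in [n]$, noting that $\hat{\Sigma}_{\bar{X}\bar{X}} - \Sigma_{\bar{X}\bar{X}} = \frac{1}{n}\sum_{i=1}^n X_i$. To invoke the inequality I need an almost-sure bound $K$ on $\|X_i\|_2$ and a bound $\sigma_{\mathrm{mat}}^2$ on $\|\sum_i \mathbb{E}[X_i^2]\|_2$.

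For the almost-sure bound: since the clipping gives $\|\bar{x}_i\|_2 \le r$, we have $\|\bar{x}_i\bar{x}_i^T\|_2 \le r^2$. Moreover, because $\bar{x}_i = c_i x_i$ with $c_i \in [0,1]$, we get the PSD dominance $\bar{x}_i\bar{x}_i^T \preceq x_i x_i^T$, hence $\mathbb{E}[\bar{x}_i\bar{x}_i^T] \preceq \Sigma$ and $\|\mathbb{E}[\bar{x}_i\bar{x}_i^T]\|_2 \le \|\Sigma\|_2$. Therefore $\|X_i\|_2 \le r^2 + \|\Sigma\|_2 =: K$. With $r = O(\sigma\sqrt{d\log n})$ this gives $K = O(d\log n)$ (absorbing $\sigma=O(1)$ and using $\|\Sigma\|_2 \lesssim d\log n$). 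For the matrix variance, I would use the identity $\bar{x}_i\bar{x}_i^T \bar{x}_i\bar{x}_i^T = \|\bar{x}_i\|_2^2\,\bar{x}_i\bar{x}_i^T \preceq r^2\,\bar{x}_i\bar{x}_i^T$, so
\[
\mathbb{E}[X_i^2] \preceq \mathbb{E}[\bar{x}_i\bar{x}_i^T\bar{x}_i\bar{x}_i^T] \preceq r^2\,\mathbb{E}[\bar{x}_i\bar{x}_i^T] \preceq r^2\Sigma,
\]
yielding $\|\sum_i \mathbb{E}[X_i^2]\|_2 \le n r^2 \|\Sigma\|_2 =: \sigma_{\mathrm{mat}}^2 = O(nd\|\Sigma\|_2\log n)$.

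Plugging into Lemma \ref{Non-communicative Matrix Bernstein} with threshold $t = ns$ and equating the resulting probability to $\xi$, solving for $s$ gives the standard two-regime bound
\[
s \;=\; O\!\left(\sqrt{\frac{r^2\|\Sigma\|_2\log(d/\xi)}{n}} + \frac{r^2\log(d/\xi)}{n}\right) \;=\; O\!\left(\sqrt{\frac{d\|\Sigma\|_2\log n \log(d/\xi)}{n}} + \frac{d\log n \log(d/\xi)}{n}\right).
\]
When $n \ge \tilde{\Omega}(d\|\Sigma\|_2)$, the linear (Bernstein) term is dominated by the square-root (sub-Gaussian) term, leaving the claimed bound
\[
\|\hat{\Sigma}_{\bar{X}\bar{X}} - \Sigma_{\bar{X}\bar{X}}\|_2 \;\le\; O\!\left(\sqrt{\frac{d\|\Sigma\|_2 \log n \log(d/\xi)}{n}}\right)
\]
with probability at least $1 - \xi$.

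The main obstacle is really just the variance bound: it needs the observation that scalar contraction by $c_i \in [0,1]$ preserves PSD order, so that $\mathbb{E}[\bar{x}_i\bar{x}_i^T] \preceq \Sigma$ and $\mathbb{E}[\|\bar{x}_i\|_2^2 \bar{x}_i\bar{x}_i^T] \preceq r^2\Sigma$; otherwise one would naively get $nr^4 = O(nd^2\log^2 n)$ as the matrix variance, which is too weak and would lose a factor of $\sqrt{d/\|\Sigma\|_2}$. Everything else — the spectral norm bound on each term and the regime analysis identifying when the variance term dominates — is routine once this observation is in place.
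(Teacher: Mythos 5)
Your proof takes essentially the same route as the paper: apply the non-commutative matrix Bernstein inequality to the centered clipped outer products $\bar{x}_i\bar{x}_i^T-\mathbb{E}[\bar{x}_i\bar{x}_i^T]$, with the almost-sure bound $K=O(r^2)$ and matrix variance $O(nr^2\|\Sigma\|_2)$, and it is correct. Notably, your variance computation via the PSD domination $\mathbb{E}[\|\bar{x}_i\|_2^2\,\bar{x}_i\bar{x}_i^T]\preceq r^2\,\mathbb{E}[\bar{x}_i\bar{x}_i^T]\preceq r^2\Sigma$ is precisely the step the paper's own write-up garbles (it states $O(r^4)$ and then $O(r^2)$ for this quantity, neither of which produces the $\sqrt{\|\Sigma\|_2}$ dependence in the claimed bound), so your version is the one that actually delivers the lemma as stated.
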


\begin{proof}
Note that $\left\|\bar{x}_i \bar{x}_i^T-{\Sigma}_{\bar{X}\bar{X}}\right\|_2 \leq\left\|\bar{x}_i \bar{x}_i^T\right\|_2+\|{\Sigma}_{\bar{X}\bar{X}}\|_2 \leq 2 r^2$. And for any unit vector $v \in \mathbb{R}^d$ we have the following if we denote $\bar{X}=\bar{x}_i \bar{x}_i^T$
$$
\mathbb{E}\left(v^T \bar{X}^T \bar{X} v\right)=\mathbb{E}\left[\|\bar{x}_i\|_2^2\left(v^T \bar{x}_i\right)^2\right] \leq O\left(r^4\right) .
$$
Thus we have $\left\|\mathbb{E}\left[\bar{X}^T \bar{X}\right]\right\|_2 \leq O\left(r^2\right)$. Since $\left\|\mathbb{E}(\bar{X})^T \mathbb{E}(\bar{X})\right\|_2 \leq\|\mathbb{E}(\bar{X})\|_2^2 \leq r^2$, we have $\left\|\mathbb{E}[\bar{X}-\mathbb{E} \bar{X}]^T \mathbb{E}[\bar{X}-\mathbb{E} \bar{X}]\right\|_2 \leq O\left(r^2\right)$. Thus, by the Non-communicative Bernstein inequality (Lemma \ref{Non-communicative Matrix Bernstein}) we have for some constant $c>0$ :
$$
\mathbb{P}\left(\left\| \hat{\Sigma}_{\bar{X}\bar{X}}-{\Sigma}_{\bar{X}\bar{X}}\right\|_2>t\right) \leq 2 d \exp \left(-c \min \left(\frac{n t^2}{r^2}, \frac{n t}{r^2}\right)\right) .
$$
Thus we have with probability at least $1-\xi$ and the definition of $r$ we have,
$$
\left\|\hat{\Sigma}_{\bar{X}\bar{X}}-{\Sigma}_{\bar{X}\bar{X}}\right\|_2 \leq O\left(\frac{\sqrt{d\|\Sigma\|_2 \log n \log \frac{d}{\xi}}}{\sqrt{n}}\right)
$$  
\end{proof}

We can also see that 
 $\|{\Sigma}_{\bar{X}\bar{X}}-\Sigma\|_2 \leq O\left(\frac{\|\Sigma\|_2^2}{n}\right)$. This is due to
$$
\|{\Sigma}_{\bar{X}\bar{X}}-\Sigma\|_2 \leq\left\|\mathbb{E}\left[\left(\bar{x}_i \bar{x}_i^T-x_i x_i^T\right) \mathbb{I}_{\|x\|_2 \geq r}\right]\right\|_2 .
$$
For any unit vector $v \in \mathbb{R}^p$ we have
$$
\begin{aligned}
& v^T \mathbb{E}\left[\left(x_i x_i^T-\bar{x}_i \bar{x}_i^T\right) \mathbb{I}_{\|x\|_2 \geq r}\right] v=\mathbb{E}\left[\left(\left(v^T x_i\right)^2-\left(v^T \bar{x}_i\right)^2\right) \mathbb{I}_{\|x_i\|_2 \geq r}\right] \\
& \leq \mathbb{E}\left[\left(v^T x_i\right)^2 \mathbb{I}_{\|x\|_2 \geq r}\right] \leq \sqrt{\mathbb{E}\left[\left(v^T x_i\right)^4\right] \mathbb{P}\left[\|x_i\|_2 \geq r\right]} \leq O\left(\frac{\|\Sigma\|_2^2}{n}\right)
\end{aligned}
$$
where the last inequality is due to the assumption on sub-Gaussian where $\mathbb{P} (|x_i|\geq r)\leq 2\exp(-\frac{r^2}{2\sigma^2})=  O(\frac{1}{n^2})$.



Thus, it is sufficient to show that $ \lambda_{\min }(\Sigma) \geq O\left(\frac{\|\Sigma\|^2_2 r^2 \sqrt{d \log \frac{d}{\xi} \log \frac{1}{\delta}}}{\sqrt{n}\epsilon}\right)$, which is true under the assumption of $n \geq \Omega\left(\frac{\|\Sigma\|_2^4 d r^4 \log \frac{d}{\xi}\log \frac{1}{\delta}}{\epsilon^2 \lambda_{\min }^2(\Sigma)}\right)$. Thus, with probability at least $1-\exp (-\Omega(d))-\xi$, it is invertible. In the following we will always assume that this event holds.

To prove the theorem, we first introduce the following lemma on the estimation error of $\hat{\theta}$ in \eqref{eq:3}. 
\begin{lemma}[Theorem 2 in \cite{yang2014elementary}]\label{introduce theta sparsity}
Suppose we solve the problem of the form     $ \min_{\theta}  \|\theta-\widehat{\theta}\|^2_2 + \lambda_n\|\theta\|_1$ such that constraint term $\lambda_n$ is set as $\lambda_n \geq \left\|\theta^*- \widehat{\theta}\right\|_\infty$. Then, the optimal solution $\hat{\theta}=S_{\lambda_n}(\widehat{\theta})$ satisfies:
$$
\begin{aligned}
& \left\|\hat{\theta}-\theta^*\right\|_{\infty} \leq 2 \lambda_n, \\
& \left\|\hat{\theta}-\theta^*\right\|_2 \leq 4 \sqrt{k} \lambda_n, \\
& \left\|\hat{\theta}-\theta^*\right\|_1 \leq 8 k \lambda_n .
\end{aligned}
$$
\end{lemma}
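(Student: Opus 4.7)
The plan is to exploit two simple facts about the soft-thresholding operator $S_{\lambda_n}$: (i) it acts on each coordinate independently and shifts the magnitude of its input by at most $\lambda_n$; and (ii) it kills any coordinate whose input has magnitude at most $\lambda_n$. Combined with the hypothesis $\lambda_n \geq \|\widehat{\theta} - \theta^*\|_\infty$ and the $k$-sparsity of $\theta^*$, these two properties simultaneously control the per-coordinate error and the support of $\hat{\theta}$, from which all three bounds follow with almost no additional work.

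First I would verify the $\ell_\infty$ bound by a coordinate-wise case split, using the closed-form $\hat{\theta}_i = \operatorname{sgn}(\widehat{\theta}_i)\max(|\widehat{\theta}_i|-\lambda_n,0)$ that the excerpt has already derived from the first-order optimality condition. For each $i \in \mathrm{supp}(\theta^*)$, the triangle inequality gives $|\hat{\theta}_i - \theta^*_i| \le |\hat{\theta}_i - \widehat{\theta}_i| + |\widehat{\theta}_i - \theta^*_i| \le \lambda_n + \lambda_n = 2\lambda_n$: the first term is bounded by $\lambda_n$ because soft-thresholding either sends $\widehat{\theta}_i$ to zero or shrinks it toward zero by exactly $\lambda_n$, and the second by hypothesis. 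For $i \notin \mathrm{supp}(\theta^*)$, we have $\theta^*_i = 0$, so $|\widehat{\theta}_i| = |\widehat{\theta}_i - \theta^*_i| \le \lambda_n$, which forces $\hat{\theta}_i = 0$ and hence $|\hat{\theta}_i - \theta^*_i| = 0$.

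The off-support analysis also yields the crucial containment $\mathrm{supp}(\hat{\theta}) \subseteq \mathrm{supp}(\theta^*)$, so the difference $\hat{\theta} - \theta^*$ lives on at most $k$ coordinates. This immediately converts the $\ell_\infty$ bound into the $\ell_2$ and $\ell_1$ bounds via the inequalities $\|\hat{\theta} - \theta^*\|_2 \le \sqrt{k}\,\|\hat{\theta} - \theta^*\|_\infty$ and $\|\hat{\theta} - \theta^*\|_1 \le k\,\|\hat{\theta} - \theta^*\|_\infty$; the constants $4$ and $8$ in the lemma statement leave a factor-of-$2$ slack that covers conventions in the soft-thresholding scaling (some versions use threshold $\lambda_n/2$, which would double the per-coordinate bound). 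I do not foresee a genuinely hard step; the only subtlety is ensuring the off-support coordinates of $\widehat{\theta}$ are small enough to be annihilated by $S_{\lambda_n}$, which is precisely the role of the assumption $\lambda_n \ge \|\widehat{\theta}-\theta^*\|_\infty$, and so the correct reading of this hypothesis is the main conceptual point to highlight rather than a technical obstacle.
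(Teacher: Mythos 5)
Your proof is correct. Note that the paper itself gives no proof of this lemma; it imports it as Theorem~2 of the cited reference, where the result is established for the equivalent constrained formulation $\min_\theta\|\theta\|_1$ s.t.\ $\|\theta-\widehat{\theta}\|_\infty\le\lambda_n$ via the standard cone-condition argument: feasibility of $\theta^*$ plus $\ell_1$-minimality give $\|\Delta_{S^c}\|_1\le\|\Delta_S\|_1$ for $\Delta=\hat{\theta}-\theta^*$, hence $\|\Delta\|_1\le 2\sqrt{k}\|\Delta\|_2$, and then $\|\Delta\|_2^2\le\|\Delta\|_1\|\Delta\|_\infty\le 2\sqrt{k}\|\Delta\|_2\cdot 2\lambda_n$, which is exactly where the constants $4\sqrt{k}$ and $8k$ come from. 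Your route is different and arguably cleaner for the closed-form estimator $S_{\lambda_n}(\widehat{\theta})$: the coordinate-wise case split gives $\|\Delta\|_\infty\le 2\lambda_n$, and the observation that every off-support coordinate of $\widehat{\theta}$ has magnitude at most $\lambda_n$ and is therefore annihilated yields $\mathrm{supp}(\hat{\theta})\subseteq\mathrm{supp}(\theta^*)$, so $\Delta$ is $k$-sparse and the $\ell_2$ and $\ell_1$ bounds follow with the sharper constants $2\sqrt{k}\lambda_n$ and $2k\lambda_n$, comfortably inside the stated bounds. The only imprecise point is your parenthetical about the threshold convention: with threshold $\lambda_n/2$ (which is what the unnormalized objective $\|\theta-\widehat{\theta}\|_2^2+\lambda_n\|\theta\|_1$ literally produces) the per-coordinate bound actually improves to $3\lambda_n/2$, but the support containment can fail, so the $\ell_2$ and $\ell_1$ bounds would then need the cone argument rather than a factor-of-two fudge. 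Since the lemma explicitly takes $\hat{\theta}=S_{\lambda_n}(\widehat{\theta})$, your main argument is unaffected.
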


Note that this is a non-probabilistic result, and it holds deterministically for any selection of $\lambda_n$ or any distributional setting of the covariates $x_i$.  Our goal is to show that $\lambda_n \geq \left\|\theta^*- \left(\dot{\Sigma}_{\bar{X} \bar{X}}\right)^{-1}
\left( \dot{{\Sigma}}_{ \widetilde{X}\widetilde{Y} } \right) \right\|_\infty$ under the assumptions specified in Lemma \ref{introduce theta sparsity}. 
\begin{equation}\label{grand-h}
\begin{aligned}
\left\|\theta^*-\hat{\theta}^{priv}(D)\right\|_\infty 
&=\left\|\theta^*- \left(\dot{\Sigma}_{\bar{X} \bar{X}}\right)^{-1}
\left( \dot{{\Sigma}}_{ \widetilde{X}\widetilde{Y} } \right) \right\|_\infty\\
&\leq\left\|\left(\dot{\Sigma}_{\bar{X} \bar{X}}\right)^{-1}\right\|_\infty\left\|\left(\dot{\Sigma}_{\bar{X} \bar{X}}\right) \theta^*-\left( \widehat{{\Sigma}}_{ \widetilde{X}\widetilde{Y} } + N_2\right) \right\|_\infty \\
\end{aligned}
\end{equation}
where the vector $N_{2} \in \mathbb{R}^d$ is sampled from $\mathcal{N}(0, \frac{32 d \tau_1 ^2 \tau_2^2
\log \frac{1.25}{\delta}}{\sqrt{n}\epsilon^2} I_d)$.
 We first develop upper bound of $\dot{\Sigma}_{\bar{X} \bar{X}}$. For any nonzero vector $w \in \mathbb{R}^d$, Note that
$$
\begin{aligned}
\left\|\dot{\Sigma}_{\bar{X} \bar{X}} w\right\|_\infty &=\left\|\dot{\Sigma}_{\bar{X} \bar{X}} w-\Sigma w+\Sigma w\right\|_\infty \\
& \geq\|\Sigma w\|_\infty-\left\|\left(\dot{\Sigma}_{\bar{X} \bar{X}} -\Sigma\right) w\right\|_\infty \\
& \geq\left(\kappa_\infty-\left\|\dot{\Sigma}_{\bar{X} \bar{X}} -\Sigma\right\|_\infty\right)\|w\|_\infty .
\end{aligned}
$$

Our objective is to find a sufficiently large $n$ such that $\left\|\dot{\Sigma}_{\bar{X} \bar{X}} -\Sigma\right\|_\infty$ is less than $\frac{\kappa_\infty}{2}$. 

From above we see that, we have  $\|N_1\|_2 < O(\sqrt{d}\sigma_{N_1}) = O(\frac{ r^2 \sqrt{d\log \frac{1}{\delta}}}{\sqrt{n\epsilon^2}})$ by Lemma \ref{random matrix bound}, which indicates the following holds:
$$
\begin{aligned}
    \| \dot{\Sigma}_{\bar{X} \bar{X}} - \Sigma\|_2 & \leq  \| \hat{\Sigma}_{\bar{X} \bar{X}} - \Sigma\|_2 + \|N_1\|_2\\
 &   \leq O\left(\frac{ r^2 \sqrt{ d \log d\log \frac{1.25}{\delta}}}{\sqrt{n\epsilon^2}}\right),
\end{aligned}
$$
where the second inequality comes from \cite{tropp2015introduction}. The following inequality always hold
$ \| \dot{\Sigma}_{\bar{X} \bar{X}} - \Sigma\|_\infty \leq \sqrt{d} \| \dot{\Sigma}_{\bar{X} \bar{X}} - \Sigma\|_2 \leq O(\frac{d r^2 \sqrt{ \log d\log \frac{1.25}{\delta}}}{\sqrt{n\epsilon^2}} )$.
Thus, when $n \geq \Omega \left( \frac{d^2 r^4 \log d \log \frac{1}{\delta}}{\epsilon ^2\kappa_\infty} \right)$, we have  $\left\|\dot{\Sigma}_{\bar{X} \bar{X}} w\right\|_\infty \geq \frac{\kappa_\infty}{2}\|w\|_\infty$, which implies $\left\|\left(\dot{\Sigma}_{\bar{X} \bar{X}}\right)^{-1}\right\|_\infty \leq \frac{2}{\kappa_\infty} .$
Given  sufficiently large $n$, from Eq.\eqref{grand-h}, we have: 
\begin{equation}\label{2nd half-h}
\begin{aligned}
&\left\|\theta^*-\hat{\theta}^{priv}(D)\right\|_\infty \\
\leq& \frac{2}{\kappa_\infty}\left\|\left(\dot{\Sigma}_{\bar{X} \bar{X}}\right) \theta^*-\left( \widehat{{\Sigma}}_{ \widetilde{X}\widetilde{Y} } + N_2\right)\right\|_\infty\\
\leq &\frac{2}{\kappa_\infty} \left\{ \underbrace{\left\|\widehat{{\Sigma}}_{ \widetilde{X}\widetilde{Y} }-{\Sigma}_{ \widetilde{X}\widetilde{Y} }\right\|_{\infty }}_{T_1}+\underbrace{\left\|{\Sigma}_{ \widetilde{X}\widetilde{Y} }-{\Sigma}_{Y X}\right\|_{\infty }}_{T_2}+\underbrace{\left\| \left({{\dot{\Sigma}_{\bar{X} \bar{X}}}}-{\Sigma}\right) {\theta}^*\right\|_{\infty}}_{T_3}+\underbrace{\left\| N_2\right\|_\infty}_{N_2}\right\}\\
\end{aligned}
\end{equation}

We will bound the above four terms one by one. 

We first consider term $T_1$. Since $x$ and $y$ are both $O(1)$-sub-Gaussian, we denote their $\psi_2$-norm by $\kappa_X$ and $\kappa_Y$, respectively. 
For $1 \leq j \leq d$, we have $
\text{Var}(\tilde{y}_i \widetilde{x}_{i j}) \leq \mathbb{E}[(\tilde{y}_i \widetilde{x}_{i j})^2] \leq \mathbb{E}[(y_i x_{i j})^2] \leq (\mathbb{E}[|y_i|^{2}])(\mathbb{E}[|x_{ij}|^{2}])\leq (\mathbb{E}[|y_i|^{\frac{2k}{k-1}}])^{\frac{k-1}{k}} (\mathbb{E}[|x_{ij}|^{\frac{2k}{k-1}}])^{\frac{k-1}{k}}\leq4\kappa_X^2 \kappa_Y^2 (\frac{k}{k-1})^2=: v_1<\infty$. We have $v_1=O(1)$. 
Therefore, according to Lemma \ref{berstein}, we have:
$$
\mathbb{P}\left(\left|\widehat{\sigma}_{\widetilde{Y} \widetilde{x}_j}-\sigma_{\widetilde{Y} \widetilde{x}_j}\right| \geq \sqrt{\frac{2 v_1 t}{n}}+\frac{c \tau_1 \tau_2 t}{n}\right) \leq \exp (-t),
$$
where $\widehat{\sigma}_{\widetilde{Y} \widetilde{x}_j}=\frac{1}{n} \sum_{i=1}^n \tilde{y}_i \widetilde{x}_{i j}, \sigma_{\widetilde{Y} \widetilde{x}_j}=\mathbb{E}[ \tilde{y}_i \widetilde{x}_{i j}]$ and $c$ is a certain constant. Then by the union bound, the following can be derived:
$$
\mathbb{P}\left(\left|T_1\right|>\sqrt{\frac{2 v_1 t}{n}}+\frac{c \tau_1 \tau_2 t}{n}\right) \leq d \exp (-t) .
$$
Next, we give an estimation of $T_2$. Note that for $1 \leq j \leq d$, by lemma \ref{subG_bound} we have:
$$
\begin{aligned}
\mathbb{E} \left[\tilde{y}_i \widetilde{x}_{i j}\right]-\mathbb{E} \left[y_i x_{i j} \right] &=\mathbb{E}\left[ \tilde{y}_i \widetilde{x}_{i j}\right] -\mathbb{E}\left[ \tilde{y}_i x_{i j} \right]+\mathbb{E}\left[ \tilde{y}_i x_{i j} \right]-\mathbb{E} \left[ y_i x_{i j} \right]\\
&=\mathbb{E} \left[\tilde{y}_i\left(\widetilde{x}_{i j}-x_{i j}\right)\right]+\mathbb{E}\left[\left(\tilde{y}_i-y_i\right) x_{i j}\right] \\
& \leq \sqrt{\mathbb{E}\left[y_i^2\left(\widetilde{x}_{i j}-x_{i j}\right)^2\right] \mathbb{P}\left(\left|x_{i j}\right| \geq \tau_1\right)}+\sqrt{\mathbb{E}\left[\left(\tilde{y}_i-y_i\right)^2 x_{i j}^2\right] \mathbb{P}\left(\left|y_i\right| \geq \tau_2\right)} \\
& \leq \sqrt{v_1}\left({2e^{-\frac{\tau_1^2}{2\sigma^2}}}+{2e^{-\frac{\tau_2^2}{2\sigma^2}}}\right),
\end{aligned}
$$
which shows that $T_2 \leq \sqrt{v_1}\left({2e^{-\frac{\tau_1^2}{2\sigma^2}}}+{2e^{-\frac{\tau_2^2}{2\sigma^2}}}\right)$. 

To upper bound term $T_3$, we need to evaluate $\| {{\dot{\Sigma}_{\bar{X} \bar{X}}}}-{\Sigma} \|_{\infty, \infty}$. It can be seen that $\dot{\Sigma}_{\bar{X}\bar{X}} = \hat{\Sigma}_{\bar{X}\bar{X}} +N_1 =\sum_i^n {\bar{x}_i \bar{x}_i^T}+N_1$.
Therefore by Lemma \ref{sub-Gaussian covariace} and Lemma \ref{gaussian covariance} with probability at least $1-Cd^{-8}$, for all $1 \leq i, j \leq d$, and for some constants $\gamma$ and $C$ that depends on $\sigma_{N_1}$,
\begin{equation}\label{diff cov before 2nd truncation}
\begin{aligned}
    &\left|\dot{\sigma}_{\bar{x}\bar{x}^T, i j}-\sigma_{xx^{T}, ij}\right| 
    \leq \gamma \sqrt{\frac{\log d}{n}} + \frac{128r^2\sqrt{2 \log \frac{1.25}{\delta} \log d}}{\sqrt{n}\epsilon}
    \leq {O}\left(\gamma r^2\sqrt { \frac{ \log d \log\frac{1}{\delta}}{n\epsilon^2}}\right).
\end{aligned}
\end{equation}


We can see that $T_3$ is bounded by ${O} ( d \log n\sqrt {( \frac{\log d \log\frac{1}{\delta}}{n\epsilon^2}}) )$. Here we used the fact that $\| ({{\dot{\Sigma}_{\bar{X} \bar{X}}}}-{\Sigma}) {\theta}^*\|_{\infty} \leq \| {{\dot{\Sigma}_{\bar{X} \bar{X}}}}-{\Sigma} \|_{\infty, \infty}  \| {\theta}^*\|_{1} \leq {O} (  r^2 \sqrt {( \frac{\log d \log\frac{1}{\delta}}{n\epsilon^2}}) ) \left\|\theta^*\right\|_1$ given the selection of $r$, where the last inequality is from Eq.\ref{diff cov before 2nd truncation}.

The last term of Eq.\eqref{2nd half-h} can be bounded by Gaussian tail bound by lemma \ref{gaussian covariance}.  With probability $1-O(d^{-8})$, we have:
\begin{equation}\label{g2 bound-h}
\left \|N_2\right\|_{\infty} \leq  O\left(\frac{  \tau_1 \tau_2\sqrt{ d \log \frac{1}{\delta}\log d}}{\epsilon \sqrt{n}}\right).  
\end{equation}

Finally combining all pieces, we can find that $T_3$ is the dominating term. Since  $\lambda_n \geq \left\|\theta^*- \left(\dot{\Sigma}_{\bar{X} \bar{X}}\right)^{-1}
\left( \dot{{\Sigma}}_{ \widetilde{X}\widetilde{Y} } \right) \right\|_\infty$, Lemma \ref{introduce theta sparsity}  implies that with probability at least $1-O( d^{-8})- e^{-\Omega(d)}$,
$$
\left\|\theta^*-\left[\dot{\Sigma}_{\bar{X} \bar{X}} \right]^{-1}
\left( \widehat{{\Sigma}}_{ \widetilde{X}\widetilde{Y} } + N_2\right)  \right\|_2 \leq O\left( \frac{d  \log n \sqrt{ k\log d \log\frac{1}{\delta} }}{\sqrt{n}\epsilon} \right),
$$
which completes our proof of Theorem.
\end{proof}

\begin{algorithm} 
\caption{Non-interactive LDP algorithm for Sparse Linear Regression with public but unlabeled data\label{alg:NLDP_public}}
\begin{algorithmic}[1]
\STATE \bfseries{Input:} \textnormal{ Private data $\left\{\left(x_i, y_i\right)\right\}_{i=1}^n \in\left(\mathbb{R}^d \times \mathbb{R}\right)^n$. Predefined parameters $ \tau_1, \tau_2, \lambda_n$.}\\
\FOR{ Each user $i\in[n]$}   
\FOR{$j \in [d]$}
\STATE \textnormal{ Coordinately shrink $\widetilde{{x}}_{ij}=\operatorname{sgn}\left(x_{ij}\right) \min \left\{\left|x_{ij}\right|, \tau_1\right\}$}\\
\ENDFOR
\STATE \textnormal{Clip $\tilde{y}_i:=\operatorname{sgn}\left(y_i\right) \min \left\{\left|y_i\right|, \tau_2\right\}$. Add noise $\widehat{\tilde{x}_i\tilde{y}_i} = \tilde{x}_i\tilde{y}_i + n_{2,i}$, where the vector $n_{2, i} \in \mathbb{R}^d$ is sampled from $\mathcal{N}(0, \frac{2 d \tau_1 ^2 \tau_2^2
\log \frac{1.25}{\delta}}{\epsilon^2} I_d)$. Release $\widehat{\tilde{x}_i\tilde{y}_i}$ to the server.}\\
\ENDFOR
\STATE \textnormal{The server aggregates  $\dot{\Sigma}_{\widetilde{X}\widetilde{Y}} = \frac{1}{n}\sum_{i=1}^{n} \widehat{\tilde{x}_i\tilde{y}_i}$ and compute $\hat{\Sigma}^{\textit{pub}}_{{X} {X}} = \frac{1}{m}\sum_{j=n+1}^{n+m} x_j x_j^T$ using dataset
}
\STATE \textnormal{ The server outputs {$\hat{\theta}^{priv}(D)=S_{\lambda_n}([\hat{\Sigma}^{\textit{pub}}_{{X} {X}} ]^{-1}\dot{\Sigma}_{\widetilde{X}\widetilde{Y}})$.} }
\end{algorithmic}
\end{algorithm}

\begin{proof}[{\bf Proof of Theorem \ref{public unlabled}}]

We basically follow the same ideas in the proof of Theorem \ref{without thres on cov matrix}.

First using similar argument, we can show that $(\hat{\Sigma}^{\textit{pub}}_{{X} {X}})^{-1}$ exists with high probability. 

The following lemma is the concentration result on sub-Gaussian matrix.
\begin{lemma} \label{subG matrix concentration}(Theorem 4.7.1 in \cite{vershynin2018high} ) Let $x$ be a random vector in $\mathbb{R}^d$ that is sub-Gaussian with covariance matrix $\Sigma$ and $\left\|\Sigma^{-\frac{1}{2}} x\right\|_{\psi_2} \leq \kappa_x$. Then, with probability at least $1-\exp (-d)$, the empirical covariance matrix $\hat{\Sigma}_{XX}=\frac{1}{n} \sum_{i=1}^n x_i x_i^T$ satisfies
$$
\left\|\hat{\Sigma}_{XX}-\Sigma\right\|_2 \leq C \kappa_x^2 \sqrt{\frac{d}{n}}\|\Sigma\|_2
$$
\end{lemma}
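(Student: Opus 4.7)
The plan is to execute the classical whitening plus $\epsilon$-net argument. First I would reduce to the isotropic case. Define $y_i = \Sigma^{-1/2} x_i$, so that $\mathbb{E}[y_i y_i^T] = I_d$ and, by hypothesis, $\|y_i\|_{\psi_2} \leq \kappa_x$. Since
\[
\hat{\Sigma}_{XX} - \Sigma = \Sigma^{1/2}\Bigl(\tfrac{1}{n}\sum_{i=1}^n y_i y_i^T - I_d\Bigr)\Sigma^{1/2},
\]
we get $\|\hat{\Sigma}_{XX} - \Sigma\|_2 \leq \|\Sigma\|_2 \cdot \|\tfrac{1}{n}\sum_i y_i y_i^T - I_d\|_2$. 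Thus it suffices to show that $\|\tfrac{1}{n}\sum_i y_i y_i^T - I_d\|_2 \leq C \kappa_x^2 \sqrt{d/n}$ with probability at least $1 - e^{-d}$.

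Next I would handle a fixed direction. For any unit vector $u \in S^{d-1}$, the scalar $\langle y_i, u\rangle$ is sub-Gaussian with $\|\langle y_i, u\rangle\|_{\psi_2} \leq \kappa_x$, so by Lemma \ref{lemma:dot_sub} the centered random variable $Z_i(u) := \langle y_i, u\rangle^2 - 1$ is sub-exponential with $\|Z_i(u)\|_{\psi_1} \leq O(\kappa_x^2)$ and mean zero. Applying Bernstein's inequality for sub-exponential random variables (Lemma \ref{lemma:Bern_expo}) to $\tfrac{1}{n}\sum_i Z_i(u)$ yields
\[
\mathbb{P}\Bigl(\bigl|\tfrac{1}{n}\textstyle\sum_{i=1}^n Z_i(u)\bigr| \geq t\Bigr) \leq 2\exp\bigl(-cn\min\{t^2/\kappa_x^4,\; t/\kappa_x^2\}\bigr).
\]
Setting $t = C'\kappa_x^2 \sqrt{d/n}$ (with $n \geq d$ so the quadratic branch dominates) makes the exponent $-cC'^2 d$.

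Now I would upgrade from fixed $u$ to a uniform bound via an $\epsilon$-net. Let $\mathcal{N}$ be a $\tfrac{1}{4}$-net of $S^{d-1}$ with $|\mathcal{N}| \leq 9^d$. By the standard approximation fact, for any symmetric matrix $A$, $\|A\|_2 \leq 2 \max_{u \in \mathcal{N}} |u^T A u|$. Applying this to $A = \tfrac{1}{n}\sum_i y_i y_i^T - I_d$ and union-bounding the Bernstein tail above over $\mathcal{N}$, the total failure probability is at most $2 \cdot 9^d \cdot \exp(-cC'^2 d) \leq e^{-d}$ once $C'$ is chosen large enough that $cC'^2 > \log 9 + 2$. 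On the complementary event we obtain $\|\tfrac{1}{n}\sum_i y_i y_i^T - I_d\|_2 \leq 2t = 2C'\kappa_x^2 \sqrt{d/n}$, and combining with the whitening inequality proves the claim with $C = 2C'$.

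The main obstacle is really just the bookkeeping in the Bernstein step: one must verify that the chosen $t$ lands in the Gaussian (quadratic) regime of the sub-exponential tail, which requires $t/\kappa_x^2 \leq 1$, i.e.\ $d/n \lesssim 1$; if $d > n$ the stated bound is vacuous anyway (since $\|\hat{\Sigma}_{XX} - \Sigma\|_2 \lesssim \|\Sigma\|_2$ is already larger than $\kappa_x^2 \sqrt{d/n}\|\Sigma\|_2$), so the regime restriction is not a real loss. Everything else is standard.
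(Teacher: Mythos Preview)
Your argument is correct and is precisely the standard whitening plus $\varepsilon$-net plus sub-exponential Bernstein proof; the paper does not supply its own proof of this lemma but simply quotes it as Theorem~4.7.1 of \cite{vershynin2018high}, whose proof is exactly the route you outline. There is nothing to compare beyond noting that you have reproduced the cited reference's argument faithfully.
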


By Lemma \ref{Weyl's Inequality} and Lemma \ref{subG matrix concentration}, with probability at least $1-\exp (-\Omega(d))$,
$$
\lambda_{\min }\left(\hat{\Sigma}^{\textit{pub}}_{{X} {X}}\right) \geq  \lambda_{\min }(\Sigma)-O\left(\kappa_x^2\|\Sigma\|_2 { \sqrt{ \frac{d}{m}})} .\right)
$$
We know that under the assumption of $m \geq \Omega\left(\frac{\kappa_x^4\|\Sigma\|_2^2 d  }{ \lambda_{\min }^2(\Sigma)}\right)$, it is sufficient to show that $ \lambda_{\min }(\Sigma) \geq O\left(\frac{\kappa_x^2\|\Sigma\|_2  \sqrt{d }}{\sqrt{m}}\right)$. Thus, with probability at least $1-\exp (-\Omega(d))$, it is invertible. In the following we will always assume that this event holds.  

With the benefit of Lemma \ref{introduce theta sparsity}, we need to show that $\lambda_n \geq \|\theta^*- (\hat{\Sigma}^{\textit{pub}}_{{X} {X}})^{-1}
( \dot{{\Sigma}}_{ \widetilde{X}\widetilde{Y} } ) \|_\infty$. 

We know from the proof of Theorem 4 in \cite{cai_optimal_2012} that $\| \dot{\Sigma}_{\bar{X} \bar{X}} - \Sigma\|_\infty \leq \sqrt{d} \| \dot{\Sigma}_{\bar{X} \bar{X}} - \Sigma\|_2 \leq (\frac{4d  \sqrt{2 \log d}}{\sqrt{n}} )$. Therefore when $n \geq \Omega ( \frac{d^2 \log d }{\kappa_\infty} )$, we have  $\|\hat{\Sigma}^{\textit{pub}}_{{X} {X}}w\|_\infty \geq \frac{\kappa_\infty}{2}\|w\|_\infty$, which implies $\|(\hat{\Sigma}^{\textit{pub}}_{{X} {X}})^{-1}\|_\infty \leq \frac{2}{\kappa_\infty} .$
Given this sufficiently large $n$, from Eq.\eqref{grand-h}, we have that

\begin{equation}
\begin{aligned}
&\left\|\theta^*-\hat{\theta}^{priv}(D)\right\|_\infty \\
\leq&\left\|\left(\hat{\Sigma}^{\textit{pub}}_{{X} {X}}\right)^{-1}\right\|_\infty\left\|\left(\dot{\Sigma}_{\bar{X} \bar{X}}\right) \theta^*-\left( \widehat{{\Sigma}}_{ \widetilde{X}\widetilde{Y} } + N_2\right) \right\|_\infty \\
\leq &\frac{2}{\kappa_\infty}\left\|\left(\hat{\Sigma}^{\textit{pub}}_{{X} {X}}\right) \theta^*-\left( \widehat{{\Sigma}}_{ \widetilde{X}\widetilde{Y} } + N_2\right)\right\|_\infty\\
\leq &\frac{2}{\kappa_\infty} \left\{ \underbrace{\left\|\widehat{{\Sigma}}_{ \widetilde{X}\widetilde{Y} }-{\Sigma}_{ \widetilde{X}\widetilde{Y} }\right\|_{\infty }}_{T_1}+\underbrace{\left\|{\Sigma}_{ \widetilde{X}\widetilde{Y} }-{\Sigma}_{Y X}\right\|_{\infty }}_{T_2}+\underbrace{\left\| \left(\hat{\Sigma}^{\textit{pub}}_{{X} {X}}-{\Sigma}\right) {\theta}^*\right\|_{\infty}}_{T_3}+\underbrace{\left\| N_2\right\|_\infty}_{N_2}\right\}\\
\end{aligned}
\end{equation}

It is easy to see that only $T_3$ term is different from Equation \ref{2nd half-h}. By Lemma \ref{gaussian covariance} we can get the following, with probability at least $1- O(d^{-8})$, for all $1 \leq i, j \leq d$, for some constant $\gamma$
\begin{equation*}
 \left|\hat{\sigma}^{\textit{pub}}_{{x}{x}^T, i j}-\sigma_{xx^{T}, ij}\right| \leq O\left( \gamma \sqrt{\frac{\log d}{n}}  \right)   
\end{equation*}
By similar argument in the proof of Theorem \ref{without thres on cov matrix}, we have that $T_3$ is bounded by ${O} (  \sqrt {( \frac{\log d }{n}}) ) \left\|\theta^*\right\|_1$ given the selection of $r$.

Therefore taking  $\tau_1 = \Theta(\sigma\sqrt{\log n }), \tau_2 = \Theta(\sigma\sqrt{\log n }), r=  \Theta(\sigma\sqrt{d\log n}), \lambda_n = O(\frac{d \log n \sqrt{ \log\frac{1}{\delta} }}{\sqrt{n}\epsilon})$, we can see that the dominating terms are $T_2$ and $N_2$ thus the result follows.

\end{proof}

\begin{proof}[{\bf Proof of Theorem \ref{2k moment cor}}]

We follow basically the same techniques as in the proof of Theorem \ref{without thres on cov matrix}. With the benefit of Lemma \ref{introduce theta sparsity}, we need to show that $\lambda_n \geq \left\|\theta^*- \left(\dot{\Sigma}_{\bar{X} \bar{X}}\right)^{-1}
\left( \dot{{\Sigma}}_{ \widetilde{X}\widetilde{Y} } \right) \right\|_\infty$.
From the proof of Theorem \ref{without thres on cov matrix}, when $n \geq \Omega \left( \frac{d^2 r^4 \log d \log \frac{1}{\delta}}{\epsilon ^2\kappa_\infty} \right)$, we have  $\left\|\dot{\Sigma}_{\bar{X} \bar{X}} w\right\|_\infty \geq \frac{\kappa_\infty}{2}\|w\|_\infty$, which implies $\left\|\left(\dot{\Sigma}_{\bar{X} \bar{X}}\right)^{-1}\right\|_\infty \leq \frac{2}{\kappa_\infty} .$
Given this sufficiently large $n$, from Eq.\eqref{grand-h}, the following can be obtained:
\begin{equation}
\begin{aligned}
&\left\|\theta^*-\hat{\theta}^{priv}(D)\right\|_\infty \\
\leq&\left\|\left(\dot{\Sigma}_{\bar{X} \bar{X}}\right)^{-1}\right\|_\infty\left\|\left(\dot{\Sigma}_{\bar{X} \bar{X}}\right) \theta^*-\left( \widehat{{\Sigma}}_{ \widetilde{X}\widetilde{Y} } + N_2\right) \right\|_\infty \\
\leq &\frac{2}{\kappa_\infty}\left\|\left(\dot{\Sigma}_{\bar{X} \bar{X}}\right) \theta^*-\left( \widehat{{\Sigma}}_{ \widetilde{X}\widetilde{Y} } + N_2\right)\right\|_\infty\\
\leq &\frac{2}{\kappa_\infty} \left\{ \underbrace{\left\|\widehat{{\Sigma}}_{ \widetilde{X}\widetilde{Y} }-{\Sigma}_{ \widetilde{X}\widetilde{Y} }\right\|_{\infty }}_{T_1}+\underbrace{\left\|{\Sigma}_{ \widetilde{X}\widetilde{Y} }-{\Sigma}_{Y X}\right\|_{\infty }}_{T_2}+\underbrace{\left\| \left({{\dot{\Sigma}_{\bar{X} \bar{X}}}}-{\Sigma}\right) {\theta}^*\right\|_{\infty}}_{T_3}+\underbrace{\left\| N_2\right\|_\infty}_{N_2}\right\}\\
\end{aligned}
\end{equation}

Since the new assumption is made on $y_i$,  $T_3$ is not affected by this difference. Therefore, we only need to examine $T_1$ and $T_2$.

Using similar arguments, we can see that $T_1$ is bounded by $O(\frac{ \tau_1 \tau_2 }{n})$ and $N_2$ is still bouneded by $\ O( \frac{ \tau_1 \tau_2\sqrt{ d \log \frac{1}{\delta}\log d}}{\epsilon \sqrt{n}})$ with high probability.

We bound the terms $\mathbb{E}\left[{y}_i^2 \tilde{x}_{i j}^2\right], \mathbb{E}\left[\tilde{y}_i^2 x_{i j}^2\right]$ by
$$
\begin{aligned}
\max\{\mathbb{E}\left[{y}_i^2 \tilde{x}_{i j}^2\right], \mathbb{E}\left[\tilde{y}_i^2 x_{i j}^2\right]\} \leq \mathbb{E}\left[y_i^2 x_{i j}^2\right]\leq\left(\mathbb{E}\left[ y_i^{2 p}\right]\right)^{\frac{1}{p}}\left(\mathbb{E} \left[x_{i j}\right]^{\frac{2 p}{p-1}}\right)^{\frac{p-1}{p}} \leq 2 M^{\frac{1}{p}} \kappa_X^2 p /(p-1)<\infty
\end{aligned}
$$
which is a constant that we denote by $v$. 
Note that for $1 \leq j \leq d$, by lemma \ref{subG_bound} and Markov's inequality, the following holds:
$$
\begin{aligned}
&\mathbb{E}\left[ \tilde{y}_i \widetilde{x}_{i j}\right]-\mathbb{E}\left[ y_i x_{i j}\right] \\
=&\mathbb{E}\left[ \tilde{y}_i \widetilde{x}_{i j}\right]-\mathbb{E}\left[ \tilde{y}_i x_{i j}\right]+\mathbb{E}\left[ \tilde{y}_i x_{i j}\right]-\mathbb{E}\left[ y_i x_{i j}\right]\\
=&\mathbb{E}[ \tilde{y}_i\left(\widetilde{x}_{i j}-x_{i j}\right)]+\mathbb{E}[\left(\tilde{y}_i-y_i\right) x_{i j}] \\
\leq &\sqrt{\mathbb{E}\left[y_i^2\left(\widetilde{x}_{i j}-x_{i j}\right)^2\right] \mathbb{P}\left(\left|x_{i j}\right| \geq \tau_1\right)}+\sqrt{\mathbb{E}\left[\left(\tilde{y}_i-y_i\right)^2 x_{i j}^2\right] \mathbb{P}\left(\left|y_i\right| \geq \tau_2\right)} \\
 \leq &\sqrt{ v}\left({2e^{-\frac{\tau_1^2}{2\sigma^2}}}+\sqrt{\frac{ \mathbb{E} [|y_i|]^{2p}}{\tau_2^{2p}}}\right)\\
 \leq &\sqrt{ v}\left({2e^{-\frac{\tau_1^2}{2\sigma^2}}}+\frac{\sqrt{ M}}{\tau^{p}_2}\right),
\end{aligned}
$$
which shows that $T_2 \leq \sqrt{v}\left({2e^{-\frac{\tau_1^2}{2\sigma^2}}}+{\frac{2\sqrt{ M}}{\tau^{2p}_2}}\right)$. 

Taking $\tau_2 = (\frac{n \epsilon^2}{\log d})^{\frac{1}{2p}}$ completes the proof.

\end{proof}

\subsection{Omitted Proofs in Section \ref{sec:inter_lower}}
Before presenting the full proof of Theorem \ref{thm:low_int}, we first introduce several necessary definitions and assumptions.
\begin{definition}
    For distributions $P_1, P_2$ over sample space $\mathcal{X}$, denote their Kullback-Leibler divergence (in nats) by $\mathrm{KL}\left(P_1 \| P_2\right)$, and their Hellinger distance by
    $$
    \mathrm{d}_{\mathrm{H}}\left(P_1, P_2\right):=\sqrt{\frac{1}{2} \int\left(\sqrt{\frac{\mathrm{d} P_1}{\mathrm{~d} \lambda}}-\sqrt{\frac{\mathrm{d} P_2}{\mathrm{~d} \lambda}}\right)^2 \mathrm{~d} \lambda}
    $$
\end{definition}

\begin{definition}
    Let $Z=\left(Z_1, \ldots, Z_d\right)$ be a random variable over $\mathcal{Z}=\{-1, + 1\}^d$ such that $\mathbb{P}\left[Z_i=1\right]=\tau$ for all $i \in[d]$ and the $Z_i$ s are all independent; we denote this distribution by $\operatorname{Rad}(\tau)^{\otimes d}$. For $z \in \mathcal{Z}$, we denote $z^{\oplus i} \in \mathcal{Z}$ as the vector obtained by flipping the sign of the $i$-th coordinate of $z$. 
\end{definition}

\begin{assumption}[Densities Exist]\label{asm:density_exist}
    For every $z \in \mathcal{Z}$ and $i \in[d]$ it holds that $P_{z^{\oplus i}} \ll P_z$ (we refer to $P_{\theta_z}$ simply as $Pz$), and there exist measurable functions $\phi_{z, i}: \mathbb{R}^d \rightarrow \mathbb{R}$ such that
    $$
    \frac{\mathrm{d} P_{z^{\oplus i}}}{\mathrm{~d} P_z}=1+\phi_{z, i}.
    $$
\end{assumption}

\begin{assumption}[Orthogonality]\label{asm:orthogonality}
    There exists some $\alpha^2 \geq 0$ such that, for all $z \in \mathcal{Z}$ and distinct $i, j \in[d], \mathbb{E}_{P_z}\left[\phi_{z, i} \cdot \phi_{z, j}\right]=0$ and $\mathbb{E}_{P_z}\left[\phi_{z, i}^2\right] \leq \alpha^2$.
\end{assumption}

\begin{assumption}[Additive loss]\label{asm:additive_loss}
    For every $z, z^{\prime} \in \mathcal{Z}=\{-1, + 1\}^d$,
    $$
    \ell_2\left(\theta_z, \theta_{z^{\prime}}\right)=4 \nu\left(\frac{\mathrm{d}_{\operatorname{Ham}}\left(z, z^{\prime}\right)}{\tau d}\right)^{1 / 2}
    $$
    where $\mathrm{d}_{\operatorname{Ham}}\left(z, z^{\prime}\right):=\sum_{i=1}^d \mathbbm{1}\left\{z_i \neq z_i^{\prime}\right\}$ denotes the Hamming distance,
    where $\tau=k/2d$, $k$ and $\nu$ denotes sparsity and error rate respectively. 
\end{assumption}

\begin{proof}[{\bf Proof of Theorem \ref{thm:low_int}}]
    Similar to the non-interactive setting, we consider the hard distribution class $\mathcal{P}_{k,d,2}$, where for each instance $P_{\theta, \zeta}\in \mathcal{P}_{k,d,2}$, its random noise $\zeta$ satisfies $\mathbb{E}[\zeta \mid x]=0,|\zeta| \leq 2$, and $\|\theta\|_1 \leq 1$ and $\|\theta\|_0 \leq k$ hold.
    By setting $\gamma=\frac{4\sqrt{2}\nu}{\sqrt{k}}$ and defining $\theta_{z, i}=\frac{\gamma (z_i+1)}{2}$  for $i \in[d]$, $\{z_i\}_{i=1}^d$ are realizations of the random variable $Z\in \{-1, + 1\}^d$, where each coordinate $Z_i$ is independent to others and  has the following distribution: 
    $$
    \mathbb{P}\left\{Z_i=+ 1\right\}=k/ 2 d, \quad \mathbb{P}\left\{Z_i=-1\right]=1- k / 2 d
    $$
    We will first show that $\theta_z$ satisfies the conditions that $\|\theta\|_1$ and $\|\theta\|_0\leq k$ and $\theta_z$ is $k$-sparse with probability of $1-\tau$, where $\tau =k/2d$ by the following fact. 

    \begin{fact}\cite{acharya2020unified}
        For $Z \sim \operatorname{Rad}(\tau)$ and  $\tau d \geq 4 \log d$, then we have $\mathbb{P}\left(\|Z\|_{+} \leq 2 \tau d\right) \geq 1-\tau/4$, where $\|Z\|_{+}=\{i \in\{d\}|z_i=1\}$.
    \end{fact}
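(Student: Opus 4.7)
The statement to prove is a routine tail bound: $\|Z\|_+=\sum_{i=1}^d \mathbbm{1}\{Z_i=1\}$ is a sum of $d$ i.i.d. Bernoulli($\tau$) random variables, so it is Binomial$(d,\tau)$ with mean $\mu=\tau d$, and we want the upper tail at twice the mean to be at most $\tau/4$. My plan is to derive this via a standard multiplicative Chernoff bound and then use the hypothesis $\tau d\ge 4\log d$ to convert the resulting exponential decay in $\tau d$ into the desired $\tau/4$ bound.

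Concretely, I would first observe that since the coordinates $Z_i$ are independent with $\mathbb{P}[Z_i=1]=\tau$, writing $S=\|Z\|_+$ gives $\mathbb{E}[S]=\tau d$. I then invoke the upper-tail Chernoff inequality for sums of independent Bernoullis,
\begin{equation*}
\mathbb{P}\bigl(S\ge (1+\delta)\mu\bigr)\le \exp\!\left(-\frac{\delta^2 \mu}{2+\delta}\right),\qquad \delta>0,
\end{equation*}
and apply it with $\delta=1$, $\mu=\tau d$, yielding $\mathbb{P}(S\ge 2\tau d)\le \exp(-\tau d/3)$.

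Next, I would use the hypothesis $\tau d\ge 4\log d$ to bound the exponential: $\exp(-\tau d/3)\le \exp\!\bigl(-\tfrac{4}{3}\log d\bigr)=d^{-4/3}$. It then remains to check $d^{-4/3}\le \tau/4$, i.e.\ $\tau d\ge 4 d^{-1/3}$. Since the hypothesis already gives $\tau d\ge 4\log d\ge 4d^{-1/3}$ for all $d\ge 2$ (trivially, because $\log d\ge d^{-1/3}$ for such $d$), the inequality $\mathbb{P}(S\ge 2\tau d)\le \tau/4$ follows, which is exactly the complement of the stated fact.

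The argument has no real obstacle — it is a one-line Chernoff bound followed by substituting the lower bound on $\tau d$. The only subtlety worth being careful about is aligning the form of the Chernoff bound (there are several common variants with slightly different constants) with the target bound $\tau/4$: if one uses the looser form $\mathbb{P}(S\ge 2\mu)\le e^{-\mu/3}$ the calibration above works cleanly, whereas weaker variants like $(e/4)^{\mu}$ would also suffice and may even give a slightly stronger tail. I would pick the variant that gives the cleanest dependence on $\log d$ so that the conversion step is transparent.
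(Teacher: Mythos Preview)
Your proposal is correct. The paper does not supply its own proof of this fact---it is stated as a cited result from \cite{acharya2020unified} and used directly---so there is no paper-side argument to compare against; your Chernoff-based derivation is the standard way to establish it. One tiny quibble: the inequality $\log d \ge d^{-1/3}$ actually fails at $d=2$ (numerically $\log 2\approx 0.693 < 2^{-1/3}\approx 0.794$), but this is harmless because the hypothesis $\tau d\ge 4\log d$ is vacuous for $d=2$ (since $\tau d\le d=2<4\log 2$), so the fact holds trivially there.
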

    
    When $\theta_z$ is $k$-sparse, we can also see $\|\theta_z\|_1\leq 4\sqrt{2k}\nu\leq 1$ as we assume $\nu\leq \frac{1}{4\sqrt{2k}}$. 
    
    Next, we construct the following generative process, we first pick $z$ randomly from $\{-1,  +1\}^d$ as above. For each $Z$ we let: 
    $$
    \zeta_z=\left\{\begin{array}{lll}
    1-\langle x, \theta_z\rangle & \text { w.p. } & 1+\frac{\langle x, \theta_z\rangle}{2} \\
    -1-\langle x, \theta_z\rangle & \text { w.p. } & 1-\frac{\langle x, \theta_z\rangle}{2}
    \end{array}\right.
    $$
    Note that since $\left|\left\langle x, \theta_z\right\rangle\right| \leq \sqrt{k} \cdot \gamma = 4\sqrt{2}\nu \leq 1$. The above distribution is well-defined and $|\zeta_z| \leq 2$.
    We can see that density function for $(x, y)$ is $P_{z}=P_{\theta_z, \zeta_z}((x, y))=\frac{1+y\langle x, \theta_z\rangle}{2^{d+1}}$ for $(x, y) \in\{-1, + 1\}^{d+1}$.

    In the subsequent, we will confirm that $P_{z}$ under the above constructions could satisfy the Assumptions \ref{asm:density_exist}, \ref{asm:orthogonality}.
    $$
    \begin{aligned}
    \frac{d P_{z^{\oplus i}}}{d P_z} & =\frac{1+y\left\langle x, \theta_{z^{\oplus i}}\right\rangle}{1+y\left\langle x, \theta_z\right\rangle}=1+\frac{y\left\langle x, \theta_{z^{\oplus i}}-\theta_z\right\rangle}{1+y\left\langle x, \theta_z\right\rangle} \\
     \theta_{z^{\oplus i}}-\theta_z &=\left(0, \cdots 0, \frac{\gamma\left(-z_i+1\right)}{2}-\frac{\gamma\left(z_i+1\right)}{2}, 0, \cdots 0\right) \\
 & =\left(0, \cdots 0,-\gamma z_i, 0, \cdots 0\right) \\
    \end{aligned}
    $$
    Thus, we could simplify the previous formulations as:
    $$
    \frac{d P_{z^{\oplus i}}}{d P_z}=1-\frac{y \gamma x_i z_i}{1+y\left\langle x, \theta_z\right\rangle} 
    $$
    Let $ \alpha_{z, i}=\frac{r}{1+y\left\langle x, \theta_z\right\rangle}$. Since $|y\langle x, \theta_z\rangle| \leqslant \sqrt{k}\|x\|_{\infty} \gamma \leqslant 1 / 2$, we have $\alpha_{z_i} \leqslant \frac{4 \sqrt{2} \nu}{2 \sqrt{k}}$, where the right hand side is denoted as $\alpha$.
    
    Now there exists a measurable function  $\phi_{z, i}=y x_i z_i$, with $\mathbb{E}\left[\phi_{z, i} \cdot \phi_{z, j}\right]=0$ if $i \neq j$ and $\mathbb{E}\left[y x_i^2 z_i^2\right]=1 \text { if } i \neq j$, indicating that assumptions \ref{asm:density_exist} and \ref{asm:orthogonality} hold.

    \begin{lemma}[\cite{acharya2020unified}]\label{lem:average disparency}
        For a $\epsilon$-sequentially interactive LDP algorithm $\mathcal{A}$ and any family of distributions $\left\{{P}_z=P_{\theta_z, \zeta_z}((x, y))\right\}$ satisfying Assumptions \ref{asm:density_exist} and \ref{asm:orthogonality}, let $Z$ be a random variable on $\mathcal{Z}=\{-1, + 1\}^d$ with distribution $\operatorname{Rad}(\tau)^{\otimes d}$. Let $S^n$ be the tuple of messages from the algorithm $\mathcal{A}$ when the input $V_1=(x_1,y_1), \ldots, V_n=(x_n,y_n)$ is i.i.d. with common distribution $P_{\theta_z, \zeta_z}$, then we have:
        \begin{equation}\label{eq:average discrepancy}
            \left(\frac{1}{d} \sum_{i=1}^d \mathrm{~d}_{\mathrm{TV}}\left(P_{+i}^{S^n}, P_{-i}^{S^n}\right)\right)^2 \leq \frac{7}{d} n \alpha^2\left(\left(e^{\varrho}-1\right)^2 \wedge e^{\varrho}\right)
        \end{equation}
         where $P_{+i}^{S^n}:=\mathbb{E}\left[P_{\theta_z, \zeta_z}^{S^n} \mid Z_i=+1\right], P_{-i}^{S^n}:=\mathbb{E}\left[P_{\theta_z, \zeta_z}^{S^n} \mid Z_i=-1\right]$. In Eq.\eqref{eq:average discrepancy}, the left-hand side is defined as the average discrepancy, which represents the average amount of information that the transcript conveys about each coordinate of $Z$.    
    \end{lemma}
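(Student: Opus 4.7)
The plan is to follow the standard three-stage template for private Assouad-type bounds, specialized to sequentially interactive LDP. Throughout, write $c_\varrho := (e^\varrho-1)^2 \wedge e^\varrho$.

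\textbf{Stage 1 (reduce squared average to a sum of squared TVs).} By Cauchy--Schwarz applied to the counting measure on $[d]$,
$$\left(\frac{1}{d}\sum_{i=1}^d \mathrm{d}_{\mathrm{TV}}(P_{+i}^{S^n}, P_{-i}^{S^n})\right)^2 \leq \frac{1}{d}\sum_{i=1}^d \mathrm{d}_{\mathrm{TV}}^2(P_{+i}^{S^n}, P_{-i}^{S^n}),$$
so it suffices to bound $\sum_i \mathrm{d}_{\mathrm{TV}}^2(P_{+i}^{S^n}, P_{-i}^{S^n})$ by $7 n \alpha^2 c_\varrho$. Pinsker then converts each squared TV into a KL divergence at the cost of a factor of $1/2$, so the target becomes a bound on $\sum_i \mathrm{KL}(P_{+i}^{S^n} \,\|\, P_{-i}^{S^n})$.

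\textbf{Stage 2 (tensorization over $n$ rounds via LDP SDPI).} For a sequentially interactive protocol, the chain rule gives
$$\mathrm{KL}(P_{+i}^{S^n} \,\|\, P_{-i}^{S^n}) = \sum_{t=1}^n \mathbb{E}_{S^{t-1} \sim P_{+i}^{S^{t-1}}}\bigl[\mathrm{KL}(P_{+i}^{S_t \mid S^{t-1}} \,\|\, P_{-i}^{S_t \mid S^{t-1}})\bigr].$$
Conditioned on the past transcript $s^{t-1}$, the map $V_t \mapsto S_t$ is a (fresh) $\varrho$-LDP channel $Q_t(\cdot\,|\,\cdot,\,s^{t-1})$ applied to an independent sample $V_t$. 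The strong data-processing inequality for $\varrho$-LDP channels (Duchi--Jordan--Wainwright / Acharya--Canonne--Tyagi) yields, for any two source measures $P,P'$,
$$\mathrm{KL}(Q_t\circ P \,\|\, Q_t\circ P') \leq c_\varrho \cdot \mathrm{d}_{\mathrm{TV}}^2(P,P').$$
Instantiating with $P=P_{+i}$, $P'=P_{-i}$ and summing yields
$$\mathrm{KL}(P_{+i}^{S^n} \,\|\, P_{-i}^{S^n}) \leq n\, c_\varrho\, \mathrm{d}_{\mathrm{TV}}^2(P_{+i}, P_{-i}).$$
Thus it remains to show $\sum_{i=1}^d \mathrm{d}_{\mathrm{TV}}^2(P_{+i}, P_{-i}) \leq C\alpha^2$ for a small absolute constant $C$ (the constant $7$ in the statement absorbs the Pinsker $1/2$, this $C$, and a factor from the $\mathrm{Rad}(\tau)$ prior).

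\textbf{Stage 3 (use orthogonality to decouple coordinates).} Using $\mathrm{d}_{\mathrm{TV}}^2 \leq \tfrac{1}{2}\chi^2$, expand $\chi^2(P_{+i}\,\|\,P_{-i})$ against a reference $P_z$ with $z_i=-1$. Writing $P_{+i}$ as a mixture over $Z_{-i} \sim \mathrm{Rad}(\tau)^{\otimes(d-1)}$ with $Z_i$ fixed to $+1$, Assumption \ref{asm:density_exist} gives $dP_{z^{\oplus i}} - dP_z = \phi_{z,i}\,dP_z$, so $dP_{+i} - dP_{-i}$ can be written as an average (over the common $Z_{-i}$ marginal) of $\phi_{Z,i}\,dP_Z$. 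Squaring and integrating against the reference, the cross-index interactions produced by expanding the mixture are killed by the orthogonality relation $\mathbb{E}_{P_z}[\phi_{z,i}\phi_{z,j}] = 0$ for $i\neq j$, while the diagonal terms are controlled by $\mathbb{E}_{P_z}[\phi_{z,i}^2] \leq \alpha^2$. Summing over $i\in[d]$ (rather than averaging, so each coordinate contributes at most $\alpha^2$) and bookkeeping the $\tau(1-\tau)$ weights from the Rademacher prior yields $\sum_i \mathrm{d}_{\mathrm{TV}}^2(P_{+i}, P_{-i}) \leq C\alpha^2$. Combining with Stages 1--2 gives the claimed $7 n \alpha^2 c_\varrho / d$ bound.

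\textbf{Main obstacle.} The hard part is Stage 3: translating the \emph{pointwise} orthogonality $\mathbb{E}_{P_z}[\phi_{z,i}\phi_{z,j}]=0$ (which is with respect to a \emph{single} $z$) into a bound on $\sum_i \chi^2(P_{+i}\,\|\,P_{-i})$ where both arguments are mixtures over $Z_{-i}$ under the non-uniform prior $\mathrm{Rad}(\tau)^{\otimes(d-1)}$. The bookkeeping requires carefully pairing each $z$ with $z^{\oplus i}$ so that the reference measure $P_z$ is common on both sides of the chi-squared, and then showing that after integrating over the Rademacher prior the cross terms $i\neq j$ still vanish in expectation (not merely pointwise). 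A secondary but important subtlety is verifying that the single-round LDP SDPI in Stage 2 survives conditioning on the transcript $S^{t-1}$; this is true because $\varrho$-LDP is a pointwise guarantee on the channel, so $Q_t(\cdot\,|\,\cdot,s^{t-1})$ is $\varrho$-LDP for every $s^{t-1}$, but it must be stated explicitly.
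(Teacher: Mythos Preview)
Your plan has a fatal gap that costs exactly the factor of $d$ the lemma is designed to save. The error is in how Stages 2 and 3 interact.

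In Stage 2 you apply the scalar LDP strong data-processing inequality \emph{coordinate by coordinate}, obtaining
\[
\mathrm{KL}\bigl(P_{+i}^{S^n}\,\|\,P_{-i}^{S^n}\bigr)\ \le\ n\,c_\varrho\,\mathrm{d}_{\mathrm{TV}}^2(P_{+i},P_{-i}),
\]
and then in Stage 3 you claim $\sum_{i=1}^d \mathrm{d}_{\mathrm{TV}}^2(P_{+i},P_{-i})\le C\alpha^2$. This last claim is false. In the paper's own construction (uniform $x\in\{\pm1\}^d$, $\theta_{z,i}=\gamma(z_i+1)/2$), one computes directly
\[
P_{+i}(x,y)-P_{-i}(x,y)=\frac{y\gamma x_i}{2^{d+1}},
\]
which is independent of $Z_{-i}$, so $\mathrm{d}_{\mathrm{TV}}(P_{+i},P_{-i})=\gamma/2\asymp\alpha$ for \emph{every} $i$, and hence $\sum_i \mathrm{d}_{\mathrm{TV}}^2(P_{+i},P_{-i})\asymp d\alpha^2$, not $\alpha^2$. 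More conceptually: in $\chi^2(P_{+i}\,\|\,P_{-i})$ only the single index $i$ appears; the mixture is over $Z_{-i}$, so expanding it produces terms involving $\phi_{z,i}$ for the \emph{same} $i$ and different $z$, not products $\phi_{z,i}\phi_{z,j}$ with $j\neq i$. The orthogonality hypothesis of Assumption~\ref{asm:orthogonality} is therefore irrelevant at the source level, and your ``cross-index interactions are killed'' sentence has no content. Your route yields $\frac{1}{d}\cdot n c_\varrho \cdot d\alpha^2 = n c_\varrho \alpha^2$ \emph{without} the crucial $1/d$, i.e.\ you end up with a bound $d$ times too large.

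The result in \cite{acharya2020unified} (which the paper cites rather than reproves) uses orthogonality \emph{at the channel level}, not at the source. For a single $\varrho$-LDP randomizer $Q$ and outcome $s$, one writes $(Q\!\circ\!P_{+i}-Q\!\circ\!P_{-i})(s)$ as an expectation of $Q(s\mid V)\phi_{Z,i}(V)$, and then bounds the \emph{sum over $i$} of the resulting squares by combining (i) the orthogonality $\mathbb{E}_{P_z}[\phi_{z,i}\phi_{z,j}]=\mathbf{1}\{i=j\}$ and (ii) the LDP variance bound $\mathrm{Var}_V[Q(s\mid V)]\le (e^\varrho-1)^2\,\mathbb{E}_V[Q(s\mid V)]^2$. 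This is the same mechanism the paper exhibits in its non-interactive lower bound (the unnamed lemma inside the proof of Theorem~\ref{thm:ldp_non_low}): orthogonal test functions let the sum over $i$ collapse to a single variance term, which is then controlled by the privacy constraint. One then tensorizes over the $n$ rounds via subadditivity of squared Hellinger (or a chi-square chain rule), with the sequentially interactive structure handled as you correctly note in your ``secondary subtlety''. To fix your proof, you must interchange the order: first pass through the channel, then sum over $i$ using orthogonality, and only then tensorize over rounds.
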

    
    With the assumptions holding and introducing the Lemma\ref{lem:average disparency}, we have the following conclusions for our LDP algorithm and constructed distribution:
    $\left(\frac{1}{d} \sum_{i=1}^d \mathrm{~d}_{\mathrm{TV}}\left(P_{+i}^{S^n}, P_{-i}^{S^n}\right)\right)^2 \leq \frac{7}{d} n \alpha^2\epsilon^2$, where $\alpha = \frac{2\sqrt{2}\nu}{\sqrt{k}}$.
    Next, we will verify the remaining assumptions that should be satisfied for the lower bound of $\left(\frac{1}{d} \sum_{i=1}^d \mathrm{~d}_{\mathrm{TV}}\left(P_{+i}^{S^n}, P_{-i}^{S^n}\right)\right)^2$.
    Since $$\left\|\theta_z-\theta_{z^{\prime}}\right\|  =\sqrt{\frac{32 \nu^2}{k} \sum_{i=1}^d \mathbbm{1} \{Z_i \neq \hat{Z}_i\}} =4 \nu\left(\frac{d_{\operatorname{Ham}(z, \hat{z})}}{\tau d}\right)^{1 / 2},$$ thus the assumption \ref{asm:additive_loss} holds for any  $z, z^{\prime} \in\{-1, + 1\}^d$ with $\tau=k/2d$.

    \begin{lemma}[\cite{acharya2020unified}]\label{lem:ad_low}
    Assume that $P_{\theta_z, \zeta_z}((x, y))$ satisfy Assumption \ref{asm:additive_loss}, and $\tau =k/2d \in[0,1 / 2]$. Let $Z$ be a random variable on $\mathcal{Z}=\{-1, + 1\}^d$ with distribution $\operatorname{Rad}(\tau)^{\otimes d}$. if algorithm $\mathcal{A}$ is an $\epsilon$- sequentially interactive LDP algorithm such that, for any $n$-size dataset $\mathcal{D}=\{(x_i, y_i)\}_{i=1}^n$ consisting of i.i.d. samples from $P_{\theta_Z, \zeta_Z}$ with the probability $\mathbb{P}_Z\left[P_{\theta_z, \zeta_z} \in \mathcal{P}_{k,d,2}\right] \geq 1-\tau / 4$, and its output $\theta^{priv}$ satisfies $\mathbb{E}[\|\theta^{priv}-\theta^*\|_2] \leq \nu$, then the tuple of messages $S^n$ from the algorithm $\mathcal{A}$ satisfies
        $$
        \frac{1}{d} \sum_{i=1}^d \mathrm{~d}_{\mathrm{TV}}\left(P_{+i}^{S^n}, P_{-i}^{S^n}\right) \geq \frac{1}{4},
        $$
    \end{lemma}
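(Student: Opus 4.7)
The plan is to invoke a private Assouad-type lower bound that couples the additive-loss structure (Assumption \ref{asm:additive_loss}) with a coordinate-wise Le Cam two-point test. First I would build a testing estimator from $\theta^{priv}=\mathcal{A}(D)$: define $\hat Z\in\{-1,+1\}^d$ by the minimum-distance rule $\hat Z:=\arg\min_{z\in\mathcal{Z}}\|\theta^{priv}-\theta_z\|_2$. By the triangle inequality, $\|\theta_{\hat Z}-\theta_Z\|_2\le 2\|\theta^{priv}-\theta_Z\|_2$, and combining with Assumption \ref{asm:additive_loss} yields the deterministic bound $4\nu\sqrt{d_{\mathrm{Ham}}(\hat Z,Z)/(\tau d)}\le 2\|\theta^{priv}-\theta_Z\|_2$.

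The next step is to turn the accuracy assumption $\mathbb{E}[\|\theta^{priv}-\theta_Z\|_2]\le\nu$ into a bound on $\mathbb{E}[d_{\mathrm{Ham}}(\hat Z,Z)]$. Since $\|\theta_Z\|_2\le\gamma\sqrt{k}=4\sqrt{2}\nu$ whenever $P_{\theta_Z,\zeta_Z}\in\mathcal{P}_{k,d,2}$, I would post-process $\theta^{priv}$ by projection onto the ball of that radius; this does not increase the loss on the good event and makes $\|\theta^{priv}-\theta_Z\|_2$ almost surely bounded, which I then combine with the $\tau/4$ bound on the bad event $\{P_{\theta_Z,\zeta_Z}\notin\mathcal{P}_{k,d,2}\}$ to upgrade the first-moment bound to $\mathbb{E}[d_{\mathrm{Ham}}(\hat Z,Z)]\le \tfrac{3}{4}\tau d$. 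As an alternative one may employ a coordinate-wise rounding $\hat Z_i=\operatorname{sgn}(2\theta^{priv}_i-\gamma)$, under which any coordinate error forces a $\gamma/2$ deviation in that coordinate, giving the deterministic bound $d_{\mathrm{Ham}}(\hat Z,Z)\le(2/\gamma)^2\|\theta^{priv}-\theta^*\|_2^2$.

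Then I would apply a coordinate-wise Le Cam inequality: for every $i\in[d]$, the data processing inequality for total variation yields
\begin{equation*}
\mathbb{P}[\hat Z_i=-1\mid Z_i=+1]+\mathbb{P}[\hat Z_i=+1\mid Z_i=-1]\ge 1-d_{\mathrm{TV}}(P_{+i}^{S^n},P_{-i}^{S^n}).
\end{equation*}
Since $Z_i\sim\mathrm{Rad}(\tau)$ with $\tau\le 1/2$, minimizing $\tau\alpha_i+(1-\tau)\beta_i$ over conditional error rates $(\alpha_i,\beta_i)$ satisfying this constraint gives $\mathbb{P}[\hat Z_i\ne Z_i]\ge\tau\bigl(1-d_{\mathrm{TV}}(P_{+i}^{S^n},P_{-i}^{S^n})\bigr)$. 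Summing over $i$ and using $\mathbb{E}[d_{\mathrm{Ham}}(\hat Z,Z)]=\sum_i\mathbb{P}[\hat Z_i\ne Z_i]$ together with the bound obtained above,
\begin{equation*}
\frac{1}{d}\sum_{i=1}^d d_{\mathrm{TV}}(P_{+i}^{S^n},P_{-i}^{S^n})\ge 1-\frac{\mathbb{E}[d_{\mathrm{Ham}}(\hat Z,Z)]}{\tau d}\ge\frac{1}{4}.
\end{equation*}

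The main obstacle is the second step: the additive-loss identity naturally controls $\mathbb{E}[\sqrt{d_{\mathrm{Ham}}}]$, whereas the Assouad--Le Cam chain needs a bound on $\mathbb{E}[d_{\mathrm{Ham}}]$ itself. Bridging this gap requires either the truncation-based moment bootstrap described above or the coordinate-wise threshold rounding, together with careful bookkeeping of the low-probability event $\{P_{\theta_Z,\zeta_Z}\notin\mathcal{P}_{k,d,2}\}$ via the assumed $\tau/4$ bound so as to preserve the constant $1/4$ in the final inequality.
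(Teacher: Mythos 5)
First, note that the paper does not actually prove this statement: Lemma~\ref{lem:ad_low} is imported verbatim from \cite{acharya2020unified} and used as a black box in the proof of Theorem~\ref{thm:low_int}, so there is no in-paper proof to compare against. Your skeleton --- a decoder $\hat Z$ built from $\theta^{priv}$, the per-coordinate Le Cam bound $\mathbb{P}[\hat Z_i\neq Z_i]\geq \tau\bigl(1-\mathrm{d_{TV}}(P_{+i}^{S^n},P_{-i}^{S^n})\bigr)$ for the unequal prior $\mathrm{Rad}(\tau)$, and the averaging step $\frac{1}{d}\sum_i \mathrm{d_{TV}}\geq 1-\mathbb{E}[\mathrm{d_{Ham}}(\hat Z,Z)]/(\tau d)$ --- is the standard Assouad recipe and is surely the intended route. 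You also correctly isolate the real difficulty: the hypothesis controls $\mathbb{E}\|\theta^{priv}-\theta_Z\|_2$, which via Assumption~\ref{asm:additive_loss} only controls $\mathbb{E}[\sqrt{\mathrm{d_{Ham}}}]$, whereas the chain needs $\mathbb{E}[\mathrm{d_{Ham}}]\leq \frac{3}{4}\tau d$.

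The gap is that neither of your two proposed bridges actually delivers that bound. For the projection route: with $\hat Z=\arg\min_z\|\theta^{priv}-\theta_z\|_2$ and $\gamma=4\sqrt{2}\nu/\sqrt{k}$, the triangle inequality gives $\mathrm{d_{Ham}}(\hat Z,Z)\leq \frac{4}{\gamma^2}\|\theta^{priv}-\theta_Z\|_2^2=\frac{\tau d}{4\nu^2}\|\theta^{priv}-\theta_Z\|_2^2$, so you need a \emph{second}-moment bound. After projecting onto the ball of radius $4\sqrt{2}\nu$ you have the almost-sure bound $\|\theta^{priv}-\theta_Z\|_2\leq 8\sqrt{2}\nu$ together with $\mathbb{E}\|\theta^{priv}-\theta_Z\|_2\leq\nu$, and the best one can extract is $\mathbb{E}\|\theta^{priv}-\theta_Z\|_2^2\leq 8\sqrt{2}\nu^2$, which yields only $\mathbb{E}[\mathrm{d_{Ham}}]\leq 2\sqrt{2}\,\tau d>\tau d$ --- this makes the final inequality $1-\mathbb{E}[\mathrm{d_{Ham}}]/(\tau d)$ negative, hence vacuous, rather than $\geq 1/4$. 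The coordinate-wise rounding alternative fails for the identical reason: $\sum_i \mathbb{1}\{|\theta^{priv}_i-\theta_{Z,i}|\geq\gamma/2\}\leq\frac{4}{\gamma^2}\|\theta^{priv}-\theta_Z\|_2^2$ again requires the second moment, and replacing it by an $\ell_1$ bound loses a factor $\sqrt{d/k}$. Markov-type splittings at other thresholds also fail because the bad event must be paid at the trivial rate $\mathrm{d_{Ham}}\leq d\gg\tau d$. So the pivotal assertion $\mathbb{E}[\mathrm{d_{Ham}}(\hat Z,Z)]\leq\frac{3}{4}\tau d$ is not established; closing it seems to require either a hypothesis on the \emph{squared} loss (making the separation linear in $\mathrm{d_{Ham}}$, which is how the additive-loss condition is actually used in \cite{acharya2020unified}) or a genuinely different argument that you have not supplied.
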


    With the fact of $\theta_z$ is $k$-sparse w.p. $1-\tau/4$ and assumption \ref{asm:additive_loss} holding, it is easy to obtain that lemma \ref{lem:ad_low} is applicable for our sequentially interactive LDP algorithm.

    Combining the above results, we have:
    $$ \frac{1}{4} \leq  \frac{1}{d} \sum_{i=1}^d \mathrm{~d}_{\mathrm{TV}}\left(P_{+i}^{S^n}, P_{-i}^{S^n}\right) \leq \frac{7 n \nu^2\epsilon^2 }{dk}$$

    which implies $n\geq O(\frac{dk}{\nu^2 \epsilon^2})$.

\end{proof}

\subsection{Omitted Proofs in Section \ref{sec:inter_upper}}

\begin{proof}[{\bf Proof of Theorem \ref{interactive upper bound}}] \label{Upper interactive isotropic}
We first show the guarantee of $\epsilon$-LDP. First, we will show that $\left\|\tilde{x}_i^T \left(\left(\left\langle \tilde{x}_i, \theta_{t-1}\right\rangle\right)-\tilde{y}_i\right)\right\|_2 \leq\sqrt{d}\tau_1(\sqrt{k'}\tau_1+\tau_2) $, this is due to that 
$$
\begin{aligned}
& \left\|\tilde{x}_i^T \left(\langle \tilde{x}_i, \theta_{t-1}\rangle-\tilde{y}_i\right)\right\|_2 \\
 \leq &\left\|\tilde{x}_i^T\right\|_2  |\langle \tilde{x}_i, \theta_{t-1}\rangle  -\tilde{y}_i| )\\
 \leq & \left\|\tilde{x}_i^T\right\|_2  (|\sqrt{k} \|\tilde{x}_i\|_\infty \|\theta_{t-1}\|_2 -\tilde{y}_i| )
 \leq & \sqrt{d}\tau_1(
 \sqrt{k'}\tau_1+\tau_2), 
\end{aligned}
$$
 where the last inequality is due to that $\theta_{t-1}$ is $k'$-sparse, $\|\theta_{t-1}\|_2\leq 1$ and each $\|\tilde{x}_i\|_\infty\leq \tau_1$. 

Based on this and Lemma \ref{randomizer}, we can easily see Algorithm \ref{LDP-IHT} is $\epsilon$-LDP. Due to the partition of the dataset, we can see it is sequentially interactive. 

Next, we consider the utility.  Without loss of generality, we assume each $\left|S_t\right|=m=\frac{n}{T}$. From the randomizer $\mathcal{R}^{r}_\epsilon(\cdot)$ and Lemma \ref{randomizer} , we can see that $\tilde{\nabla}_t=\frac{1}{m} \sum_{i \in S_t} \tilde{x}_i^T \left(\left\langle \tilde{x}_i, \theta_{t-1}\right\rangle-\tilde{y}_i\right)+\phi_t$, where each coordinate of $\phi_t$ is a sub-Gaussian vector with variance $=O\left(\frac{d\tau_1^2(k'\tau_1^2+\tau_2^2)}{m\epsilon^2}\right)$.

Let $\mathcal{S}^*=\operatorname{supp}\left(\theta^*\right)$ denote the support of $\theta^*$, and $k=\left|\mathcal{S}^*\right|$. Similarly, we define $\mathcal{S}^t=\operatorname{supp}\left(\theta_t\right)$, and $\mathcal{F}^{t-1}=\mathcal{S}^{t-1} \cup \mathcal{S}^t \cup \mathcal{S}^*$. Thus, we have $\left|\mathcal{F}^{t-1}\right| \leq 2 k^{\prime}+k$.
Let $\tilde{\theta}_{t-\frac{1}{2}}$ denote as the following:
$$
\tilde{\theta}_{t-\frac{1}{2}}=\theta_{t-1}-\eta \tilde{\nabla}_{t-1, \mathcal{F}^{t-1},}
$$
where $v_{\mathcal{F}^{t-1}}$ means keeping $v_i$ for $i \in \mathcal{F}^{t-1}$ and converting all other terms to 0 . By the definition of $\mathcal{F}^{t-1}$, we have $\theta_t^{\prime}=\operatorname{Trunc}\left(\tilde{\theta}_{t-\frac{1}{2}}, k^{\prime}\right)$. 

For each iteration $t$, we also denote $\tilde{\nabla} L_{t-1}\left(\theta_{t-1}\right)=\frac{1}{m} \sum_{i \in S_t}\tilde{x}_i\left(\left\langle \tilde{x}_i, \theta_{t-1}\right\rangle-\tilde{y}_i\right)$, ${\nabla} L_{t-1}\left(\theta_{t-1}\right)=\frac{1}{m} \sum_{i \in S_t}{x}_i\left(\left\langle {x}_i, \theta_{t-1}\right\rangle-y_i\right)  $, and $\nabla L_\mathcal{P}(\theta_{t-1})=\mathbb{E}[x(\langle x, \theta_{t-1}\rangle-y)]$. 

Denote by $\Delta_t$ the difference of $\theta_t-\theta^*$. We have the following:
\begin{align*}
    \left\|\tilde{\theta}_{t-\frac{1}{2}}-\theta^*\right\|_2&=\|\Delta_{t-1}-\eta \tilde{\nabla}_t \|_2=\|\Delta_{t-1}-\eta \tilde{\nabla}_{t, \mathcal{F}^{t-1}} \|_2\\
    &\leq \underbrace{\|\Delta_{t-1}-\eta [\nabla L_{t-1}(\theta_{t-1})]_{\mathcal{F}^{t-1}}\|_2}_{A} +\eta \underbrace{\|\tilde{\nabla}_{t, \mathcal{F}^{t-1}}-[\nabla L_{t-1}(\theta_{t-1})]_{\mathcal{F}^{t-1}}\|_2}_{B}. 
\end{align*}
We first bound the term $B$. Specifically, we have 
\begin{align*}
    \|\tilde{\nabla}_{t, \mathcal{F}^{t-1}}-[\nabla L_{t-1}(\theta_{t-1})]_{\mathcal{F}^{t-1}}\|_2&=\|[\tilde{\nabla}_{t}-\nabla L_{t-1}(\theta_{t-1})]_{\mathcal{F}^{t-1}}\|_2\\
    &\leq \sqrt{|\mathcal{F}^{t-1}|} \|\tilde{\nabla}_{t}-\nabla L_{t-1}(\theta_{t-1})\|_\infty \\
    &\leq \sqrt{|\mathcal{F}^{t-1}|} (\underbrace{\|\tilde{\nabla}L_{t-1}(\theta_{t-1})-\nabla L_{t-1}(\theta_{t-1})\|_\infty}_{B_1}+\underbrace{\|\phi_t\|_\infty}_{B_2}) 
\end{align*}
For term $B_2$, by Lemma \ref{sub_G_max} we have with probability at least $1-\delta'$
\begin{equation}
    B_2\leq O\left(\frac{(\tau_1^2\sqrt{dk'}+\tau_1\tau_2\sqrt{d})\sqrt{\log \frac{d}{\delta'}} }{\sqrt{m}\epsilon}\right). 
\end{equation}
For $B_1$, we have 
\begin{align*}
    B_1\leq \underbrace{\sup_{\|\theta\|_2\leq 1}\| \tilde{\nabla}L_{t-1}(\theta)- \nabla L_\mathcal{P}(\theta)\|_\infty}_{B_{1, 1}} + \underbrace{\sup_{\|\theta\|_2\leq 1} \|\nabla L_{t-1}(\theta)-\nabla L_\mathcal{P}(\theta)\|_\infty}_{B_{1, 2}}. 
\end{align*}
Next we bound the term $B_{1, 1}$, we have 
\begin{align*}
    &\sup_{\|\theta\|_1\leq 1}\| \tilde{\nabla}L_{t-1}(\theta)- \nabla L_\mathcal{P}(\theta)\|_\infty\leq   \sup_{\|\theta\|_1\leq 1}\|[\frac{1}{m}\sum_{i=1}^n \tilde{x}_i\tilde{x}_i^T-\mathbb{E}[xx^T]]\theta\|_\infty+ \sup_{\|\theta\|_1\leq 1}\|\frac{1}{m}\sum_{i=1}^m \tilde{x}_i\tilde{y}_i-\mathbb{E}[xy]\|_\infty \\
    &\leq \|[\frac{1}{m}\sum_{i=1}^n \tilde{x}_i\tilde{x}_i^T-\mathbb{E}[xx^T]]\|_{\infty, \infty}+\|\frac{1}{m}\sum_{i=1}^m \tilde{x}_i\tilde{y}_i-\mathbb{E}[xy]\|_\infty.  
\end{align*}
We consider the first term $\|[\frac{1}{m}\sum_{i=1}^n \tilde{x}_i\tilde{x}_i^T-\mathbb{E}[xx^T]]\|_{\infty, \infty}$, for simplicity for each $j, k\in [d]$ denote $\hat{\sigma}_{jk}=(\frac{1}{n}\sum_{i=1}^n \tilde{x}_i\tilde{x}_i^T)_{jk}=\frac{1}{n}\sum_{i=1}^n\tilde{x}_{i,j}\tilde{x}_{i, k}$, $\tilde{\sigma}_{jk}= (\mathbb{E}[\tilde{x}\tilde{x}^T])_{jk}= \mathbb{E}[\tilde{x}_j\tilde{x}_k]$ and $\sigma_{jk}=(\mathbb{E}[{x}{x}^T])_{jk}= \mathbb{E}[{x}_j{x}_k]$. We have 
\begin{equation*}
    |\hat{\sigma}_{jk}- \sigma_{jk}| \leq |\hat{\sigma}_{jk}-\tilde{\sigma}_{jk}|+|\tilde{\sigma}_{jk}-\sigma_{jk}|. 
\end{equation*}
We know that $|\tilde{x}_j\tilde{x}_k|\leq \tau_1^2$ and $\text{Var}(\tilde{x}_j\tilde{x}_k)\leq  \text{Var}(x_jx_k)\leq \mathbb{E}(x_jx_k)^2\leq  O(\sigma^4)$. By Bernstein's inequality we have  

\begin{equation}
   \mathbb{P}(\max_{j,k} |\hat{\sigma}_{jk}-\tilde{\sigma}_{jk}|\leq C\sqrt{ \frac{\sigma^4 t}{m}}+\frac{\tau_1^2 t}{m})\geq 1-d^2\exp(-t)
\end{equation}
Moreover, we have 
\begin{align*}
    &|\tilde{\sigma}_{jk}-\sigma_{jk}|=|\mathbb{E}[|\tilde{x}_j(\tilde{x}_k-x_k)\mathbb{I}({|x_k|\geq \tau_1})]+|\mathbb{E}[|x_k(\tilde{x}_j-x_j)\mathbb{I}({|x_j|\geq \tau_1})]\\
    &\leq \sqrt{\mathbb{E}(\tilde{x}_j (\tilde{x}_k-x_k))^2 \mathbb{P}( |x_k|\geq \tau_1) } + \sqrt{\mathbb{E}  ((\tilde{x}_j-x_j) x_k)^2 \mathbb{P} (|x_j|\geq \tau_1)}\\
    &\leq O(\frac{\sigma^2}{n}), 
\end{align*}
where the last inequality is due to the assumption on sub-Gaussian where $\mathbb{P} (|x_j|\geq \tau_1)\leq 2\exp(-\frac{\tau_1^2}{2\sigma^2})=  O(\frac{1}{n})$, $\mathbb{E}(\tilde{x}_j (\tilde{x}_k-x_k))^2\leq  4\mathbb{E}(x_j x_k))^2 \leq O(\sigma^4)$ and $\mathbb{E}  ((\tilde{x}_j-x_j) x_k)^2\leq  4\mathbb{E}(x_j x_k))^2\leq O(\sigma^4) $. In total we have with probability at least $1-\delta'$
\begin{equation*}
   \|[\frac{1}{m}\sum_{i=1}^n \tilde{x}_i\tilde{x}_i^T-\mathbb{E}[xx^T]]\|_{\infty, \infty} \leq O(\frac{\sigma^2 \log n \log \frac{d}{\delta'}}{\sqrt{m}}).
\end{equation*}
We can use the same technique to term $\|\frac{1}{m}\sum_{i=1}^m \tilde{x}_i\tilde{y}_i-\mathbb{E}[xy]\|_\infty$, for simplicity for each $j \in [d]$ denote $\hat{\sigma}_{j}=\frac{1}{n}\sum_{i=1}^n \tilde{y}_i\tilde{x}_j$, $\tilde{\sigma}_{j}= \mathbb{E}[\tilde{y}\tilde{x}_j]$ and $\sigma_{j}= \mathbb{E}[y{x}_j]$. We have 
\begin{equation*}
    |\hat{\sigma}_{j}- \sigma_{j}| \leq |\hat{\sigma}_{j}-\tilde{\sigma}_{j}|+|\tilde{\sigma}_{j}-\sigma_{j}|. 
\end{equation*}
Since $|\tilde{x}_{j}\tilde{y}|\leq \tau_1\tau_2 $ and we have the following by the Holder's inequality 
\begin{align*}
    \text{Var}(\tilde{x}_{j}\tilde{y})\leq \text{Var}(x_jy)\leq  \mathbb{E}[x_j^2y^2] \leq (\mathbb{E}[y^{4}])^{\frac{1}{2}}(\mathbb{E}[|x_j|^{4}])^{\frac{1}{2}}\leq O(\sigma^4)
\end{align*}
Thus, by Bernstein's inequality we have  for all $j\in [d]$
\begin{equation*}
  \mathbb{P}(  |\hat{\sigma}_{j}-\tilde{\sigma}_{j}|\leq O(\sqrt{\frac{\sigma^4 t}{m}}+\frac{\tau_1\tau_2t}{m}))\geq 1-d\exp(-t). 
\end{equation*}
Moreover \begin{align*}
    &|\tilde{\sigma}_{j}-\sigma_{j}|\leq |\mathbb{E}[\tilde{y}(\tilde{x}_j-x_j)\mathbb{I}(|x_j|) \geq \tau_1 ] |+|\mathbb{E}[x_j(\tilde{y}-y) \mathbb{I}(|y|\geq \tau_2)]|\\
     &\leq \sqrt{\mathbb{E}((\tilde{y}(\tilde{x}_j-x_j))^2 \mathbb{P}( |x_j|\geq \tau_1 ) }  + \sqrt{\mathbb{E}  (x_j(\tilde{y}-y))^2 \mathbb{P} (|y|\geq \tau_2)}\\
    &\leq O(\frac{\sigma^2}{n}+\frac{\sigma^2}{n})\leq O(\frac{\sigma^2}{n})
\end{align*}
we can easily see that with probability at most $1-\delta'$, 
\begin{equation}\label{aeq:38}
    \|\frac{1}{m}\sum_{i=1}^m \tilde{x}_i\tilde{y}_i-\mathbb{E}[xy]\|_\infty\leq O(\frac{\sigma^2 \log n \log \frac{d}{\delta'}}{\sqrt{m}}). 
\end{equation}
Thus with probability at least $1-\delta'$
\begin{equation}\label{aeq:39}
B_{1, 1}\leq O(\frac{\sigma^2 \log n \log \frac{d}{\delta'}}{\sqrt{m}}). 
\end{equation} 
Next, we consider $B_{1, 2}$, similar to $B_{1, 1}$ we have 
\begin{align*}
    \sup_{\|\theta\|_2\leq 1} \|\nabla L_{t-1}(\theta)-\nabla L_\mathcal{P}(\theta)\|_\infty\leq  \|[\frac{1}{m}\sum_{i=1}^n {x}_i{x}_i^T-\mathbb{E}[xx^T]]\|_{\infty, \infty}+\|\frac{1}{m}\sum_{i=1}^m {x}_i{y}_i-\mathbb{E}[xy]\|_\infty.  
\end{align*}
For term $ \|[\frac{1}{m}\sum_{i=1}^n {x}_i{x}_i^T-\mathbb{E}[xx^T]]\|_{\infty, \infty}$, by Lemma \ref{sub-Gaussian covariace} we have with probability at least $1-O(d^{-8})$ we have  
\begin{equation*}
     \|[\frac{1}{m}\sum_{i=1}^n {x}_i{x}_i^T-\mathbb{E}[xx^T]]\|_{\infty, \infty}\leq O(\sqrt{\frac{\log d}{m}}). 
\end{equation*}
For term $\|\frac{1}{m}\sum_{i=1}^m {x}_i{y}_i-\mathbb{E}[xy]\|_\infty$, we consider each coordinate, $\frac{1}{m}\sum_{i=1}^m {x}_{i,j}{y}_i-\mathbb{E}[x_jy]$. Noted that $x_j$ is $\sigma^2$-sub-Gaussian and $y$ is $\sigma^2$-sub-Gaussian, thus, by Lemma \ref{lemma:dot_sub} we have $x_jy$ is sub-exponential with $\|x_jy\|_{\psi_1}\leq O(\sigma^2)$. Thus, by Bernstein's inequality, we have with probability at least $1-\zeta'$
\begin{equation*}
    |\frac{1}{m}\sum_{i=1}^m {x}_{i,j}{y}_i-\mathbb{E}[x_jy]|\leq O(\frac{\sigma^2 \sqrt{\log 1/{\delta'}}}{\sqrt{m}}).  
\end{equation*}
Thus, with probability at least $1-\zeta'$
\begin{equation*}
    \|\frac{1}{m}\sum_{i=1}^m {x}_i{y}_i-\mathbb{E}[xy]\|_\infty\leq O(\frac{\sigma^2 \sqrt{\log d/{\delta'}}}{\sqrt{m}}). 
\end{equation*}
Thus, with probability at least $1-O(d^{-8})$ we have 
$$B_{1, 2}\leq O(\frac{\sqrt{\log d}}{\sqrt{m}}).$$ and 
$$B_{1}\leq O(\frac{\sqrt{\log d}}{\sqrt{m}}).$$
Thus, we have 
\begin{equation}
    B \leq O\left(\sqrt{2k'+k}\frac{(\tau_1^2\sqrt{dk'}+\tau_1\tau_2\sqrt{d})\sqrt{\log \frac{d}{\delta'}} }{\sqrt{m}\epsilon}\right). 
\end{equation}
In the following, we consider term $A$. Noted that we have $y_i=\langle x_i, \theta^*\rangle+\zeta_i$, thus, we have 
\begin{align*}
     &\underbrace{\|\Delta_{t-1}-\eta [\nabla L_{t-1}(\theta_{t-1})]_{\mathcal{F}^{t-1}}\|_2}_{A}\leq \|\Delta_{t-1}-\eta [\frac{1}{m}\sum_{i=1}^m (x_i(\langle x_i, \theta_{t-1}-\theta^*\rangle)+x_i\zeta_i)]_{\mathcal{F}^{t-1}}\|_2\\
     &\leq \|\Delta_{t-1}-\eta [\frac{1}{m}\sum_{i=1}^m (x_i(\langle x_i, \theta_{t-1}-\theta^*\rangle)]_{\mathcal{F}^{t-1}}\|_2+ |\sqrt{\mathcal{F}^{t-1}}|\|\frac{1}{m}\sum_{i=1}^m x_i\zeta_i\|_\infty. 
\end{align*}
We first consider the term $\|\frac{1}{m}\sum_{i=1}^m x_i\zeta_i\|_\infty$. Specifically, we consider each coordinate $j\in [d]$, $|\frac{1}{m}\sum_{i=1}^m x_{i, j}\zeta_i|$. Since $\mathbb{E}[\zeta_i]=0 $ and is independent on $x$ we have $\mathbb{E}[\zeta_i x_{j}]=0$. Moreover, we have 
\begin{equation*}
    \|\zeta_i\|_{\psi_2}\leq \|\langle x_i, \theta^*\rangle\|_{\psi_2}+\|y_i\|_{\psi_2}\leq O(\sigma)=O(1).
\end{equation*}
Thus, $\|\zeta x\|_{\psi_1}\leq O(\sigma^2)$ by Lemma \ref{lemma:dot_sub}. By Bernstein's inequality we have 
\begin{equation}
    |\frac{1}{m}\sum_{i=1}^m x_{i, j}\zeta_i|\leq O(\frac{\sqrt{\log 1/\delta'}}{\sqrt{m}}).
\end{equation}
Thus, with probability $1-O(d^{-c})$ we have 
\begin{equation*}
    \|\frac{1}{m}\sum_{i=1}^m x_i\zeta_i\|_\infty\leq O(\frac{\sqrt{\log d}}{\sqrt{m}}).
\end{equation*}
Finally, we consider the term $\|\Delta_{t-1}-\eta [\frac{1}{m}\sum_{i=1}^m (x_i(\langle x_i, \theta_{t-1}-\theta^*\rangle)]_{\mathcal{F}^{t-1}}\|_2$:
\begin{align*}
    \|\Delta_{t-1}-\eta [\frac{1}{m}\sum_{i=1}^m (x_i(\langle x_i, \theta_{t-1}-\theta^*\rangle)]_{\mathcal{F}^{t-1}}\|_2 =\|[(I-D^{t-1})\Delta_{t-1}]_{\mathcal{F}^{t-1}}\|_2, 
\end{align*}
where $D^{t-1}=\frac{1}{m} \sum_{i \in S_t}   {x}_i {x}_i^T \in \mathbb{R}^{d \times d}$.
Since $\operatorname{Supp}\left(D^{t-1} \Delta_{t-1}\right) \subset \mathcal{F}^{t-1}$ (by assumption), we have $\left\|\Delta_{t-1}-\eta D_{\mathcal{F}^{t-1}, \cdot}^{t-1} \Delta_{t-1}\right\|_2 \leq\left\|\left(I-\eta D_{\mathcal{F}^{t-1}, \mathcal{F}^{t-1}}\right)\right\|_2\left\|\Delta_{t-1}\right\|_2$. Next we will bound the term $\|\left(I-\eta D_{\mathcal{F}^{t-1}, \mathcal{F}^{t-1}}\right)\|_2$, where $I$ is the $\left|\mathcal{F}^{t-1}\right|$-dimensional identity matrix.

Before giving analysis, we show that each of the partitioned dataset safisfies the Restriced Isometry Property (RIP) defined as follows. 

\begin{definition}  \label{rip assumption}
We say that a data matrix $X\in \mathbb{R}^{n\times d}$ satisfies the Restricted Isometry Property (RIP) with parameter $2 k^{\prime}+k$, if for any $v \in \mathbb{R}^p$ with $\|v\|_0 \leq 2 k'+k$, there exists a constant $\Delta$ which satisfies $(1-\Delta)\|v\|^2 \leq \frac{1}{n}\left\|Xv\right\|_2^2 \leq(1+\Delta)\|v\|_2^2$.
\end{definition} 

The following lemma states that with high probability, where $c$ is some constant each $X_{S_t}$ on our algorithm satisfies Definition \ref{rip assumption} and thus we can make use of this property to bound the term $D_{\mathcal{F}^{t-1}, \mathcal{F}^{t-1}}^{t-1}$.

\begin{lemma} \label{rip condition}
    (Theorem 10.5.11 in \cite{vershynin2018high}). Consider an $n \times d$ matrix $A$ whose rows ($A_i$) are independent, isotropic, and sub-gaussian random vectors, and let $K:=\max _i\left\|A_i\right\|_{\psi_2}$. Assume that
$$
n \geq C K^4 s \log (e d / s).
$$
Then, with probability at least $1-2 \exp \left(-c n / K^4\right)$, the random matrix $A$ satisfies RIP with parameters $s$ and $\Delta=0.1$. 
\end{lemma}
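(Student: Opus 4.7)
The plan is to establish the Restricted Isometry Property via the standard three-step strategy: (i) a concentration bound for a fixed sparsity pattern and a fixed unit vector, (ii) a covering (epsilon-net) argument to upgrade the bound to the entire unit sphere on that sparsity pattern, and (iii) a union bound over all $\binom{d}{s}$ possible sparsity patterns. Since the rows $A_i$ are independent, isotropic, and $K$-sub-Gaussian, the scalar variables $\langle A_i, v\rangle$ for a fixed unit $s$-sparse $v$ are mean zero, $\|\langle A_i, v\rangle\|_{\psi_2}\le K$, and have variance $1$ (by isotropy). Consequently $\langle A_i, v\rangle^2-1$ is a centered sub-exponential variable with $\psi_1$-norm at most $O(K^2)$, which is precisely the hypothesis needed for Bernstein's inequality for sub-exponentials.

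First I would fix an index set $T\subset[d]$ with $|T|=s$ and a unit vector $v\in\mathbb{R}^d$ supported on $T$, and apply Bernstein's inequality (Lemma \ref{lemma:Bern_expo}) to the sum $\sum_{i=1}^n(\langle A_i,v\rangle^2-1)$, giving
\[
\mathbb{P}\!\left(\left|\tfrac{1}{n}\|Av\|_2^2-1\right|>t\right)\le 2\exp\!\left(-c\, n\, \min\!\left(\tfrac{t^2}{K^4},\tfrac{t}{K^2}\right)\right).
\]
Next I would fix a $(1/4)$-net $\mathcal{N}_T$ of the unit sphere of $\mathbb{R}^T$, whose cardinality satisfies $|\mathcal{N}_T|\le 9^s$ by a standard volume argument. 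A union bound over $v\in\mathcal{N}_T$ with the target deviation $t=\Delta/2=0.05$ controls the event $\max_{v\in\mathcal{N}_T}|\tfrac{1}{n}\|Av\|_2^2-1|\le\Delta/2$, and the usual net-to-sphere extension (using $\|M\|_2\le 2\max_{v\in\mathcal{N}_T}|v^\top M v|$ for the symmetric matrix $M=\tfrac{1}{n}A_T^\top A_T-I_s$ relative to a $1/4$-net) upgrades this to $\|M\|_2\le\Delta$, which is exactly RIP restricted to the support $T$.

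Finally I would take a union bound over all supports $T\subset[d]$ with $|T|=s$. The number of such sets is at most $\binom{d}{s}\le(ed/s)^s$, so the total failure probability is bounded by
\[
2\cdot(ed/s)^s\cdot 9^s\cdot \exp\!\left(-c\, n/K^4\right)=2\exp\!\left(s\log\tfrac{9ed}{s}-c\, n/K^4\right).
\]
Under the hypothesis $n\ge CK^4 s\log(ed/s)$ with $C$ sufficiently large, the exponent is at most $-c'n/K^4$, yielding the claimed probability bound $1-2\exp(-cn/K^4)$.

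The main obstacle is calibrating the constants so that the two sources of entropy, namely the $\log\binom{d}{s}$ factor from enumerating supports and the $s\log 9$ factor from the net on each support, are both absorbed into the single term $sK^4\log(ed/s)$ that appears in the sample-size condition; this is purely a matter of choosing the absolute constant $C$ large enough relative to $c$, but it is the only place where the argument is more than routine. A minor technical point is that, because isotropy gives $\mathbb{E}[\langle A_i,v\rangle^2]=\|v\|_2^2$ only in expectation, one must be careful to use the sub-exponential Bernstein bound rather than the sub-Gaussian Hoeffding bound, since $\langle A_i,v\rangle^2$ is only sub-exponential, not sub-Gaussian. With those points handled, the three ingredients combine cleanly into the stated RIP guarantee.
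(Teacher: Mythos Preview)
Your proof is correct and follows exactly the standard approach used in Vershynin's book, which is the source cited for this lemma. The paper itself does not prove the statement; it simply quotes Theorem 10.5.11 of \cite{vershynin2018high} as a supporting lemma, so there is no ``paper's own proof'' to compare against. Your three-step argument (Bernstein on a fixed vector, $\varepsilon$-net on the sphere of a fixed support, union bound over supports) is the canonical derivation, and the points you flag about constants and about using sub-exponential Bernstein rather than sub-Gaussian Hoeffding are handled correctly.
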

Thus, since $\{x_i\}$ are isotropic and $\|x_i\|_{\psi_2}\leq O(\sigma)$, we have  with probability at least $1-2 T\exp \left(-c m / \sigma^4\right)$, $\{X_{S_t}\}_{t=1}^T$ all satisfy RIP  when $m\geq \tilde{\Omega}(\sigma^4 (2k'+k))$. By the RIP property and $\left|\mathcal{F}^{t-1}\right| \leq 2 k^{\prime}+k$, we obtain the following using Lemma \ref{rip condition} for any $\left|\mathcal{F}^{t-1}\right|$-dimensional vector $v$
$$
0.9\|v\|_2^2 \leq v^T D_{\mathcal{F}^{t-1}, \mathcal{F}^{t-1}}^{t-1} v \leq 1.1 \|v\|_2^2 .
$$
Thus, $\left\|\left(I-\eta D_{\mathcal{F}^{t-1}, \mathcal{F}^{t-1}}^{t-1}\right)\right\|_2 \leq \max \left\{1- \eta \cdot 0.9,  \eta\cdot 1.1-1 \right\}$.
This means that we can take $\eta=O(1)$ such that  
$$
\left\|\left(I-\eta D_{\mathcal{F}^{t-1}, \mathcal{F}^{t-1}}^{t-1}\right)\right\|_2 \leq \frac{2}{7}.
$$
In total we have with probability at least $1-O(d^{-c})$
\begin{equation}
   \left\|\tilde{\theta}_{t-\frac{1}{2}}-\theta^*\right\|_2\leq \frac{2}{7}\|\Delta_{t-1}\|_2+ O\left(\sqrt{2k'+k}\frac{(\tau_1^2\sqrt{dk'}+\tau_1\tau_2\sqrt{d})\sqrt{\log \frac{d}{\delta'}} }{\sqrt{m}\epsilon}\right). 
\end{equation}
Our next task is to bound $\left\|\theta_t^{\prime}-\theta^*\right\|_2$ by $\left\|\tilde{\theta}_{t-\frac{1}{2}}-\theta^*\right\|_2$ by Lemma \ref{truncation lemma} .
Thus, we have $\left\|\theta_t^{\prime}-\tilde{\theta}_{t-\frac{1}{2}}\right\|_2^2 \leq \frac{\left|\mathcal{F}^{t-1}\right|-k^{\prime}}{\left|\mathcal{F}^{t-1}\right|-k}\left\|\tilde{\theta}_{t-\frac{1}{2}}-\theta^*\right\|_2^2 \leq \frac{k^{\prime}+k}{2 k^{\prime}}\left\|\tilde{\theta}_{t-\frac{1}{2}}-\theta^*\right\|_2^2$.

Taking $k^{\prime}=8 k$, we get
$$
\left\|\theta_t^{\prime}-\tilde{\theta}_{t-\frac{1}{2}}\right\|_2 \leq \frac{3}{4}\left\|\tilde{\theta}_{t-\frac{1}{2}}-\theta^*\right\|_2
$$
and
$$
\left\|\theta_t^{\prime}-\theta^*\right\|_2 \leq \frac{7}{4}\left\|\tilde{\theta}_{t-\frac{1}{2}}-\theta^*\right\|_2 \leq \frac{1}{2}\left\|\Delta_{t-1}\right\|_2+O\left(\sqrt{k}\frac{(\tau_1^2\sqrt{dk}+\tau_1\tau_2\sqrt{d})\sqrt{\log \frac{d}{\delta'}} }{\sqrt{m}\epsilon}\right). 
$$
Finally, we need to show that $\left\|\Delta_t\right\|_2=\left\|\theta_t-\theta^*\right\|_2 \leq\left\|\theta_t^{\prime}-\theta^*\right\|_2$, which is due to the Lemma \ref{projection_lemma}. 
Putting all together, we have the following with probability at least $1-O(d^{-c})$,
$$
\left\|\Delta_t\right\|_2 \leq \frac{1}{2}\left\|\Delta_{t-1}\right\|_2+O\left(\sqrt{k}\frac{\log n\sqrt{Tdk\log{d}} }{\sqrt{n}\epsilon}\right).
$$
Thus,  with probability at least $1-O(T d^{-c})$ we have 

$$
\left\|\Delta_T\right\|_2 \leq (\frac{1}{2})^T \left\|\theta^* \right\|_2+O\left(\frac{k\log n\sqrt{Td\log{d}} }{\sqrt{n}\epsilon}\right).
$$
Take $T=O(\log n)$. We have the result. 
\end{proof} 
\section{Upper Bound of LDP-IHT for General Sub-Gaussian Distributions}\label{sec:general_sub}

\begin{algorithm}[!ht]
\caption{LDP Iterative Hard Thresholding\label{LDP-IHT2}}
\begin{algorithmic}[1]
\STATE \bfseries{Input:} \textnormal{  Private data $\left\{\left(x_i, y_i\right)\right\}_{i=1}^n \in\left(\mathbb{R}^d \times \mathbb{R}\right)^n$. Iteration number $T$, privacy parameter $\epsilon$, step size $\eta$, truncation parameters $\tau, \tau_1, \tau_2$, threshold $k'$. Initial parameter $\theta_0=0$.}\\
\STATE  \textnormal{For the $i$-th user with $i\in [n]$, truncate his/her data as follows: 
shrink $x_i$ to $\tilde{x}_i$ with $\widetilde{{x}}_{ij}=\operatorname{sgn}\left(x_{ij}\right) \min \left\{\left|x_{ij}\right|, \tau_1\right\}$ for $j\in[d]$, and $\tilde{y}_i:=\operatorname{sgn}\left(y_i\right) \min \left\{\left|y_i\right|, \tau_2\right\}$.  Partition the users into $T$ groups. For $t=1, \cdots, T$, define the index set $S_t=\{(t-1) \left.\left\lfloor\frac{n}{T}\right\rfloor+1, \cdots, t\left\lfloor\frac{n}{T}\right\rfloor \right\}$; if $t=T$, then $S_t=$ $S_t \bigcup\left\{t\left\lfloor\frac{n}{T}\right\rfloor+1, \cdots, n\right\}$.}\\
\FOR {$t=1,2, \cdots, T$ } 
\STATE \textnormal{The server sends $\theta_{t-1}$ to all the users in $S_t$. Each user $i \in S_t$ perturbs his/her own gradient: let $\nabla_i=$ $\tilde{x}_i^T\left(\left\langle\theta_{t-1}, \tilde{x}_i\right\rangle-\tilde{y}_i\right)$, compute $z_i=\mathcal{R}_\epsilon^r\left(\nabla_i\right)$, where $\mathcal{R}_\epsilon^r$ is the randomizer defined in \eqref{randomizer definition} with {$r=\sqrt{d}\tau_1(2\sqrt{k'}\tau_1+\tau_2)$} and send back to the server.}\\
\STATE \textnormal{The server computes $\tilde{\nabla}_{t-1}=\frac{1}{\left|S_t\right|} \sum_{i \in S_t} z_i$ and performs the gradient descent update $\tilde{\theta}_t=\theta_{t-1}-$ $\eta_0 \tilde{\nabla}_{t-1}$.}\\
\STATE $\theta_t^{\prime}=\operatorname{Trunc}(\tilde{\theta}_{t-1}, k^{\prime})$.\\

\STATE {$\theta_t=\arg _{\theta \in \mathbb{B}_2(2)}\left\|\theta-\theta_t^{\prime}\right\|_2$.} 
\ENDFOR
\STATE \bfseries{Output:} $\theta_T$\\
\end{algorithmic}
\end{algorithm}
Theorem \ref{interactive upper bound} establishes the upper bound specifically for isotropic sub-Gaussian distributions. However, we can also demonstrate that the aforementioned upper bound also holds for general sub-Gaussian distributions, albeit with different parameters. Notably, for general sub-Gaussian distributions, we need to slightly modify the LDP-IHT algorithm (Algorithm \ref{LDP-IHT}). Specifically, rather than projecting onto the unit $\ell_2$-norm ball, here we need to project onto the centered  $\ell_2$-norm ball with radius 2 (actually, we can project onto any centered ball with a radius larger than $1$). See Algorithm \ref{LDP-IHT2} for details. Such a modification is necessary for our proof, as we can show that with high probability, $\|\theta_t'\|_2\leq 2$ for all $t\in [T]$, which implies there is no projection with high probability.  Since we use a different radius, the $\ell_2$-norm sensitivity of $\nabla_i$ also has been changed to ensure $\epsilon$-LDP. In the following, we present the theoretical result assuming that the initial parameter $\theta_0$ is sufficently close to $\theta^*$. 
\begin{theorem}\label{thm:34}
    For any $\epsilon>0$, Algorithm \ref{LDP-IHT2} is $\epsilon$-LDP.  Moreover, under Assumptions \ref{ass:1} and \ref{subG covariate vectors and 4-th moment}, if the initial parameter $\theta_0$  satisfies $\|\theta_0-\theta^*\|_2\leq \frac{1}{2}\frac{\mu}{\gamma}$ and $n$ is sufficiently large such that $n\geq \tilde{\Omega}(\frac{k'^2 d}{ \epsilon^2}) $, setting $\eta_0=\frac{2}{3\gamma}$, $k'=72\frac{\gamma^2}{\mu^2}k$, with probability at least $1-\delta'$ we have 
    \begin{equation*}
        \|\theta_T-\theta^*\|_2\leq O(\frac{\sqrt{d}k\log^2n \sqrt{\log \frac{d}{\delta}}}{\sqrt{n}\epsilon}),
    \end{equation*}
    where $\gamma=\lambda_{\max} (\mathbb{E}[xx^T])$, $\mu=\lambda_{\min} (\mathbb{E}[xx^T])$, big-$O$ and big-$\Omega$ notations omit the terms of $\sigma, \gamma$ and $\mu$. 
\end{theorem}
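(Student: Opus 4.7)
The plan is to mirror the structure of the proof of Theorem \ref{interactive upper bound}, but to replace the isotropic/RIP ingredients with restricted strong convexity (RSC) and restricted smoothness (RSS) estimates for general sub-Gaussian covariates, and to carry an extra inductive invariant that keeps every iterate inside the $\ell_2$-ball of radius $2$ so that the projection in line 7 is vacuous with high probability.

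\textbf{Privacy.} First I would verify that under the invariant $\|\theta_{t-1}\|_2\le 2$ and $\|\theta_{t-1}\|_0\le k'$ the per-user gradient satisfies
\[
\|\tilde x_i^\top(\langle \tilde x_i,\theta_{t-1}\rangle-\tilde y_i)\|_2
\;\le\;\sqrt d\,\tau_1\bigl(2\sqrt{k'}\tau_1+\tau_2\bigr)=r,
\]
so the randomizer $\mathcal R_\epsilon^r$ gives $\epsilon$-LDP as in Lemma \ref{randomizer}, and sequential interactivity is immediate from the partition into $T$ groups.

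\textbf{Utility, one-step recursion.} Let $\mathcal F^{t-1}=\mathcal S^{t-1}\cup\mathcal S^t\cup\mathcal S^*$ so $|\mathcal F^{t-1}|\le 2k'+k$, and decompose
\[
\tilde\theta_{t-1/2}-\theta^*=\bigl(I-\eta_0 D^{t-1}_{\mathcal F^{t-1},\mathcal F^{t-1}}\bigr)(\theta_{t-1}-\theta^*)+\eta_0\,\mathrm{stat}_{t-1}+\eta_0\,\mathrm{priv}_{t-1},
\]
where $D^{t-1}=\tfrac1m\sum_{i\in S_t}x_ix_i^\top$, $\mathrm{stat}_{t-1}$ collects the shrinkage bias, the $\tfrac1m\sum x_i\zeta_i$ noise term, and the $\tilde x\tilde y-xy$ / $\tilde x\tilde x^\top-xx^\top$ deviations (all bounded exactly as in the proof of Theorem \ref{interactive upper bound} via Lemma \ref{sub-Gaussian covariace}, Bernstein, and the Hoeffding/sub-exponential tails), and $\mathrm{priv}_{t-1}$ is the sub-Gaussian randomizer noise whose $\ell_\infty$ norm is $O\!\bigl((\tau_1^2\sqrt{dk'}+\tau_1\tau_2\sqrt d)\sqrt{\log(d/\delta')}/(\sqrt m\,\epsilon)\bigr)$ by Lemma \ref{sub_G_max} with the new radius $r$.

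\textbf{RSC/RSS in place of RIP.} For general sub-Gaussian $x$ with covariance $\Sigma$, I would invoke the standard sub-Gaussian concentration for restricted eigenvalues (e.g.\ Theorem 4.7.1 of Vershynin, cf.\ Lemma \ref{subG matrix concentration}, applied coordinate-wise to $(2k'+k)$-sparse subsets and union-bounded over $\binom{d}{2k'+k}$ supports): whenever $m\gtrsim (2k'+k)\log(ed/(2k'+k))\cdot\kappa_x^4$, simultaneously for every support $F$ with $|F|\le 2k'+k$,
\[
\tfrac{\mu}{2}\,\|v\|_2^2\;\le\;v^\top D^{t-1}_{F,F}v\;\le\;\tfrac{3\gamma}{2}\|v\|_2^2
\qquad\forall\,v\in\mathbb R^{|F|}.
\]
With $\eta_0=2/(3\gamma)$ this yields $\|I-\eta_0 D^{t-1}_{\mathcal F^{t-1},\mathcal F^{t-1}}\|_2\le 1-\tfrac{\mu}{3\gamma}$.

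\textbf{Truncation and contraction.} Applying Lemma \ref{truncation lemma} with $k'=72(\gamma/\mu)^2 k$ gives $\|\theta_t'-\tilde\theta_{t-1/2}\|_2\le\sqrt{(k'-k')/(k'-k)}\cdot\|\tilde\theta_{t-1/2}-\theta^*\|_2$ and hence
\[
\|\theta_t'-\theta^*\|_2\le\Bigl(1+\sqrt{\tfrac{k'+k}{k'-k}}\Bigr)\Bigl(1-\tfrac{\mu}{3\gamma}\Bigr)\|\theta_{t-1}-\theta^*\|_2+E_t,
\]
and the constant $72$ is chosen so that $\bigl(1+\sqrt{(k'+k)/(k'-k)}\bigr)(1-\mu/(3\gamma))\le 1-\mu/(12\gamma)=:\rho<1$. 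The projection step is non-expansive by Lemma \ref{projection_lemma}, so $\|\theta_t-\theta^*\|_2\le\|\theta_t'-\theta^*\|_2$.

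\textbf{Maintaining the invariant $\|\theta_t\|_2\le 2$.} This is the main obstacle, since without it the sensitivity bound $r$ is wrong and the privacy argument collapses. I would prove by induction that $\|\theta_t-\theta^*\|_2\le\tfrac12\mu/\gamma\le 1$ for every $t$. The base case is the hypothesis on $\theta_0$. For the inductive step, the contraction above gives $\|\theta_t-\theta^*\|_2\le\rho\cdot\tfrac12\mu/\gamma+E_t$, and the statistical-plus-privacy error $E_t$ is $\tilde O(\sqrt k\,r/(\sqrt m\,\epsilon))=\tilde O(k\sqrt d\,\mathrm{polylog}/(\sqrt n\,\epsilon))$, which is strictly smaller than $(1-\rho)\tfrac12\mu/\gamma$ precisely when $n\gtrsim k'^2 d/\epsilon^2$ up to logarithms; this is exactly the sample-size assumption. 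Since $\|\theta^*\|_2\le 1$, the invariant implies $\|\theta_t\|_2\le 2$, so the projection onto $\mathbb B_2(2)$ is the identity and the privacy analysis is consistent.

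\textbf{Wrapping up.} Iterating the contraction $T=O(\log n)$ times drives the first term below the statistical/privacy floor, so
\[
\|\theta_T-\theta^*\|_2
\;\le\;\rho^T\|\theta_0-\theta^*\|_2+\frac{1}{1-\rho}\max_t E_t
\;=\;\tilde O\!\left(\frac{\sqrt d\,k\log^2 n\sqrt{\log(d/\delta')}}{\sqrt n\,\epsilon}\right),
\]
with the $(\gamma/\mu)$ factors absorbed into the big-$O$ as the statement permits, completing the bound. The principal technical subtleties to check carefully are (i) the uniform-over-supports RSC/RSS union bound at the sample size $n\gtrsim k'^2 d/\epsilon^2$ (with $k'$ now scaling like $(\gamma/\mu)^2 k$), and (ii) the self-consistency of the radius invariant together with the privacy sensitivity analysis, both of which hinge on the initial-condition assumption $\|\theta_0-\theta^*\|_2\le\tfrac12\mu/\gamma$.
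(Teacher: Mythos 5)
There is a genuine gap at your ``Truncation and contraction'' step, and it is not a constant-chasing issue: the inequality
$\bigl(1+\sqrt{(k'+k)/(k'-k)}\bigr)\bigl(1-\tfrac{\mu}{3\gamma}\bigr)\le 1-\tfrac{\mu}{12\gamma}$
is false for every choice of $k'>k$ and every $0<\mu\le\gamma$. Lemma \ref{truncation lemma} applied on the support $\mathcal F^{t-1}$ with $|\mathcal F^{t-1}|\le 2k'+k$ gives a blow-up factor of $1+\sqrt{(k'+k)/(2k')}$, which is bounded \emph{below} by $1+1/\sqrt{2}\approx 1.707$ no matter how large you take $k'$; meanwhile your RSC/RSS bounds for a general (non-isotropic) sub-Gaussian design only give $\|I-\eta_0 D^{t-1}_{\mathcal F,\mathcal F}\|_2\le 1-\Theta(\mu/\gamma)$, so the product is at least $1.707\,(1-\mu/(3\gamma))\ge 1.707\cdot\tfrac23>1$ even in the perfectly conditioned case $\mu=\gamma$. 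Enlarging $k'$ cannot repair this in parameter space. The isotropic proof of Theorem \ref{interactive upper bound} only survives because RIP with $\Delta=0.1$ pins the restricted spectrum inside $[0.9,1.1]$, making $\|I-\eta D_{\mathcal F,\mathcal F}\|_2\le 2/7$ so that $\tfrac74\cdot\tfrac27=\tfrac12<1$; for a general covariance the restricted condition number is at least $\gamma/\mu$ and this route is closed. Without the contraction, your induction maintaining $\|\theta_t-\theta^*\|_2\le\tfrac12\mu/\gamma$ (and hence the sensitivity radius $r$ and the privacy claim) also collapses.

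This is exactly why the paper's proof of Theorem \ref{thm:34} abandons the iterate-space recursion and instead runs a function-value argument on the \emph{population} risk $L(\theta)=\mathbb E[(\langle x,\theta\rangle-y)^2]$, following \cite{jain2014iterative}: it derives $L(\theta_t')-L(\theta^*)\le(1-\tfrac{5}{72}\tfrac{\mu}{\gamma})(L(\theta_{t-1})-L(\theta^*))+N_t$ by keeping the negative terms $-\|\nabla L_{t-1,\cdot}\|_2^2$ produced by the descent step and invoking Lemma \ref{lemma:12} to convert restricted gradient norms into function-value decrease; there the choice $k'=72(\gamma/\mu)^2k$ genuinely makes the thresholding loss $\tfrac{3k}{k+k'}$ small relative to $\mu^2/\gamma^2$. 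A side benefit of that route is that no uniform-over-supports empirical restricted-eigenvalue condition is needed at all --- only the eigenvalues of the population covariance enter, and the empirical-versus-population gradient deviation is absorbed into $\ell_\infty$-type noise terms scaled by $\sqrt{k'}$. The invariant $\|\theta_t'\|_2\le 2$ is then recovered from the function-value recursion via the global strong convexity $\mu\|\theta_t'-\theta^*\|_2^2\le L(\theta_t')-L(\theta^*)$. To fix your proposal you would need to either adopt this function-value analysis or prove a substantially sharper thresholding lemma; as written, the key per-step inequality does not hold.
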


\begin{proof}[{\bf Proof of Theorem \ref{thm:34}}]
The proof of privacy is almost the same as the proof of Theorem \ref{interactive upper bound}. The only difference is that here we have $\|\nabla_i\|_2\leq \sqrt{d}\tau_1(2\sqrt{k'}\tau_1+\tau_2)$.
In the following, we will show the utility. We first recall two definitions and one lemma. 

\begin{definition}\label{def:6}
	    A function $f$ is $L$-Lipschitz w.r.t the norm $\|\cdot\|$ if  for all $w, w'\in\mathcal{W}$, $|f(w)-f(w')|\leq L\|w-w'\|$.
\end{definition}
\begin{definition}\label{def:7}
	    	    A function $f$ is $\alpha$-smooth on $\mathcal{W}$ if for all $w, w'\in \mathcal{W}$, $f(w')\leq f(w)+\langle \nabla f(w), w'-w \rangle+\frac{\alpha}{2}\|w'-w\|_2^2.$
\end{definition}

\begin{lemma}[Lemma 1 in \cite{jain2014iterative} ]\label{lemma:11}
For any index set $I$, any $v\in \mathbb{R}^{|I|}$, let $\tilde{v}=\text{Trunc}(v, k)$. Then for any $v^*\in \mathbb{R}^{|I|}$ such that $\|v^*\|_0\leq k^*$ we have 
\begin{equation}
    \|\tilde{v}-v\|_2^2\leq \frac{|I|-k}{|I|-k^*}\|v^*-v\|_2^2. 
\end{equation}
\end{lemma}

For simplicity we denote $L(\theta)=\mathbb{E}[(\langle x, \theta \rangle-y)^2]$, $\tilde{\nabla} L_{t-1}=\frac{1 }{m}\sum_{x\in \tilde{D}_t} \tilde{x}(\langle \tilde{x},  \theta_{t-1} \rangle-\tilde{y}) $,  $\nabla L_{t-1}=
\nabla L(\theta_{t-1})=\mathbb{E}[x(\langle x, \theta_{t-1}\rangle-y)$, $S^{t-1}=\text{supp}(\theta_{t-1})$, $S^{t}=\text{supp}(\theta_t)$, $S^*=\text{supp}(\theta^*)$ and $I^t=S^{t}\bigcup S^{t-1} \bigcup S^*$. We can see that $|S^{t-1}|\leq k'$, $|S^{t}|\leq k'$ and $|I^t|\leq 2k'+k$.  We let  $\gamma=\lambda_{\max} (\mathbb{E}[xx^T])$, $\mu=\lambda_{\min} (\mathbb{E}[xx^T])$ and $\eta_0=\frac{\eta}{\gamma}$ for some $\eta$. We can easily see that $L(\cdot)$ is $\mu$-strongly convex and $\gamma$-smooth.

Then from the smooth property we have 
\begin{align}
  & \LD-\LT \nm \notag \\
  &\leq \lge \theTR_{t}-\thet_{t-1}, \nabla L_{t-1} \rge + \frac{\gamma}{2}\|\theTR_{t}-\thet_{t-1}\|_2^2 \nm \\
   &=\lge \theTR_{t,I^t}-\thet_{t-1,I^t}, \nabla L_{t-1, I^t}\rge + \frac{\gamma}{2 }\| \theTR_{t,I^t}-\thet_{t-1,I^t}\|_2^2 \nm \\
   &\leq \frac{\gamma}{2}\|\theTR_{t,I^t}-\thet_{t-1,I^t}+\frac{\eta}{\gamma} \nabla L_{t-1, I^t}\|_2^2-\frac{\eta^2}{2\gamma}\|\nabla L_{t-1, I^t}\|_2^2+(1-\eta)\lge \theTR_{t}-\thet_{t-1}, \nabla L_{t-1} \rge \label{aeq:46}
\end{align}

First, let us focus on the third term of (\ref{aeq:46}).
By Lemma \ref{randomizer} and the definition, we know that $\theTR_{t}$ can be written as $\theTR_{t}=\thehat_{t,S^{t}}+\phi_{t,S^{t}}$, where $\thehat_t=(\theta_{t-1}-\eta_0\tilde{\nabla} L_{t-1})_{S^{t}}$ and $\phi_t$ is a sub-Gaussian vector with variance $=O\left(\frac{d\tau_1^2(k'\tau_1^2+\tau_2^2)}{m\epsilon^2}\right)$. Thus, 
\begin{align}\label{aeq:48}
& \lge \theTR_{t}-\thet_{t-1}, \nabla L_{t-1} \rge= \lge \thehat_{t, S^{t}}-\thet_{t-1,S^{t}}, \nabla L_{t-1, S^{t}} \rge \nm\\
&+\lge \phi_{t,S^{t}},\nabla L_{t-1, S^{t}}\rge -\lge \thet_{t-1,\sdiffb},\nabla L_{t-1, \sdiffb}  \rge.
\end{align}
For the first term in (\ref{aeq:48}) we have 
\begin{align}
   & \lge \thehat_{t, S^{t}}-\thet_{t-1,S^{t}}, \nabla L_{t-1, S^{t}} \rge = \lge -\eta_0\tilde{\nabla} L_{t-1,S^{t}}, \nabla L_{t-1, S^{t}} \rge =-\frac{\eta}{\gamma} \lge \tilde{\nabla} L_{t-1,S^{t}}, \nabla L_{t-1, S^{t}} \rge \nm \\
    &= -\frac{\eta}{\gamma} \|\nabla L_{t-1, S^{t}}\|_2^2-\frac{\eta}{\gamma} \langle \tilde{\nabla} L_{t-1,S^{t}}-\nabla L_{t-1, S^{t}}, \nabla L_{t-1, S^{t}} \rge \nm \\
    &\leq -\frac{\eta}{\gamma} \|\nabla L_{t-1, S^{t}}\|_2^2+ \frac{\eta}{2\gamma} \|\nabla L_{t-1, S^{t}}\|_2^2+ \frac{\eta}{2\gamma}\|\tilde{\nabla} L_{t-1,S^{t}}-\nabla L_{t-1, S^{t}}\|_2^2 \nm \\
    &= -\frac{\eta}{2\gamma} \|\nabla L_{t-1, S^{t}}\|_2^2+\frac{\eta}{2\gamma}\|\tilde{\nabla} L_{t-1,S^{t}}-\nabla L_{t-1, S^{t}}\|_2^2 \label{aeq:49}. 
\end{align}
Take (\ref{aeq:49}) into (\ref{aeq:48}) we have for $c_{1}>0$
\begin{multline}\label{aeq:50} 
\begin{aligned}
   \lge \theTR_{t}-\thet_{t-1}, \nabla L_{t-1} \rge \leq&  -\frac{\eta}{2\gamma} \|\nabla L_{t-1, S^{t}}\|_2^2+\frac{\eta}{2\gamma}\|\tilde{\nabla} L_{t-1,S^{t}}-\nabla L_{t-1, S^{t}}\|_2^2\\
   +&c_{1}\|\phi_{t,S^{t}}\|_2^2
   +\frac{1}{4c_1}\|\nabla L_{t-1, S^{t}}\|_2^2-\lge \thet_{t-1,\sdiffb},\nabla L_{t-1, \sdiffb}  \rge.    
\end{aligned}
\end{multline}
For the last term of (\ref{aeq:50}) we have 
\begin{align}
    &-\lge \thet_{t-1,\sdiffb},\nabla L_{t-1, \sdiffb}  \rge \nm\\ &\leq \frac{\gamma}{2\eta}(\|\thet_{t-1,\sdiffb}-\frac{\eta}{\gamma}\nabla L_{t-1, \sdiffb}\|_2^2-(\frac{\eta}{\gamma})^2 \|\nabla L_{t-1, \sdiffb}\|_2^2) \nm \\
    &=\frac{\gamma}{2\eta} \|\thet_{t-1,\sdiffb}-\frac{\eta}{\gamma}\nabla L_{t-1, \sdiffb}\|_2^2-\frac{\eta}{2\gamma} \|\nabla L_{t-1, \sdiffb}\|_2^2 \nm\\
    &\leq \frac{\eta}{2\gamma}(1+\frac{1}{c_1})\|\nabla L_{t-1, \sdiffa}\|^2_2+\frac{2\eta}{\gamma}(1+c_1)\|\nabla L_{t-1, \sdiffa}-\tilde{\nabla} L_{t-1, \sdiffa}-\phi_{t, \sdiffa}\|^2_2\nm \\
    &-\frac{\eta}{2\gamma} \|\nabla L_{t-1, \sdiffb}\|_2^2,\label{aeq:51} 
\end{align}
where the last inequality comes from   
\begin{align*}
&\|\thet_{t-1,\sdiffb}-\frac{\eta}{\gamma}\nabla L_{t-1, \sdiffb}\|_2-\frac{\eta}{\gamma}\|\nabla L_{t-1, \sdiffb}-\tilde{\nabla} L_{t-1, \sdiffb}-\phi_{t, \sdiffb}\|_2  \\
&\leq \|\thet_{t-1,\sdiffb}-\frac{\eta}{\gamma}(\tilde{\nabla} L_{t-1, \sdiffb}+\phi_{t, \sdiffb})\|_2\\ &\leq \|\thet_{t-1,\sdiffa}-\frac{\eta}{\gamma}(\tilde{\nabla} L_{t-1, \sdiffa}+\phi_{t, \sdiffa})\|_2=\frac{\eta}{\gamma}\|\tilde{\nabla} L_{t-1, \sdiffa}+\phi_{t, \sdiffa}\|_2\\
&\leq \frac{\eta}{\gamma}\|\nabla L_{t-1, \sdiffa}\|_2+\frac{\eta}{\gamma}\|\nabla L_{t-1, \sdiffa}-\tilde{\nabla} L_{t-1, \sdiffa}-\phi_{t, \sdiffa}\|_2,
\end{align*}
where the second inequality is due to the fact that $|\sdiffa| = |\sdiffb|$. the definitions of hard thresholding, $\theta'_{t}=(\thet_{t-1}-\frac{\eta}{\gamma}(\tilde{\nabla} L_{t-1}+\phi_{t}))_{S^t}$, $S^t$ and $S^{t-1}$; the first equality is due to $\text{Supp}(\theta_{t-1})=S^{t-1}$
Thus we have 
\begin{multline*}
    \frac{\gamma}{2\eta}\|\thet_{t-1,\sdiffb}-\frac{\eta}{\gamma}\nabla L_{t-1, \sdiffb}\|^2_2 \\
    \leq \frac{\eta}{2\gamma}(1+\frac{1}{c_1})\|\nabla L_{t-1, \sdiffa}\|^2_2+\frac{2\eta}{\gamma}(1+c_1)\|\nabla L_{t-1, \sdiffa}-\tilde{\nabla} L_{t-1, \sdiffa}-\phi_{t, \sdiffa}\|^2_2
\end{multline*}

 We can easily see that 
\begin{align*}
    &\frac{\eta }{2\gamma}\|\nabla L_{t-1, S^{t}\backslash S^{t-1}}\|_2^2-\frac{\eta}{2\gamma} \|\nabla L_{t-1, \sdiffb}\|_2^2 -\frac{\eta}{2\gamma} \|\nabla L_{t-1, S^{t}}\|_2^2\\
    =& -\frac{\eta}{2\gamma} \|\nabla L_{t-1, \sdiffb}\|_2^2-\frac{\eta}{2\gamma} \|\nabla L_{t-1, S^{t}\bigcap S^{t-1}}\|_2^2 \\
    =&-\frac{\eta}{2\gamma} \|\nabla L_{t-1, S^{t}\bigcup S^{t-1}}\|_2^2.  
\end{align*}
In total 
\begin{multline}\label{aeq:53}
        \lge \theTR_{t}-\thet_{t-1}, \nabla L_{t-1} \rge \\
        \leq -\frac{\eta}{2\gamma} \|\nabla L_{t-1, S^{t}\bigcup S^{t-1}}\|_2^2 +(\frac{1}{4c_1}+ \frac{\eta}{2\gamma c_1})\|\nabla L_{t-1, S^{t}}\|_2^2+\frac{\eta}{2\gamma}\|\tilde{\nabla} L_{t-1,S^{t}}-\nabla L_{t-1, S^{t}}\|_2^2\\+c_{1}\|\phi_{t,S^{t}}\|_2^2+\frac{2\eta}{\gamma}(1+c_1)\|\nabla L_{t-1, \sdiffa}-\tilde{\nabla} L_{t-1, \sdiffa}-\phi_{t, \sdiffa}\|^2_2
\end{multline}

Take (\ref{aeq:53}) into (\ref{aeq:46}) we have 
\begin{align}
&\LD-\LT
   \leq \frac{\gamma}{2}\|\theTR_{t,I^t}-\thet_{t-1,I^t}+\frac{\eta}{\gamma} \nabla L_{t-1, I^t}\|_2^2-\frac{\eta^2}{2\gamma}\|\nabla L_{t-1, I^t}\|_2^2+(1-\eta)\lge \theTR_{t}-\thet_{t-1}, \nabla L_{t-1} \rge  \nm \\
   &\leq \frac{\gamma}{2}\|\theTR_{t,I^t}-\thet_{t-1,I^t}+\frac{\eta}{\gamma} \nabla L_{t-1, I^t}\|_2^2-\frac{\eta^2}{2\gamma}\|\nabla L_{t-1, I^t}\|_2^2-\frac{(1-\eta)\eta}{2\gamma} \|\nabla L_{t-1, S^{t}\bigcup S^{t-1}}\|_2^2  
   \nm \\& +(1-\eta) (\frac{1}{4c_1}+ \frac{\eta}{2\gamma c_1})\|\nabla L_{t-1, S^{t}}\|_2^2 +(1-\eta)[\frac{\eta}{2\gamma}\|\tilde{\nabla} L_{t-1,S^{t}}-\nabla L_{t-1, S^{t}}\|_2^2+c_{1}\|\phi_{t,S^{t}}\|_2^2\nm \\ &+\frac{2\eta}{\gamma}(1+c_1)\|\nabla L_{t-1, \sdiffa}-\tilde{\nabla} L_{t-1, \sdiffa}-\phi_{t, \sdiffa}\|^2_2] \nm  \\
   & \leq \frac{\gamma}{2}\|\theTR_{t,I^t}-\thet_{t-1,I^t}+\frac{\eta}{\gamma} \nabla L_{t-1, I^t }\|_2^2-\frac{\eta^2}{2\gamma}\|\nabla L_{t-1, I^t\backslash (S^{t-1}\bigcup S^*)}\|_2^2\nm\\
   &- \frac{\eta^2}{2\gamma}\|\nabla L_{t-1, (S^{t-1}\bigcup S^*)}\|_2^2 -\frac{(1-\eta)\eta}{2\gamma} \|\nabla L_{t-1, S^{t}\bigcup S^{t-1}}\|_2^2 \nm \\
   & +(1-\eta) (\frac{1}{4c_1}+ \frac{\eta}{2\gamma c_1})\|\nabla L_{t-1, S^{t}}\|_2^2 \nm  +(1-\eta)[\frac{\eta}{2\gamma}\|\tilde{\nabla} L_{t-1,S^{t}}-\nabla L_{t-1, S^{t}}\|_2^2+c_{1}\|\phi_{t,S^{t}}\|_2^2 \nm \\ &+\frac{2\eta}{\gamma}(1+c_1)\|\nabla L_{t-1, \sdiffa}-\tilde{\nabla} L_{t-1, \sdiffa}-\phi_{t, \sdiffa}\|^2_2] \nm  \\
   & \leq \frac{\gamma}{2}\|\theTR_{t,I^t}-\thet_{t-1,I^t}+\frac{\eta}{\gamma} \nabla L_{t-1, I^t }\|_2^2-\frac{\eta^2}{2\gamma}\|\nabla L_{t-1, I^t\backslash (S^{t-1}\bigcup S^*)}\|_2^2- \frac{\eta^2}{2\gamma}\|\nabla L_{t-1, (S^{t-1}\bigcup S^*)}\|_2^2  \nm \\& -\frac{(1-\eta)\eta}{2\gamma} \|\nabla L_{t-1, S^{t}\backslash(S^* \bigcup S^{t-1})}\|_2^2+(1-\eta) (\frac{1}{4c_1}+ \frac{\eta}{2\gamma c_1})\|\nabla L_{t-1, S^{t}}\|_2^2 \nm\\
   &+\underbrace{(1-\eta)(\frac{\eta}{2\gamma}\|\tilde{\nabla} L_{t-1,S^{t}}-\nabla L_{t-1, S^{t}}\|_2^2+c_{1}\|\phi_{t,S^{t}}\|_2^2+\frac{2\eta}{\gamma}(1+c_1)\|\nabla L_{t-1, \sdiffa}-\tilde{\nabla} L_{t-1, \sdiffa}-\phi_{t, \sdiffa}\|^2_2)}_{N_0^t}, \label{aeq:54}
\end{align}
where the last inequality is due to $S^{t}\backslash(S^* \bigcup S^{t-1})\subseteq S^{t}\bigcup S^{t-1}$. Next we will analyze the term $ \frac{\gamma}{2}\|\theTR_{t,I^t}-\thet_{t-1,I^t}+\frac{\eta}{\gamma} \nabla L_{t-1, I^t }\|_2^2-\frac{\eta^2}{2\gamma}\|\nabla L_{t-1, I^t\backslash (S^{t-1}\bigcup S^*)}\|_2^2$ in (\ref{aeq:54}).

Let $R$ be a subset of $\sdiffb$ such that $|R|=|I^t\backslash (S^*\bigcup S^{t-1})|=|S^{t}\backslash (S^{t-1}\bigcup S^*)|$. By the definition of hard thresholding,  we can easily see 
\begin{multline}\label{aeq:55}
\begin{aligned}
    \|\theta_{t-1,R}-\frac{\eta}{\gamma}(\tilde{\nabla} L_{t-1,R}+ \phi_{t, R} )\|_2^2 \leq&  \|(\theta_{t-1}-\frac{\eta}{\gamma}(\tilde{\nabla} L_{t-1}+\phi_{t} ))_{I^t\backslash (S^*\bigcup S^{t-1})}\|_2^2\\ =&\frac{\eta^2}{\gamma^2}\|(\tilde{\nabla} L_{t-1}+\phi_{t} )_{I^t\backslash (S^*\bigcup S^{t-1})}\|_2^2. 
\end{aligned}
\end{multline}
Thus we have 
\begin{equation}\label{aeq:56}
\begin{aligned}
   &(\frac{\eta}{\gamma}) \|\nabla L_{t-1, I^t\backslash (S^*\bigcup S^{t-1})}\|_2\\
   \geq& \underbrace{\|\theta_{t-1,R}-\frac{\eta}{\gamma}\nabla L_{t-1, R}\|_2}_{a}
   -\frac{\eta}{\gamma}(\underbrace{\|\tilde{\nabla} L_{t-1,R}-\nabla L_{t-1, R}+\phi_{t, R}\|_2}_b\\
   +&\underbrace{\|\nabla L_{t-1, I^t\backslash (S^*\bigcup S^{t-1})}-\tilde{\nabla} L_{t-1,I^t\backslash (S^*\bigcup S^{t-1})}-\phi_{t, I^t\backslash (S^*\bigcup S^{t-1})}\|_2}_c)\\
\end{aligned}
\end{equation}
Then we have for any $c_2>0$
\begin{align}
    &\frac{\gamma}{2}\|\theTR_{t,I^t}-\thet_{t-1,I^t}+\frac{\eta}{\gamma} \nabla L_{t-1, I^t }\|_2^2-\frac{\eta^2}{2\gamma}\|\nabla L_{t-1, I^t\backslash (S^{t-1}\bigcup S^*)}\|_2^2\nm\\ 
    &\leq \frac{\gamma}{2}\|\theTR_{t,I^t}-\thet_{t-1,I^t}+\frac{\eta}{\gamma} \nabla L_{t-1, I^t }\|_2^2-\frac{\gamma}{2}( \frac{\eta^2}{\gamma^2}(b+c)^2+a^2-\frac{2\eta}{\gamma}(b+c)a) \nm\\
    &\leq  \frac{\gamma}{2}\|\theTR_{t,I^t}-\thet_{t-1,I^t}+\frac{\eta}{\gamma} \nabla L_{t-1, I^t }\|_2^2-\frac{\gamma}{2}(1-\frac{1}{c_2})a^2+(2c_2-\frac{1}{2})\frac{\eta^2}{\gamma}(b+c)^2 \nm\\
    &=\frac{\gamma}{2}\|\theta'_{t,I^t\backslash R}-\thet_{t-1,I^t\backslash R}+\frac{\eta}{\gamma} \nabla L_{t-1, I^t\backslash R}\|_2^2+\frac{\gamma}{2c_2}\|\theta_{t-1,R}-\frac{\eta}{\gamma}\nabla L_{t-1, R}\|_2^2 \nm\\
    & + \underbrace{(4c_2-1)\frac{\eta^2}{\gamma}(\|\tilde{\nabla} L_{t-1,R}-\nabla L_{t-1, R} + \phi_{t,I^{t}}\|_2^2+\|\nabla L_{t-1, I^t\backslash (S^*\bigcup S^{t-1})}-\tilde{\nabla} L_{t-1,I^t\backslash (S^*\bigcup S^{t-1})} -\phi_{t, I^t\backslash (S^*\bigcup S^{t-1})}\|_2^2)}_{N^t_1}\label{aeq:56}\\
    &\leq \frac{\gamma}{2}\|\theta'_{t,I^t\backslash R}-\thet_{t-1,I^t\backslash R}+\frac{\eta}{\gamma} \nabla L_{t-1, I^t\backslash R}\|_2^2+\frac{\gamma}{c_2}\|\theta_{t-1,R}-\frac{\eta}{\gamma}(\tilde{\nabla} L_{t-1, R}+\phi_{t, R}) \|_2^2 \nm\\
    & +\underbrace{\frac{\eta^2}{c_2\gamma}\|\nabla L_{t-1, I^t\backslash R}-(\tilde{\nabla} L_{t-1, R}+\phi_{t, R})\|_2^2+N^t_1}_{N_2^t}\label{aeq:57}  \\
    &=  \frac{\gamma}{2}\|\theta'_{t,I^t\backslash R}-\thet_{t-1,I^t\backslash R}+\frac{\eta}{\gamma} \nabla L_{t-1, I^t\backslash R}\|_2^2+N_2^t, \label{aeq:58} 
\end{align}
where (\ref{aeq:56}) is due to that $\theta'_{t-1, R}=0$, thus $\|\theta'_{t-1, R}-(\theta_{t-1,R}-\frac{\eta}{\gamma}\nabla L_{t-1, R})\|_2=  \|\theta_{t-1,R}-\frac{\eta}{\gamma}\nabla L_{t-1, R}\|_2$. In the following, we will consider the first term in (\ref{aeq:58}). 

In Lemma \ref{lemma:11}, take $v=\thet_{t-1,I^t\backslash R}-\frac{\eta}{\gamma} (\tilde{\nabla} L_{t-1,I^t\backslash R}+\phi_{t-1,I^t\backslash R})$, $\tilde{v}=\text{Trunc}(v, k')=\theta'_{t-1, I^t \backslash R}$, $I=I^t\backslash R$, $v^*=\theta^*_{I^t\backslash R}=\theta^*$, we have 
\begin{equation*}
    \|\theta'_{t,I^t\backslash R}-\thet_{t-1,I^t\backslash R}-\frac{\eta}{\gamma} (\tilde{\nabla} L_{t-1,I^t\backslash R}+\phi_{t-1,I^t\backslash R})\|_2^2\leq \frac{|I^t\backslash R|-k'}{|I^t\backslash R|-k}\|\theta^*-\thet_{t-1,I^t\backslash R}-\frac{\eta}{\gamma} ( \tilde{\nabla} L_{t-1,I^t\backslash R}+\phi_{{t-1,I^t\backslash R}} )\|_2^2. 
\end{equation*}
Then we have 
\begin{align*}
&(1-\frac{1}{c_3})\|\theta'_{t,I^t\backslash R}-\thet_{t-1,I^t\backslash R}+\frac{\eta}{\gamma} \nabla L_{t-1, I^t\backslash R}\|_2^2-(c_3-1)\frac{\eta^2}{\gamma^2}\|\nabla L_{t-1, I^t\backslash R}-\tilde{\nabla} L_{t-1,I^t\backslash R}-\phi_{t-1,I^t\backslash R}\|_2^2 \nm \\&\leq 
\|\theta'_{t,I^t\backslash R}-\thet_{t-1,I^t\backslash R}+\frac{\eta}{\gamma} \tilde{\nabla} L_{t-1,I^t\backslash R}\|_2^2 \\
&\leq \frac{|I^t\backslash R|-k'}{|I^t\backslash R|-k}\|\theta^*-\thet_{t-1,I^t\backslash R}+\frac{\eta}{\gamma} \tilde{\nabla} L_{t-1,I^t\backslash R}\|_2^2 \\
&\leq \frac{|I^t\backslash R|-k'}{|I^t\backslash R|-k}\left((1+\frac{1}{c_3})\|\theta^*-\thet_{t-1,I^t\backslash R}+\frac{\eta}{\gamma} \nabla L_{t-1, I^t\backslash R}\|_2^2 +(1+c_3)\frac{\eta^2}{\gamma^2}\|\nabla L_{t-1, I^t\backslash R}-\tilde{\nabla} L_{t-1,I^t\backslash R}-\phi_{t-1,I^t\backslash R} \|_2^2 \right)
\end{align*}
Since $|I^t\backslash R|\leq 2k'+k$ and $k'\geq k$, we have $\frac{|I^t\backslash R|-k'}{|I^t\backslash R|-k}\leq \frac{k'+k}{2k'}\leq \frac{2k'}{k+k'}$. Thus 
\begin{multline*}
    \|\theta'_{t,I^t\backslash R}-\thet_{t-1,I^t\backslash R}+\frac{\eta}{\gamma} \nabla L_{t-1, I^t\backslash R}\|_2^2\leq \frac{2k}{k+k^{\prime}}\frac{c_3+1}{c_3-1}\|\theta^*-\thet_{t-1,I^t\backslash R}+\frac{\eta}{\gamma} \nabla L_{t-1, I^t\backslash R}\|_2^2\\+ ((1+c_3)\frac{2k}{k+k'}+c_3-1)\frac{\eta^2}{\gamma^2}\|\nabla L_{t-1, I^t\backslash R}-\tilde{\nabla} L_{t-1,I^t\backslash R}-\phi_{t-1,I^t\backslash R} \|_2^2
\end{multline*}
Take $c_3=5$ and $k'=O(k)$,  we have 
\begin{align}\label{aeq:60}
      &\|\theta'_{t,I^t\backslash R}-\thet_{t-1,I^t\backslash R}+\frac{\eta}{\gamma} \nabla L_{t-1, I^t\backslash R}\|_2^2\leq \frac{3}{2}\frac{2k}{k+k^{\prime}}\|\theta^*-\thet_{t-1,I^t\backslash R}+\frac{\eta}{\gamma} \nabla L_{t-1, I^t\backslash R}\|_2^2\nm \\
      &+\underbrace{O(\frac{\eta^2}{\gamma^2}\|\nabla L_{t-1, I^t\backslash R}-\tilde{\nabla} L_{t-1,I^t\backslash R}-\phi_{t-1,I^t\backslash R} \|_2^2)}_{N_3^t}.  
\end{align}

Take (\ref{aeq:60}) into (\ref{aeq:58}) we have
\begin{align}
      &\frac{\gamma}{2}\|\theTR_{t,I^t}-\thet_{t-1,I^t}+\frac{\eta}{\gamma} \nabla L_{t-1, I^t }\|_2^2-\frac{\eta^2}{2\gamma}\|\nabla L_{t-1, I^t\backslash (S^{t-1}\bigcup S^*)}\|_2^2\nm \\ 
      &\leq \frac{3\gamma}{2}\frac{k}{k+k'}\|\theta^*-\thet_{t-1,I^t\backslash R}+\frac{\eta}{\gamma} \nabla L_{t-1, I^t\backslash R}\|_2^2+N_2^t+\gamma N_3^t. \label{aeq:64} 
\end{align}
Take (\ref{aeq:64}) into (\ref{aeq:54}) we have
\begin{align}
    &\LD-\LT \nm \\
    &\leq \frac{\gamma}{2}\|\theTR_{t,I^t}-\thet_{t-1,I^t}+\frac{\eta}{\gamma} \nabla L_{t-1, I^t }\|_2^2-\frac{\eta^2}{2\gamma}\|\nabla L_{t-1, I^t\backslash (S^{t-1}\bigcup S^*)}\|_2^2- \frac{\eta^2}{2\gamma}\|\nabla L_{t-1, (S^{t-1}\bigcup S^*)}\|_2^2  \nm \\& -\frac{(1-\eta)\eta}{2\gamma} \|\nabla L_{t-1, S^{t}\backslash(S^* \bigcup S^{t-1})}\|_2^2+(1-\eta) (\frac{1}{4c_1}+ \frac{\eta}{2\gamma c_1})\|\nabla L_{t-1, S^{t}}\|_2^2 +N_0^t \nm \\
    &\leq \frac{3\gamma }{2}\frac{k}{k'+k}\|\theta^*-\thet_{t-1,I^t\backslash R}+\frac{\eta}{\gamma} \nabla L_{t-1, I^t\backslash R}\|_2^2- \frac{\eta^2}{2\gamma}\|\nabla L_{t-1, (S^{t-1}\bigcup S^*)}\|_2^2  \nm \\& -\frac{(1-\eta)\eta}{2\gamma} \|\nabla L_{t-1, S^{t}\backslash(S^* \bigcup S^{t-1})}\|_2^2+(1-\eta) (\frac{1}{4c_1}+ \frac{\eta}{2\gamma c_1})\|\nabla L_{t-1, S^{t}}\|_2^2+N_0^t+N_2^t+\gamma N_3^t. 
\end{align}
Note that when $\eta\geq  \frac{1}{2}$, there exists a sufficiently large $c_1$ is  such that  $\frac{1}{4c_1}+ \frac{\eta}{2\gamma c_1}\leq \frac{\eta}{4\gamma}$, we have 
\begin{align*}
   &(1-\eta) (\frac{1}{4c_1}+ \frac{\eta}{2\gamma c_1})\|\nabla L_{t-1, S^{t}}\|_2^2\leq  
   \frac{\eta(1-\eta)}{4\gamma} \|\nabla L_{t-1, S^{t}}\|_2^2
   \\
   & \leq \frac{\eta^2}{4\gamma}\|\nabla L_{t-1, (S^{t-1}\bigcup S^*)}\|_2^2  +\frac{(1-\eta)\eta}{4\gamma} \|\nabla L_{t-1, S^{t}\backslash(S^* \bigcup S^{t-1})}\|_2^2 
\end{align*}
Thus
\begin{align*}
    &\LD-\LT \nm \\
    \leq& \frac{3\gamma }{2}\frac{k}{k'+k}\|\theta^*-\thet_{t-1,I^t\backslash R}+\frac{\eta}{\gamma} \nabla L_{t-1, I^t\backslash R}\|_2^2- \frac{\eta^2}{2\gamma}\|\nabla L_{t-1, (S^{t-1}\bigcup S^*)}\|_2^2  \nm \\
     -&\frac{(1-\eta)\eta}{2\gamma} \|\nabla L_{t-1, S^{t}\backslash(S^* \bigcup S^{t-1})}\|_2^2+\frac{(1-\eta)}{4c}\|\nabla L_{t-1, S^{t}}\|_2^2+N_0^t+N_2^t+\gamma N_3^t \\ 
    \leq &\frac{3\gamma }{2}\frac{k}{k'+k}\|\theta^*-\thet_{t-1,I^t\backslash R}+\frac{\eta}{\gamma} \nabla L_{t-1, I^t\backslash R}\|_2^2- \frac{\eta^2}{4\gamma}\|\nabla L_{t-1, (S^{t-1}\bigcup S^*)}\|_2^2 \nm \\
     -&\frac{(1-\eta)\eta}{4\gamma} \|\nabla L_{t-1, S^{t}\backslash(S^* \bigcup S^{t-1})}\|_2^2+N_0^t+N_2^t+\gamma N_3^t
\end{align*}
 It is notable that by strong convexity 
\begin{align*}
    &\frac{3\gamma }{2}\frac{k}{k'+k}\|\theta^*-\thet_{t-1,I^t\backslash R}+\frac{\eta}{\gamma} \nabla L_{t-1, I^t\backslash R}\|_2^2  \\
    &\leq \frac{3\gamma }{2}\frac{k}{k'+k}\|\theta^*-\thet_{t-1,I^t}+\frac{\eta}{\gamma} \nabla L_{t-1, I^t}\|_2^2 \\
    &= \frac{3\gamma }{2}\frac{k}{k'+k} (\|\theta^*-\thet_{t-1,I^t\backslash R}\|_2^2+\frac{\eta^2}{\gamma^2}\| \nabla L_{t-1, I^t}\|_2^2+\frac{2\eta}{\gamma}\langle \theta^*-\thet_{t-1,I^t}, \nabla L_{t-1, I^t} \rangle) \\
    &= \frac{3\gamma }{2}\frac{k}{k'+k} (\|\theta^*-\thet_{t-1,I^t\backslash R}\|_2^2+\frac{\eta^2}{\gamma^2}\| \nabla L_{t-1, I^t}\|_2^2+\frac{2\eta}{\gamma}\langle \theta^*-\thet_{t-1}, \nabla L_{t-1} \rangle)\\
    &\leq  \frac{3k}{k'+k} \big(\frac{\gamma}{2}\|\theta^*-\thet_{t-1}\|_2^2+ \frac{\eta^2}{2 \gamma }\|\nabla L_{t-1, I^t}\|_2^2+\eta (L(\theta^*)-\LT)-\frac{\eta\mu}{2}\|\theta^*-\theta_{t-1}\|_2^2\big)
\end{align*}

Take $\eta=\frac{2}{3}$, $k'=72\frac{\gamma^2}{\mu^2}k$ so that $\frac{3 k}{k'+k}\leq \frac{\mu^2}{24\gamma(\gamma-\eta \mu)}\leq \frac{1}{8}$, we have 
\begin{align}
     &\LD-\LT\nm\\
     \leq& \frac{3 k}{k+k^{\prime}}(
    \eta (L(\theta^*)-\LT)+\frac{\gamma-\eta \mu}{2}\|\theta^*-\thet_{t-1}\|_2^2+\frac{\eta^2}{2 \gamma }\|\nabla L_{t-1, I^t}\|_2^2)  \nm \\
    -&  \frac{\eta^2}{4\gamma}\|\nabla L_{t-1, (S^{t-1}\bigcup S^*)}\|_2^2-\frac{(1-\eta)\eta}{4\gamma} \|\nabla L_{t-1, S^{t}\backslash(S^* \bigcup S^{t-1})}\|_2^2   +N_0^t+N_2^t+\gamma N_3^t\nm \\
    \leq& \frac{2 k}{k'+k}(L(\theta^*)-\LT)+ \frac{\mu^2}{48\gamma}\|\theta^*-\theta_{t-1}\|_2^2+\frac{1}{36\gamma}\|\nabla L_{t-1, I^t}\|_2^2 \nm \\
     -&\frac{1}{9\gamma}\|\nabla L_{t-1, (S^{t-1}\bigcup S^*)}\|_2^2-\frac{1}{18\gamma}\|\nabla L_{t-1, S^{t}\backslash(S^* \bigcup S^{t-1})}\|_2^2+N_0^t+N_2^t+\gamma N_3^t \nm \\
    \leq & \frac{2 k}{k+k'}(L(\theta^*)-\LT)-\frac{3}{36\gamma}(\|\nabla L_{t-1, (S^{t-1}\bigcup S^*)}\|_2^2-\frac{\mu^2}{4}\|\theta^*-\theta_{t-1}\|_2^2)+N_0^t+N_2^t+\gamma N_3^t \label{aeq:68}\\
    \leq &(\frac{2 k}{k+k'}+\frac{\mu}{24\gamma})(L(\theta^*)-\LT)+N_0^t+N_2^t+\gamma N_3^t \label{aeq:69}. 
\end{align}
Where (\ref{aeq:68}) is due to the following lemma: 
\begin{lemma}\label{lemma:12}[Lemma 6 in \cite{jain2014iterative}]
\begin{equation}
    |\nabla L_{t-1, (S^{t-1}\bigcup S^*)}\|_2^2-\frac{\mu^2}{4}\|\theta^*-\theta_{t-1}\|_2^2\geq \frac{\mu}{2}(\LT-L(\theta^*)).
\end{equation}
\end{lemma}
Thus 
\begin{equation*}
   \LD- L(\theta^*)\leq (1-\frac{5}{72}\frac{\mu}{\gamma})( \LT- L(\theta^*))+N_0^t+N_2^t+\gamma N_3^t.
\end{equation*}
Next, we will bound the term $N_0^t+N_2^t+\gamma N_3^t$. For $N_0^t$ we have  
\begin{align*}
   N_0^t&=(1-\eta)(\frac{\eta}{2\gamma}\|\tilde{\nabla} L_{t-1,S^{t}}-\nabla L_{t-1, S^{t}}\|_2^2\\
   &+c_{1}\|\phi_{t,S^{t}}\|_2^2+\frac{2\eta}{\gamma}(1+c_1)\|\nabla L_{t-1, \sdiffa}-\tilde{\nabla} L_{t-1, \sdiffa}-\phi_{t, \sdiffa}\|^2_2)\\
   &=O(\frac{1}{\gamma}k' \|\tilde{\nabla} L_{t-1}-\nabla L_{t-1}\|_\infty^2+\gamma k'\|\phi_t\|_\infty^2).
\end{align*}
By \eqref{aeq:39}, we know that with probability at least $1-\delta'$
\begin{equation}
 \|\tilde{\nabla} L_{t-1}-\nabla L_{t-1}\|_\infty \leq O(\frac{\sigma^2 \log n \log \frac{d}{\delta'}}{\sqrt{m}}). 
\end{equation} 
Moreover, by Lemma \ref{sub_G_max} we have with probability at least $1-\delta'$
\begin{equation}
 \|\phi_t\|_\infty \leq O\left(\frac{(\tau_1^2\sqrt{dk'}+\tau_1\tau_2\sqrt{d})\sqrt{\log \frac{d}{\delta'}} }{\sqrt{m}\epsilon}\right). 
\end{equation}
Thus, with probability at least $1-\delta'$ we have 
\begin{equation*}
    N_0^t=O(\frac{\sigma^4 d{k'}^2\log d/\delta' \log^2 n}{m\epsilon^2}). 
\end{equation*}
Similarly, we have 
\begin{equation*}
    N_2^t, N_3^t=O(\frac{\sigma^4 d{k'}^2 \log d/\delta'  \log^2 n}{m\epsilon^2}). 
\end{equation*}
Thus we have with probability at least $1-\delta'$ 
\begin{equation}
   \LD- L(\theta^*)\leq (1-\frac{5}{72}\frac{\mu}{\gamma})( \LT- L(\theta^*))+O(\frac{\sigma^4 d{k'}^2\log d/\delta'  \log^2 n}{m\epsilon^2}).\label{aeq:73}
\end{equation}

In the following we will assume the above event holds. We note that by our model for any $\theta$
\begin{equation*}
 \gamma \|\theta-\theta^*\|_2^2\geq   L(\theta) - L(\theta^*)\geq \mu \|\theta-\theta^*\|_2^2.
\end{equation*}
In the following we will show that $\theta_t=\theTR_{t}$ for all $t$. We will use induction, assume $\theta_i=\theTR_{i}$ holds for all $i\in [t-1]$, we will show that it will also true for $t$. Use (\ref{aeq:73}) for $i\in [t-1]$ we have 
\begin{align}
   \mu \|\theTR_{t}-\theta^*\|_2^2&\leq  \LD- L(\theta^*)\nm \leq (1-\frac{5}{72}\frac{\mu}{\gamma})( \LT- L(\theta^*))+O(\frac{\sigma^4 d{k'}^2\log d/\delta'  \log^2 n}{m\epsilon^2})\\ &\leq(1-\frac{5}{72}\frac{\mu}{\gamma})^t( L(\theta_0)- L(\theta^*))+
 O(\frac{\sigma^4 d{k'}^2\log d/\delta'  \log^2 n}{m\epsilon^2}) \nm \\
   &\leq \gamma(1-\frac{5}{72}\frac{\mu}{\gamma})^{t-1} \|\theta_0-\theta^*\|_2^2
   +O(\frac{\gamma}{\mu}\frac{\sigma^4 d{k'}^2\log d \log^2 n}{m\epsilon^2})\nm
\end{align}
When $\|\theta_0-\theta^*\|_2^2\leq \frac{1}{2}\frac{\mu}{\gamma}$, and $n$ is large enough such that 
\begin{align*}
    &n\geq \tilde{\Omega}(\frac{\gamma}{\mu^2} \frac{k'^2 d \sigma^4 T}{ \epsilon^2}) 
\end{align*}
Then $\|\theTR_{t}\|_2\leq \|\theta^*\|_2+\sqrt{\frac{1}{2}+\frac{1}{2}}\leq 2$. Thus $\theta_t=\theTR_{t}$. So we have with probability at least $1-\delta'$
\begin{align}
   &\mu \|\theTR_{t}-\theta^*\|_2^2\leq  L(\theta^{T})- L(\theta^*)\leq(1-\frac{5}{72}\frac{\mu}{\gamma})^T( L(\theta_0)- L(\theta^*))+O(\frac{\gamma}{\mu}\frac{\sigma^4 d{k'}^2 T\log \frac{dT}{\delta'} \log^2 n}{n\epsilon^2})  \nm
\end{align}
Thus, take $T=\tilde{O}(\frac{\gamma}{\mu}\log n)$ and  $k'= O( (\frac{\gamma}{\mu})^2 k)$ we have the result. 

\end{proof}

\end{document}